\pdfoutput=1
\documentclass[11pt]{article}
\usepackage[letterpaper, left=1.2truein, right=1.2truein, top = 1.2truein,bottom = 1.2truein]{geometry}
\usepackage{filecontents}
\RequirePackage{amsmath, amsfonts, amssymb, amsthm, bm, graphicx, mathtools, enumerate,multirow}
\RequirePackage[numbers, sort]{natbib}
\RequirePackage[colorlinks,citecolor=blue,urlcolor=blue]{hyperref}
\RequirePackage{graphicx}%
\usepackage{mathtools}
\usepackage{enumitem}
\usepackage[ruled, vlined, lined, commentsnumbered,linesnumbered]{algorithm2e}
\usepackage{bbm}
\usepackage{xr}
\usepackage{natbib}
\usepackage{titling}

\newtheorem{theorem}{Theorem}
\newtheorem{lemma}{Lemma}

\newtheorem{corollary}{Corollary}
\theoremstyle{remark}

\newtheorem{assumption}{Assumption}

\newcommand{\supp}[1]{{Section #1 of the Supplementary Material}}
\newcommand{\Rnum}[1]{\uppercase\expandafter{\romannumeral #1\relax}}

\newcommand{\proposed}[1]{{\textsc{ProjBalance}#1}}
\newcommand{\vl}[1]{{\textsc{VL}#1}}
\newcommand{\ql}[1]{{\textsc{SAVE}#1}}
\newcommand{\revise}[1]{{\textcolor{black}{#1}}}

\DeclareMathOperator*{\argmin}{argmin}
\DeclarePairedDelimiter\ceil{\lceil}{\rceil}
\DeclarePairedDelimiter\floor{\lfloor}{\rfloor}

\def\pr{\mbox{Pr}} %

\def\G{\mathbb{G}}

\newcommand{\tp}{\intercal}

\newcommand{\bigO}{\ensuremath{\mathop{}\mathopen{}\mathcal{O}\mathopen{}}}
\newcommand{\smallO}{ \scalebox{0.7}{$\mathcal{O}$}}
\newcommand{\bigOp}{\bigO_\mathrm{p}}
\newcommand{\smallOp}{\smallO_\mathrm{p}}

\newcommand{\Ascr}{{\mathcal{A}}}
\newcommand{\Hscr}{{\mathcal{H}}}
\newcommand{\Gscr}{{\mathcal{G}}}

\newcommand{\Fscr}{{\mathcal{F}}}

\newcommand{\kernel}{{\kappa}}

\newcommand{\blam}{\bm \lambda}

\newcommand{\Total}{\textcolor{black}{nT}}
\newcommand{\total}{\textcolor{black}{n,T}}
\newcommand{\fltwo}{L_2}
\newcommand{\vltwo}{\ell_2}

\newcounter{ucnt}
\newcommand{\newu}{
	\refstepcounter{ucnt}
	\ensuremath{C_{\theucnt}}
}
\newcommand{\oldu}[1]{\ensuremath{C_{\ref*{#1}}}}

\newcounter{lcnt}
\newcommand{\newl}{
	\refstepcounter{lcnt}
	\ensuremath{c_{\thelcnt}}
}
\newcommand{\oldl}[1]{\ensuremath{c_{\ref*{#1}}}}

\makeatletter

\newcommand{\ltxlabel}[1]{\ltx@label{#1}}

\makeatother

\newcommand{\epc}{\hspace{1pc}}
\def\g{g_*^\pi}

\def\gn{\hat g^\pi}

\def\EE{\mathbb{E}}

\def\G{\boldsymbol{G}}
\def\F{\mathcal{F}}

\def\R{\mathbb{R}}

\def\PN{\mathbb{P}_N}

\def\leqconst{\lesssim}
\def\geqconst{\gtrsim}

\newcommand{\pushright}[1]{\ifmeasuring@#1\else\omit\hfill$\displaystyle#1$\fi\ignorespaces}

\newcommand{\norm}[1]{\|#1\|}

\newcommand{\abs}[1]{|#1|}
\newcommand{\dkh}[1]{\{#1\}}
\newcommand{\bigdkh}[1]{\big\{#1 \big\}}
\newcommand{\Bigdkh}[1]{\Big\{#1\Big\}}

\def\EE{\mathbb{E}}

\def\R{\mathbb{R}}

\def\argmin{\operatorname{argmin}}

\newcommand{\samfixed}[1]{}

\def\Q{\mathcal{Q}}
\def\given{\, | \,}

\renewcommand{\G}{\mathcal{G}}

\def\argmin{\text{argmin}}

\def\ds1{{\mathrm{1 \hspace{-2.6pt} I}}}

\def\calA{{\cal A}}
\def\calB{{\cal B}}

\def\calD{{\cal D}}

\def\calG{{\cal G}}

\def\calM{{\cal M}}

\def\calQ{{\cal Q}}

\def\calS{{\cal S}}

\def\calV{{\cal V}}

\def\floor#1{\lfloor #1 \rfloor}

\begin{document}

\title{Projected State-action Balancing Weights for Offline Reinforcement Learning}
\author{Jiayi Wang$^1$ \and Zhengling Qi$^2$ \and Raymond K.W. Wong$^1$  }
\date{%
    $^1$Texas A\&M University\\%
    $^2$George Washington University\\[2ex]%
}

\maketitle

\begin{abstract}
	Offline policy evaluation (OPE) is considered a fundamental and challenging problem in reinforcement learning (RL). This paper focuses on the value estimation of a target policy based on pre-collected data generated from a possibly different policy, under the framework of infinite-horizon Markov decision processes. Motivated by the recently developed marginal importance sampling method in RL and the covariate balancing idea in causal inference, we propose a novel estimator with approximately projected state-action balancing weights for the policy value estimation. We obtain the convergence rate of these weights, and show that the proposed value estimator is semi-parametric efficient under technical conditions. In terms of asymptotics, our results scale with both the number of trajectories and the number of decision points at each trajectory. As such, consistency can still be achieved with a limited number of subjects when the number of decision points diverges. In addition,
	\revise{we develop a necessary and sufficient condition for establishing the well-posedness of Bellman operator in the off-policy setting, which characterizes the difficulty of OPE and may be of independent interest.}
	Numerical experiments demonstrate the promising performance of our proposed estimator.
\end{abstract}

\noindent%
{\it Keywords:} Infinite horizons; Markov decision process; Policy evaluation; Reinforcement learning

\section{Introduction}

In reinforcement learning (RL), off-policy evaluation (OPE) refers to the problem of estimating some notion of rewards (e.g., the value defined in \eqref{def: integrated value fun}) of a target policy based on historical data collected from a potentially different policy.
There is a recent surge of interest in OPE among the statistics and machine learning communities.
On the one hand, OPE serves as the foundation for many RL methods. On the other hand, OPE is of great importance in some high-stakes domains where deploying a new policy can be very costly or risky, such as in medical applications.

This work is partly motivated by mobile health (mHealth) studies. Due to the rapid development of mobile devices and sensing technology, mHealth studies have recently emerged as a promising way to promote the healthy behaviors of patients \cite{liao2018just}.
Mobile devices are used to monitor their health conditions in real-time and deliver just-in-time interventions to individuals \citep{nahum2016just}.
With access to a rich amount of longitudinal data pre-collected by wearable devices, researchers are often interested in evaluating a policy (intervention), potentially different from the one behind the data collection process.
For example, the OhioT1DM dataset \citep{marcolino2018impact} was collected to improve the health and wellbeing of people with type 1 diabetes.
The data consist of 8 weeks' health information of six subjects, based on some unknown policy. For each subject, real-time information such as insulin doses, glucose levels, and self-reported times of meals and exercises was collected. It is often of great interest to evaluate the efficacy of insulin pump therapy.
One notable challenge of mHealth data
is a limited number of subjects, combined with a usually large number of decision points for each subject.
For instance, the OhioT1DM dataset has six subjects with a few thousand decision points per subject.
In statistics, there is a rich literature on estimating the treatment effect or optimal policy from complex longitudinal data \citep{robins1994estimation,robins2000marginal,murphy2001marginal,murphy2003optimal,laber2014dynamic,zhao2015new,wang2018quantile,shi2018high,kosorok2019precision}. However, these methods are mainly designed for studies with very few decision points and often require the number of subjects to grow in order to achieve accurate estimation.

To address the above challenge in mHealth studies, we adopt the framework of the Markov decision process (MDP) \citep[][]{puterman1994markov} and consider an infinite-horizon time-homogeneous setting. This framework is particularly suitable for mHealth studies, and its efficacy has been well demonstrated in the recent literature \citep{luckett2019estimating,liao2020off, shi2020statistical,liao2020batch,hu2021personalized}.

In this paper, we focus on developing a new model-free approach for OPE. The existing model-free OPE methods under the infinite-horizon discounted setting can be mainly divided into three categories. The methods in the first category \citep[e.g.,][]{bertsekas1995dynamic,sutton2018reinforcement,le2019batch,luckett2019estimating, shi2020statistical} directly estimate the state(-action) value function (see Section \ref{sec: preliminary}), based on the Bellman equation (see \eqref{eq: Bellman equation for Q}).
The second category is motivated by importance weights or the so-called marginal importance sampling \citep[e.g.,][]{liu2018breaking, nachum2019dualdice,uehara2020minimax}. 
These approaches utilize the probability ratio function (see \eqref{eqn:true_weight}) to adjust for the distributional mismatch due to the difference between the target policy and the behavior policy (i.e., the one in the data collection process).
The last category of OPE methods combines methods in the first category and the second category to obtain the so-called ``double robust'' estimators \citep[e.g.,][]{ kallus2019efficiently, tang2019doubly,shi2021deeply}.
{Apart from the model-free methods, we note in passing that
there exists a rich class of literature on
	model-based methods which mainly rely on directly modeling the dynamics
	(i.e., the transition mechanism and the reward function) \citep[e.g.][]{paduraru2007planning, chua2018deep, janner2019trust}.}

This paper focuses on the second category of model-free OPE methods, which is based on the probability ratio function.
These methods do not depend on the actual form of the reward function and thus can be used flexibly to evaluate the target policy on different reward functions, which is appealing in practice.
Some core theoretical questions associated with
the use of the (estimated) probability ratio function
directly relate to
the fundamental challenges of offline RL \citep{levine2020offline}, and therefore have recently attracted much interest from the machine learning community \citep{liu2018breaking,nachum2019dualdice, zhang2020gendice,uehara2020minimax, Uehara-Imaizumi-Jiang21}.
Last but not least, 
the probability ratio function and related estimators can improve the accuracy and stability of  offline RL, which has been demonstrated in \citep[][]{nachum2019dualdice,kallus2020statistically}.
Despite some recent progress towards using the (estimated) probability ratio function to perform OPE,
the corresponding development and (theoretical) understanding are still incomplete.

Motivated by the covariate balancing methods \citep[e.g.,][]{Athey-Imbens-Wager18,Wong-Chan18, wang2020minimal, Kallus20} that have been recently studied in the average treatment effect (ATE) estimation, we propose a novel OPE estimator via projected state-action balancing weights,
under the framework of the time-homogeneous MDP and provide a comprehensive theoretical analysis. Specifically, we characterize the special challenge in developing a weighted estimator for the OPE problem in terms of an ``expanded dimension'' phenomenon, which significantly complicates the adoption of the balancing idea in OPE, as compared with the ATE estimation.
 See the detailed discussion in Section \ref{sec:mismatch}.
 Roughly speaking, a direct modification of an ATE estimator \citep[e.g.,][]{wang2020minimal} for the purpose of OPE leads to estimated weights that depend on not only the current state-action pairs, but also the state variable in the next decision points, which is inconsistent with the definition of the true probability ratio function.
 To tackle this issue, we propose an approximate projection step to rule out this ``expanded dimension" issue.  With the help of this additional projection, we are able to show the convergence rate of the proposed weights. (or the estimated ratio function).

As for theoretical contributions, we analyze the convergence rates of the approximate projection step,
the projected state-action balancing weights, as well as the estimated value of a given policy.
All these convergence rates are characterized in the scaling of both the sample size (\(n\), the number of observed trajectories) and the number of observed decision points (\(T\)) per trajectory.    The scaling with respect to \(T\) is particularly important when $n$ (the number of subjects) is limited, which is common in mHealth datasets such as the aforementioned OhioT1DM dataset (see also Section \ref{sec:real}).
For instance, under our setup, the estimated value of the policy is still consistent in the asymptotic setting where $n$ is bounded, but $T\rightarrow\infty$.
In the course of analyzing the proposed method,
we also obtain a uniform convergence rate (with respect to \(n\) and \(T\)) for a non-parametric regression based on exponentially \(\beta\)-mixing sequences, which may be of independent interest for other applications.
Furthermore, under some appropriate technical assumptions (including notably a non-diminishing minimum eigenvalue condition), we show that the proposed weighted estimator is asymptotically normal with a \(\sqrt{nT}\) rate of convergence, and achieves the efficiency bound, which aligns with other types of estimators \citep{kallus2019efficiently,liao2020off}.
Besides, our theoretical results do not require that the underlying data are independent or generated from stationary sequences, {although these assumptions are widely} used in the existing literature \cite[e.g.,][]{farahmand2016regularized,kallus2019efficiently,nachum2019dualdice,tang2019doubly} for {general OPE methods}. 
Without imposing these restrictive assumptions can significantly increase the applicability of the proposed method.

As another theoretical contribution, we make the first attempt to analyze the difficulty of non-parametric OPE under infinite-horizon discounted settings.
Some strong assumptions are often imposed in the existing literature to establish desirable theoretical properties for corresponding OPE estimators.
For instance, \cite{shi2020statistical} assumed that the minimal eigenvalue of some second moment is strictly bounded away from $0$ as the number of basis functions grows.
\cite{uehara2021finite} adopts the ``completeness'' condition
to study the convergence of their OPE estimators.
\revise{In their work, they show that these assumptions can be satisfied when 
the discount factor $0 \leq \gamma < 1$ in the policy value \eqref{def: integrated value fun} is small enough (with additional boundedness conditions on the average visitation probability).
However, in practice,
the discount factor is often preferred to be set close to 1 \citep{komorowski2018artificial}. As such, these sufficient conditions are of limited use in practice.
In this paper, we provide a necessary and sufficient condition for lower bounding the minimal eigenvalue of the operator \(I- \gamma \mathcal{P}^\pi\) (see the definition in Section \ref{sec:mineigen}), with respect to the data generating process, which characterizes the well-posedness of their $Q$-function estimations in \cite{shi2020statistical} and \cite{uehara2021finite}. With the help of this characterization, we can further show that the minimal eigenvalue is strictly bounded away from zero under some mild sufficient conditions without any restrictions on \(\pi\) or \(\gamma\), which may be of independent interest. }

The rest of the paper is organized as follows. Section \ref{sec:basic} presents the basic framework for OPE and some existing OPE methods.  In Section \ref{sec:method} we introduce the state-action balancing weights and the proposed estimator. Theoretical results of our estimated weight function and the asymptotic properties of the proposed estimator are developed in Section \ref{sec:theory}. A detailed discussion regarding the lower bound of the minimum eigenvalue is presented in Section \ref{sec:mineigen}. 
Lastly, a simulation study and a real data application are presented in Sections \ref{sec:sim} and \ref{sec:real} respectively.

\section{Offline Policy Evaluation in Infinite-horizon Markov Decision Processes}
\label{sec:basic}
In this section, we review the framework of discrete-time homogeneous MDPs and the related OPE methods. Specifically,
the framework of MDP and some necessary notations
are presented in Section \ref{sec: preliminary},
while 
three major classes of model-free OPE methods under the infinite-horizon discounted setting are reviewed in Section \ref{sec: review}.
\subsection{Preliminary and Notations}\label{sec: preliminary}
 Consider a trajectory $\left\{S_t, A_t, R_t\right\}_{t \geq 0}$, where $(S_t, A_t, R_t)$ denotes the triplet of the state, action and immediate reward, observed at the decision point $t$. Let $\calS$ and $\calA$ be the state and action spaces, respectively. We assume $\calA$ is finite and consider the following two assumptions, which are commonly imposed in infinite-horizon OPE problems.
\begin{assumption}[Markovian assumption with stationary transitions]\label{ass: Markovian}
	There exists a transition kernel $P$ such that for every $t\geq 0$, $ a \in \calA$, $s \in \calS$ and any set $F \in \calB(\calS)$,
	$$
	\Pr(S_{t+1}  \in F \given A_t = a, S_t = s, \left\{S_j, A_j, R_j\right\}_{0 \leq j < t}) = P(S_{t+1}  \in F \given A_t = a, S_t = s),
	$$
	where $\calB(\calS)$ is the family of Borel subsets of $\calS$.
\end{assumption}
For the notational convenience, we assume the transition kernel has a density $p$. Assumption \ref{ass: Markovian} requires that given the current state-action pair, future states are independent of past observations. Note that this assumption can be tested using the observed data \citep[e.g.,][]{shi2020does}. If one believes the trajectory satisfies some higher-order Markovian properties, one can instead construct a state variable by aggregating all the original state variables over the corresponding number of decision points.
We refer the interested readers to \cite{shi2020does} for an example.
\begin{assumption}\label{ass: reward}
	There exists a reward function $r$ defined on $\calS \times \calA$ such that\\
	$
	\EE\left[R_t \given A_t = a, S_t = s, \left\{S_j, A_j, R_j\right\}_{0 \leq j < t}\right] = r(s, a),
	$
	for every $t \geq 0$, $s \in \calS$ and $a \in \calA$. In addition, $R_t$ is uniformly bounded, i.e., there exists a constant $R_{\max}$ such that $\abs{R_t} \leq R_{\max}$ for every $t \geq0$.
\end{assumption}
Assumption \ref{ass: reward} states that the current reward is conditionally mean independent of the history given the current state-action pair. One can also regard $R_t$ as a part of $S_{t+1}$ if needed. The uniform boundedness assumption on rewards is introduced for simplifying the technical derivation and can be relaxed. In practice, the time-stationarity of the transition density $p$ and the reward function $r$ can be warranted by incorporating time-associated covariates.  Under Assumptions \ref{ass: Markovian} and \ref{ass: reward}, the tuple $\calM = \langle\calS, \calA, r, p\rangle$ forms a discrete-time homogeneous MDP.

A policy is defined as a way of choosing actions at all decision points. In this work, we use the value criterion (defined in \eqref{def: integrated value fun} below) to evaluate policies and thus focus on the time-invariant Markovian policy (also called a stationary policy) $\pi$, which is a function mapping from the state space $\calS$ to a probability mass function over the action space $\calA$. More specifically, $\pi(a \given s)$ is the probability of choosing an action $a$ given a state $s$. The sufficiency of considering only the stationary policy is explained in Section 6.2 of \cite{puterman1994markov}.

In the offline RL setting, one of the fundamental tasks is to estimate a target stationary policy's (expected) value function based on the pre-collected (batch) data. Given a stationary policy $\pi$, the value function is defined as
\begin{align}\label{def: value fun}
	V^\pi(s) = \EE^\pi\left[\sum_{t = 0}^{\infty}\gamma^tR_t \given S_0 = s\right],
\end{align}
where $\EE^\pi$ denotes the expectation with respect to the distribution whose actions are generated by $\pi$, and $0 \leq  \gamma < 1$ refers to the discounted factor controlling the trade-off between the future rewards and the immediate rewards.
The value function $V^\pi(s)$, which is always finite due to Assumption \ref{ass: reward}, represents the discounted sum of rewards under the target policy given the initial state $s$. Our goal is to estimate the \textit{policy value}, i.e., the expectation of the value function, defined as
\begin{align}\label{def: integrated value fun}
	\calV(\pi) = (1- \gamma) \int_{s\in \mathcal{S}} V^\pi(s) \mathbb G(ds),
\end{align}
using the pre-collected training data,
where $\mathbb G$ denotes a reference distribution over $\calS$. In RL literature, $\mathbb G$ is typically assumed \textit{known}. 

Now suppose we are given pre-collected training data $\calD_n$ consisting of $n$ independent and identically distributed finite-horizon trajectories, denoted by
$$
\calD_n = \left\{ \left\{\left(S_{i, t}, A_{i, t}, R_{i, t}, S_{i, t+1}\right)\right\}_{0 \leq t < {T_i}} \right\}_{1 \leq i \leq n},
$$
where {$T_i$} denotes the termination time for \(i\)-th trajectory. {For simplicity, we assume the same number of time points are observed for all trajectories, i.e. \(T_i = T\) for \(i = 1, \dots, n\).  This assumption can be relaxed as long as \(T_i\), \(i=1,\dots,n\) are of the same order.} We also make
the following assumption on the data generating mechanism.
\begin{assumption}\label{ass: DGP}
	The training data $\calD_n$ is generated by a fixed stationary policy $b$ with an initial distribution $\mathbb G_0$.
\end{assumption}
Under Assumptions \ref{ass: Markovian} and \ref{ass: DGP}, $\left\{S_t, A_t\right\}_{t\geq 0}$ forms a discrete time time-homogeneous Markov chain. In the literature, $b$ is usually called the behavior policy, which may not be known.
For convenience, we denote $\EE$ by $\EE^b$. Next we define notations for several important probability distributions.
 For any $t \geq 0$, we define $p_t^\pi(s, a)$ as the marginal density of $(S_t, A_t)$ at $(s, a) \in \calS \times \calA$ under the target policy $\pi$ and reference distribution $\mathbb{G}$. In particular, when $t = 0$, $p_t^\pi(s, a) = \mathbb{G}(s)\pi(a \mid s)$. Similarly, we can define $p_t^b$ over $\calS \times \calA$ under the behavior policy, where $p^b_0(s, a) = \mathbb{G}_0(s)b(a \mid s)$. With $p_t^\pi$, the discounted visitation probability density is defined as
\begin{align}\label{def: discounted visitation measure}
	d^{\pi}(s, a) = (1- \gamma)\sum_{t = 0}^\infty \gamma^t p^\pi_t(s, a),
\end{align}
which is assumed to be well-defined.

\subsection{Existing Off-policy Evaluation Methods}\label{sec: review}
Most existing model-based OPE methods for the above setting can be grouped into three categories. The first category is direct methods,
which directly estimate the state-action value function
\citep[e.g.,][]{luckett2019estimating,shi2020statistical}
defined as
\begin{align}\label{def: Q fun}
 Q^\pi(s, a) = \EE^\pi\left[\sum_{t = 0}^{\infty}\gamma^tR_t \given S_0 = s, A_0 = a\right],
\end{align}
also known as the Q-function. As we can see from \eqref{def: value fun} and \eqref{def: integrated value fun},
\begin{equation}\label{eq: value function}
\calV(\pi)= (1- \gamma)\int_{s\in \mathcal{S}}\sum_{a \in \calA} \pi(a \mid s) Q^\pi(s, a)\mathbb G(ds).
\end{equation}
It is well-known that $Q^\pi$ satisfy the following Bellman equation
\begin{align}\label{eq: Bellman equation for Q}
	Q^\pi(s, a) = \EE\left[R_t + \gamma \sum_{a \in \calA} \pi(a' \given S_{t+1})Q^\pi(S_{t+1}, a')  \given S_t = s, A_t = a\right],
\end{align}
 for $t\geq 0$, $s \in \calS $ and $a \in \calA$.
Clearly \eqref{eq: Bellman equation for Q} forms a conditional moment restriction, based on which $Q^\pi$ can be estimated by many methods such as generalized method of moments \citep{hansen1982large} and the nonparametric instrumental variable regression \citep{newey2003instrumental,chen2007large}. 
The second category of {OPE} methods is motivated by the idea of marginal importance sampling
\citep[e.g.,][]{liu2018breaking,nachum2019dualdice}.
Notice that, with \eqref{def: discounted visitation measure},
one can rewrite $\calV(\pi)$ as
\begin{align}
	\label{eqn:Vpi}
	\calV(\pi) = {\EE\left[\frac{1}{T}\sum_{t = 0}^{T-1}\frac{d^{\pi}(S_t, A_t)}{\frac{1}{T}\sum_{v = 0}^{T-1} p_v^b(S_v, A_v)}R_{t}\right]} = \EE\left[\frac{1}{T}\sum_{t = 0}^{T-1}\omega^{\pi}(S_t, A_t)R_{t}\right],
\end{align}
as long as $d^\pi$ is absolutely continuous with respect to $\bar{p}_T^b = \frac{1}{T}\sum_{v = 0}^{T-1}p_v^b$. We call 
\begin{align}
	\label{eqn:true_weight}
	\omega^\pi(s,a) := \frac{d^{\pi}(s, a)}{\bar{p}_T^b(s, a)},
\end{align}
the probability ratio function, which is used to adjust for the mismatch between the behavior policy $b$ and the target policy $\pi$.
 Based on this relationship, one can obtain an estimator of $\calV(\pi)$ via estimating $\omega^\pi$. By the so-called backward Bellman equation of  $d^{\pi}$, for any $(s, a) \in \calS \times \calA$, one can show that
\begin{align}\label{eq: fwd bellman d}
d^{\pi}(s, a) = (1-\gamma)\mathbb G(s)\pi(a\mid s) + \gamma \int_{s'\in \mathcal{S}} \sum_{a'\in \calA}d^{\pi}(s', a')p(s \mid s', a')\pi(a \mid s)ds'.
\end{align}
This implies that
\begin{multline}\label{eq: Estimating Equation for ratio}
\EE\left[\frac{1}{T}\sum_{t = 0}^{T-1}\omega^{\pi}(S_t, A_t)\left(f(S_t, A_t) - \gamma \sum_{a' \in \calA}\pi(a'\mid S_{t+1})f(S_{t+1}, a')\right)\right] =\\ (1-\gamma)\EE_{S_0 \sim \mathbb G}\left[ \sum_{a\in \calA} \pi(a\mid S_0)f(a,S_0) \right].
\end{multline}
See the detailed derivation for obtaining \eqref{eq: Estimating Equation for ratio} in 
\supp{\ref{sec:proof_sec2}}
Recall that $\mathbb G$ is known. Based on \eqref{eq: Estimating Equation for ratio}, several methods \citep[e.g.,][]{nachum2019dualdice, uehara2020minimax} can be leveraged to estimate $\omega^\pi$. {In Section \ref{sec:other}, we provide more discussion on other weighted estimators as compared with the proposed estimator.}

The last category of methods combines direct and marginal importance sampling methods to construct a doubly robust estimator, which is also motivated by the following efficient influence function \cite[e.g.,][]{kallus2019efficiently}
\begin{multline}\label{eq: EIF}
\frac{1}{T}\sum_{t=0}^{T-1}\omega^{\pi}(S_t, A_t)\left(R_t + \gamma \sum_{a'\in \calA}\pi(a' \mid S_{t+1})Q^\pi(S_{t+1}, a') -Q^\pi(S_t, A_t)\right)\\
+  (1-\gamma)\EE_{S_0 \sim \mathbb G}\left[\sum_{a_0\in \calA}\pi(a_0 \mid S_0)Q^\pi(S_0, a_0)\right]  - \calV(\pi),
\end{multline}
where $Q^\pi$ and $\omega^\pi$ are nuisance functions.

\section{Projected State-action Balancing Estimator}
\label{sec:method}

In this section, we introduce the proposed weighted estimator for $\calV(\pi)$. Since our estimator is motivated by covariate balancing weights in the literature of ATE estimation, we discuss their connection in Section \ref{sec: covariate balance}. In Section \ref{sec:mismatch}, we show the difficulty of directly applying the covariate balancing idea in the aforementioned policy evaluation problem due to an ``expanded-dimension'' issue. We address this difficulty and propose a projected state-action balancing estimator for $\calV(\pi)$ in Section \ref{sec:projected}.

\subsection{State-action Balancing Weights}\label{sec: covariate balance}
Consider a general form of weighted estimators:
\[
\frac{1}{nT}\sum_{i=1}^n \sum_{t=0}^{T-1} \omega_{i,t} R_{i,t},
\]
where $\omega_{i,t}$'s are some weights constructed from the training data $\mathcal{D}_n$.
Due to \eqref{eqn:Vpi},
a reasonable strategy to derive such weights is to first estimate the probability ratio function $\omega^\pi$, and then evaluate it at the observed state-action pairs $\{(S_{i,t}, A_{i,t})\}$ in $\calD_n$.
This is analogous to the inverse probability weights commonly adopted in the ATE estimation.
However, this strategy often produces an unstable weighted estimator due to small weights \citep{kang2007demystifying}.
Instead, there is a recent surge of interest to directly obtain weights that achieve empirical covariate balance
\citep[e.g.,][]{Imai-Ratkovic14, Chan-Yam-Zhang15, Wong-Chan18, wang2020minimal}.
These weights usually produce more stable weighted estimators with superior finite-sample performances for the ATE estimation.

Inspired by the covariate balancing,
a natural idea is to choose the weights
$\{\omega_{i,t}\}$
that ensure the (approximate) validity of the empirical counterpart of \eqref{eq: Estimating Equation for ratio}, i.e.,
\begin{multline}
\label{eqn: Empirical Estimating Equation for ratio}
\frac{1}{nT}\sum_{i=1}^n \sum_{t=0}^{T-1} \omega^\pi_{i,t} \left(f(S_{i,t}, A_{i,t}) - \gamma \sum_{a' \in \calA}\pi(a'\mid S_{i,t+1})f(S_{i,t+1}, a')\right) =\\
(1-\gamma)\EE_{S_0 \sim \mathbb G}\left[ \sum_{a\in \calA} \pi(a \mid S_0)f(a,S_0) \right],
\end{multline}
over $f\in\mathrm{span}\{B_1,\dots, B_K\}$
where $B_1,\dots, B_K$ are $K$ pre-specified functions defined on $\mathcal{S}\times \mathcal{A}$.
The equality \eqref{eqn: Empirical Estimating Equation for ratio}
can be viewed as a form of the state-action balance, in contrast to the covariate balance in the ATE estimation.
The space $\mbox{span}\{B_1,\dots, B_K\}$ can be viewed as a finite-dimensional approximation of the function space in which the balance should be enforced.
In theory, \(K\) is expected to increase with \(n\) and \(T\).
Changing the balancing criterion in \cite{wang2020minimal},
one can obtain a form of state-action balancing weights
via
the following mathematical programming:
\begin{subequations}\label{eqn:optim1}
	\begin{align}
	& \underset{\{{\omega}_{i,t}\}_{1\leq i \leq n, 0 \leq t \leq T-1}}{\mbox{minimize}} \quad \frac{1}{nT}\sum_{i=1}^n\sum_{t=0}^{T-1} h({\omega}_{i,t}) \label{eqn:obj1}\\
	& \quad \mbox{subject to}  \left| \frac{1}{nT}\sum_{i=1}^n \sum_{t=0}^{T-1} {\omega}_{i,t} \left\{ B_k(S_{i,t}, A_{i,t}) - \gamma \sum_{a' \in \calA}\pi(a'\mid S_{i,t+1})B_k(S_{i,t+1}, a')\right\} 
	\right. \nonumber\\
	& \qquad \quad \left.- (1-\gamma) \EE_{S_0\sim \mathbb{G}} \left[ \sum_{a'\in \mathcal{A}}\pi(a'\mid S_0) B_k(S_0,a') \right] \right| \leq \delta_k, \quad
	\mbox{for} \ k = 1,2,\dots, K, \label{eqn:cons1}
	\end{align}  
\end{subequations}
where the tuning parameters \(\delta_k\ge 0\) controls the imbalance with respect to \(B_k\). When \(\delta_k = 0\), the weights achieve the exact balance \eqref{eqn: Empirical Estimating Equation for ratio} over $B_k$. In practice, the exact balance can be hard to achieve especially when \(K\) is large. Allowing \(\delta_k > 0\) leads to approximate balance \citep{Athey-Imbens-Wager18,Wong-Chan18}, which introduces flexibility.
In addition, one can also constrain the weights to be non-negative.
Since non-negativity constraints are not necessary for consistent estimation (as shown in our theoretical analysis), we will not enforce them throughout this paper.
Common choices of $\{B_k\}$ are constructed based on tensor products of
one-dimensional basis functions.
Examples of one-dimensional basis functions include spline basis \citep{de1976splines} (for a continuous dimension) and indicator functions of levels (for a categorical dimension).
The objective function \eqref{eqn:obj1} is introduced to control the magnitude of the weights.
Here \(h\) is chosen as a non-negative, strictly convex and continuously differentiable function. Examples include \(h(x) = (x-1)^2\) and \(h(x) = (x-1)\log(x)\). In the following, we discuss the issue with the weights defined by \eqref{eqn:optim1},
and explain the challenge of directly applying this covariate balancing idea in the OPE problem.

Define  \(\epsilon_{i,t} = R_{i,t} - r(S_{i,t}, A_{i,t})\) and write the solution of \eqref{eqn:optim1} as \(\tilde\omega^\pi_{i,t}\).
Naturally, we can obtain a weighted estimator as \(\tilde\calV(\pi) = (nT)^{-1}\sum_{i=1}^n \sum_{t=0}^{T-1} \tilde\omega_{i,t}^\pi R_{i,t}\). 
For any function \(B\) defined on \(\mathcal{S}\times \mathcal{A}\), let
\begin{align}
	\label{eqn:gstar}
	g_*^\pi(S_t,A_t; B):=\EE\left\{ \sum_{a' \in \mathcal{A}}\pi(a' \mid S_{t+1}) B(S_{t+1}, a') \mid S_{t}, A_{t}\right\}.
\end{align}
The difference between \(\tilde{\calV}(\pi)\) and \(\calV(\pi)\) yields the following decomposition:
\begin{align}
&\tilde{\calV}(\pi) - \calV(\pi) \nonumber\\
	= &\frac{1}{nT}\sum_{i=1}^{n}\sum_{t=0}^{T-1}\tilde {\omega}^\pi_{i,t}\left[  Q^\pi(S_{i,t}, A_{i,t}) - \gamma g_*^\pi(S_{i,t}, A_{i,t}; Q^\pi)  \right] \label{eqn:bal_exp1}\\
	& \qquad - (1-\gamma)\EE_{S_0 \sim \mathbb{G}}\left\{ \sum_{a' \in \mathcal{A}}\pi(a'\mid S_0)Q^\pi(S_0, a') \right\} 
    + \frac{1}{nT}\sum_{i=1}^{n}\sum_{t=0}^{T-1}\tilde {\omega}^\pi_{i,t}\epsilon_{i,t} \label{eqn:bal_exp2}\\
	= &\frac{1}{nT}\sum_{i=1}^{n}\sum_{t=0}^{T-1}\tilde {\omega}^\pi_{i,t}\left[  Q^\pi(S_{i,t}, A_{i,t}) - \gamma\sum_{a' \in \mathcal{A}}\pi(a' \mid S_{i, t+1}) Q^\pi(S_{i, t+1}, a') \right]    \label{eqn:Q_control1}\\
    &\qquad - (1-\gamma)\EE_{S_0 \sim \mathbb{G}}\left\{ \sum_{a' \in \mathcal{A}}\pi(a'\mid S_0)Q^\pi(S_0, a') \right\}  \label{eqn:Q_control2}\\
    &  +\frac{1}{nT}\sum_{i=1}^{n}\sum_{t=0}^{T-1}\tilde {\omega}^\pi_{i,t}\epsilon_{i,t} \label{eqn:error_term} \\
    & + \frac{\gamma}{nT}\sum_{i=1}^{n}\sum_{t=0}^{T-1}  \tilde \omega_{i,t}^\pi \left[\sum_{a' \in \mathcal{A}}\pi(a' \mid S_{i, t+1}) Q^\pi(S_{i, t+1}, a') -  g_*^\pi(S_{i,t}, A_{i,t}; Q^\pi)\right],\label{eqn:error_term1}
\end{align}
where the first equality is given by \eqref{eq: Bellman equation for Q} and the representation of $\calV(\pi)$ in \eqref{eq: value function}.  Clearly, \eqref{eqn:Q_control1}-\eqref{eqn:Q_control2} can be controlled via the balancing constraint \eqref{eqn:cons1} by carefully controlling \(\{\delta_k\}\) and selecting \(\{B_k\}\) so that \(Q^\pi\) can be well approximated. For \eqref{eqn:error_term}, by assuming that \(\{\epsilon_{i,t}\}\) are independent noises of the trajectories \(\{S_{i,t}, A_{i,t}\}_{i=1,
\dots n; t = 0,\dots,T-1}\), it may not be hard to obtain an upper bound of order \(\sqrt{nT}\) as long as the magnitude of \(\tilde \omega_{i,t}^\pi\) is properly controlled. 
However, it remains unclear how to control \eqref{eqn:error_term1} due to the complex dependence between \(\tilde \omega_{i,t}^\pi\) and $\calD_n$.
Indeed, we observe an ``expanded-dimension'' issue due to the balancing constraints \eqref{eqn:cons1}, which will be explained in details in Section \ref{sec:mismatch}.
This also motivates the development of the novel projected balancing constraints
in Section \ref{sec:projected}.

\subsection{Expanded Dimension}
\label{sec:mismatch}
First, we obtain the dual form of \eqref{eqn:optim1}, which provides an important characterization for the solution of \eqref{eqn:optim1}. Define \(\bm \Psi_K(s,a,s') = [B_k(s, a) - \gamma  \sum_{a'\in \mathcal{A}} \pi(a'\mid s') B_k(s',a')]_{k=1}^K\in \mathbb{R}^{K}\),  \(\bm l_K = [\EE_{s\sim \mathbb{G}} \{(1-\gamma) \sum_{a\in \mathcal{A}}\pi(a\mid s) B_k(s,a)\} ]_{k=1,\dots, K} \in \mathbb{R}^{K}\), and \(\bm \delta_K = [\delta_1, \dots, \delta_K]^{\tp} \in \mathbb{R}^{K}.\)

\begin{theorem}\label{thm:dual1}
    The dual of \eqref{eqn:optim1} is equivalent to the following unconstrained optimization problem:
    \begin{align}
        \label{eqn:dual1}
     \min_{\bm \lambda \in \mathbb{R}^K} \qquad \frac{1}{nT}\sum_{j=1}^n\sum_{t=0}^{T-1} \rho({\bm \Psi}_K(S_{i,t}, A_{i,t}, S_{i,t+1})^\tp\bm \lambda)
     - \bm \lambda^\tp \bm l_K + |\bm \lambda|^\tp \bm \delta_K,
    \end{align}
    where $\rho(t) = t(h')^{-1}(t) - h{(h')^{-1}(t)} $.
    The primal solution to \eqref{eqn:optim1} is given as
    \begin{align}
        \tilde{w}^\pi_{i,t} = \rho'({\bm \Psi}_K(S_{i,t}, A_{i,t}, S_{i,t+1})^\tp\bm \lambda^{\star}),\label{eqn:solution_weights1},
    \end{align}
for every $1 \leq i \leq n$ and $0 \leq t \leq T-1$,
    where $\bm \lambda^\star $ is the solution to \eqref{eqn:dual1}.
    \end{theorem}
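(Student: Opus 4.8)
This is a classic exercise in Lagrangian/Fenchel duality.

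The primal is:
minimize (1/nT) Σ h(ω_{i,t})
subject to |constraint_k| ≤ δ_k for k=1,...,K

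where each constraint involves Ψ_K and l_K.

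The plan:

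1. **Rewrite absolute value constraints**: |x| ≤ δ becomes two linear constraints: x ≤ δ and -x ≤ δ. Or handle via dual variables for the absolute value.

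2. **Form the Lagrangian**: Introduce multipliers for each constraint.

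3. **Minimize over primal variables** ω_{i,t}: Since h is strictly convex, the minimizer satisfies h'(ω) = (linear term in λ), so ω = (h')^{-1}(Ψ^T λ).

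4. **The convex conjugate**: ρ(t) = t(h')^{-1}(t) - h((h')^{-1}(t)) is the Legendre transform of h. This gives the dual objective.

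5. **The |λ|^T δ term**: comes from the absolute value constraints — the two-sided constraint gives λ_k^+ - λ_k^-, and δ multiplies |λ|.

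6. **Primal-dual relationship**: ω_{i,t} = ρ'(Ψ^T λ*) = (h')^{-1}(Ψ^T λ*) by properties of conjugate.

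Let me verify ρ'(t) = (h')^{-1}(t):
ρ(t) = t·(h')^{-1}(t) - h((h')^{-1}(t))
Let u = (h')^{-1}(t), so h'(u) = t.
ρ'(t) = (h')^{-1}(t) + t·d/dt[(h')^{-1}(t)] - h'(u)·d/dt[(h')^{-1}(t)]
= u + t·u' - t·u' = u = (h')^{-1}(t). ✓

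The main obstacle: carefully handling the absolute-value constraints to get the |λ|^T δ term, and verifying strong duality (Slater's condition).\section*{Proof Proposal}

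The plan is to derive \eqref{eqn:dual1} as the Lagrangian dual of the constrained convex program \eqref{eqn:optim1}, exploiting the fact that the primal objective separates across the index pairs $(i,t)$ and that $h$ is strictly convex, so that the inner minimization over each weight admits a closed form expressed through the convex conjugate of $h$. First I would split each two-sided balancing constraint \eqref{eqn:cons1} into a pair of one-sided inequalities, writing $|c_k|\le\delta_k$ as $c_k\le\delta_k$ and $-c_k\le\delta_k$, where $c_k$ denotes the empirical imbalance $\frac{1}{nT}\sum_{i,t}\omega_{i,t}\Psi_{K,k}(S_{i,t},A_{i,t},S_{i,t+1}) - l_{K,k}$ with $\Psi_{K,k}$ the $k$-th coordinate of $\bm\Psi_K$. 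Introducing nonnegative multipliers $\lambda_k^{+},\lambda_k^{-}\ge 0$ for these two families, the Lagrangian becomes
\[
\mathcal{L}=\frac{1}{nT}\sum_{i=1}^{n}\sum_{t=0}^{T-1} h(\omega_{i,t}) + \sum_{k=1}^{K}(\lambda_k^{+}-\lambda_k^{-})\,c_k + \sum_{k=1}^{K}(\lambda_k^{+}+\lambda_k^{-})\delta_k.
\]
Setting $\lambda_k = \lambda_k^{+}-\lambda_k^{-}$, the coupling term is linear in $\bm\lambda$ and contributes $\bm\lambda^\tp(\text{balance}) - \bm\lambda^\tp\bm l_K$, while minimizing $\lambda_k^{+}+\lambda_k^{-}$ subject to $\lambda_k^{+}-\lambda_k^{-}=\lambda_k$ at the optimum forces $\lambda_k^{+}+\lambda_k^{-}=|\lambda_k|$, which produces exactly the penalty $|\bm\lambda|^\tp\bm\delta_K$.

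The next step is the inner minimization over $\{\omega_{i,t}\}$. Since the objective decouples and the coupling term contributes $\frac{1}{nT}\sum_{i,t}\omega_{i,t}\,\bm\Psi_K(S_{i,t},A_{i,t},S_{i,t+1})^\tp\bm\lambda$, each weight solves, up to the common factor $\frac{1}{nT}$, the scalar problem $\min_{\omega}\{h(\omega)+\omega\,\eta_{i,t}\}$ with $\eta_{i,t}=\bm\Psi_K(S_{i,t},A_{i,t},S_{i,t+1})^\tp\bm\lambda$. Because $h$ is strictly convex and differentiable, the first-order condition $h'(\omega)= -\eta_{i,t}$ gives the minimizer $\omega_{i,t}^{\star}=(h')^{-1}(-\eta_{i,t})$, and the optimal value equals $-\rho(-\eta_{i,t})$ where $\rho(t)=t\,(h')^{-1}(t)-h((h')^{-1}(t))$ is the Legendre transform of $h$. (A sign convention on $\bm\Psi_K$ or on the definition of $\bm\lambda$ must be tracked so the final dual reads $\rho(\bm\Psi_K^\tp\bm\lambda)$ as stated; I would absorb the sign by replacing $\bm\lambda\mapsto-\bm\lambda$, which leaves $|\bm\lambda|^\tp\bm\delta_K$ and $\bm\lambda^\tp\bm l_K$ invariant up to the analogous sign flip.) Substituting back collapses the Lagrangian to the claimed unconstrained dual objective in \eqref{eqn:dual1}.

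For the primal-recovery formula \eqref{eqn:solution_weights1}, I would verify the identity $\rho'(t)=(h')^{-1}(t)$ by direct differentiation: writing $u=(h')^{-1}(t)$ so that $h'(u)=t$, the product rule gives $\rho'(t)=u+t\,\frac{du}{dt}-h'(u)\,\frac{du}{dt}=u=(h')^{-1}(t)$, the cross terms cancelling precisely because $h'(u)=t$. Hence the stationary weight $(h')^{-1}(\bm\Psi_K^\tp\bm\lambda^{\star})$ coincides with $\rho'(\bm\Psi_K^\tp\bm\lambda^{\star})$, recovering \eqref{eqn:solution_weights1} once $\bm\lambda^{\star}$ is the dual optimum. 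The step I expect to be the main obstacle is justifying that strong duality holds, so that the dual optimizer $\bm\lambda^{\star}$ indeed reconstructs the primal optimizer rather than merely providing a lower bound. Since the primal is a convex program with affine constraints, it suffices to exhibit a Slater point, i.e.\ weights satisfying all constraints strictly (which is immediate when $\delta_k>0$, taking for instance $\omega_{i,t}\equiv 1$ and invoking continuity; the boundary case $\delta_k=0$ requires the affine-constraint-qualification variant of Slater's condition). Verifying feasibility and that the convex conjugate is well-defined on the relevant domain of $\bm\Psi_K^\tp\bm\lambda$ are the technical points that need care, but they follow from the stated regularity of $h$ (nonnegative, strictly convex, continuously differentiable) together with standard convex-duality theorems.
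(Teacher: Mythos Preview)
Your proposal is correct and follows essentially the same approach as the paper: split each two-sided constraint into two one-sided inequalities, pass to the Lagrangian dual (the paper invokes \cite{tseng1991relaxation} where you derive it by hand), identify the inner minimization with the convex conjugate $\rho=h^{*}$, and then collapse the $2K$-dimensional nonnegative multiplier $(\lambda_k^{+},\lambda_k^{-})$ to a signed $\lambda_k\in\mathbb{R}$ via the observation that at the optimum $\lambda_k^{+}\lambda_k^{-}=0$ so $\lambda_k^{+}+\lambda_k^{-}=|\lambda_k|$. The paper makes this last reduction explicit by a short contradiction argument (if both are positive, subtract their minimum from each and strictly decrease the objective since $\delta_k>0$), which is exactly your ``minimizing $\lambda_k^{+}+\lambda_k^{-}$ subject to $\lambda_k^{+}-\lambda_k^{-}=\lambda_k$'' remark spelled out; your Slater discussion is a welcome addition that the paper leaves implicit in its citation.
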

The proof of Theorem \ref{thm:dual1} is similar to that of Theorem \ref{thm:dual} below, and can be found in \supp{\ref{sec:proof_dual}}.
Now the expanded-dimension issue can be easily seen via the following example. Suppose that there are two triplets \((S_{i_1,t_1}, A_{i_1,t_1}, S_{i_1,t_1+1})\) and \((S_{i_2,t_2}, A_{i_2,t_2}, S_{i_2,t_2+1})\) such that \(S_{i_1,t_1} = S_{i_2,t_2}\), \(A_{i_1,t_1} = A_{i_2,t_2}\) and \(S_{i_1,t_1+1} \neq S_{i_2,t_2+1}\).  As the true  probability ratio function \(\omega^\pi\) is a function of the current state and action variables, we must have \(\omega^\pi_{i_1,t_1} = \omega^\pi_{i_2,t_2}\). 
However,  the solution form  \eqref{eqn:solution_weights1} in Theorem \ref{thm:dual1} does not lead to \(\tilde \omega^\pi_{i_1,t_1} = \tilde \omega^\pi_{i_2,t_2}\) in general, which violates  our knowledge of \(\omega^\pi\).

One may hypothesize that the expanded-dimension issue is a finite-sample property, and the variation of the estimated weights due to the next state may
diminish asymptotically under some reasonable conditions.
To gain more insight, we
show that the solution form \eqref{eqn:solution_weights1} indeed
induces an implicit restriction on the modeling of the true weight function under finite-state and finite-action settings.
Therefore, unless one is willing to make further non-trivial assumptions on the weight function, the hypothesis cannot be true in general.

Notice that
\begin{equation}
    \bm \Psi_K(s,a,s')^\intercal \bm\lambda = f(s, a) - \gamma  \sum_{a'\in \mathcal{A}} \pi(a'\mid s') f(s',a'),
\end{equation}
where $f(s,a) = \sum^K_{k=1} \lambda_k B_k(s,a)$.
To avoid dealing with the approximation error, we focus on an even more general class
$\rho'(\mathcal{G})$ of functions on $\mathcal{S}\times \mathcal{A}\times \mathcal{S}$,
where
\begin{equation*}
\mathcal{G} := \left\{f(s,a)-\gamma\sum_{a'\in\mathcal{A}} \pi(a' \mid s') f(s', a'): \mbox{$f$ is any real-valued function defined on $\mathcal{S}\times \mathcal{A}$}\right\}.
\end{equation*}
Recall that $\rho'$ is the first derivative of $\rho$ defined in Theorem \ref{thm:dual1}.
Assume that $\tilde{\omega}^\pi(s,a,s') \equiv \omega^\pi(s,a)$.
We would like to know if $\rho'(\mathcal{G})$ can model
$\tilde{\omega}^\pi$ well.
Suppose $\tilde\omega^{\pi}\in\rho'(\mathcal{G}):=
\{\rho'(g(\cdot)): g\in\mathcal{G}\}$.
As $\tilde{\omega}^{\pi}(s,a,s')$ is constant with respect to $s'$,
we have $\tilde{\omega}^\pi \in \rho'(\mathcal{G}')$ where
\begin{align*}
\mathcal{G}'&:=\left\{g\in\mathcal{G}: g(s,a,s') =g(s,a,s'')\ \forall s,s',s''\in\mathcal{S}, a\in\mathcal{A}\right\}
\end{align*}
characterizes the subclass with reduced input dimensions.
A key question is whether $\mathcal{G}'$
restricts the class of possible weight functions modeled by $\rho'(\mathcal{G})$ and, as a result, induces some implicit form of restriction.
To see this, we focus on the settings
where $|\mathcal{S}|=p_S$ and $|\mathcal{A}|=p_A$ are finite.
In Lemma \ref{lem:expand_dim_example} below, we show that the dimension of $\mathcal{G}'$ is $p_Ap_S-p_S+1$, which is strictly less than $p_Sp_A-1$ as long as $p_S>2$.
Note that, due to the natural constraint that $\EE[T^{-1}\sum_{t=0}^{T-1}\omega^\pi(S_t, A_t)] = 1$, a general weight function should have $p_Ap_S-1$ free parameters.
As $\rho'$ is invertible, Lemma \ref{lem:expand_dim_example} suggests 
{a possible implicit restriction and that the solution obtained from \eqref{eqn:optim1} may not be a consistent estimator for \(\omega^\pi\).}
\begin{lemma}\label{lem:expand_dim_example}
Suppose $|\mathcal{S}|=p_S$ and $|\mathcal{A}|=p_A$ are both finite.
Then $\dim(\mathcal{G}')= p_A p_S-p_S+1$.
\end{lemma}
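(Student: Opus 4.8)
The plan is to realize $\mathcal{G}$ as the image of a single linear map and then reduce the computation of $\dim(\mathcal{G}')$ to a rank-nullity count. Since $|\mathcal{S}|=p_S$ and $|\mathcal{A}|=p_A$ are finite, a real-valued function $f$ on $\mathcal{S}\times\mathcal{A}$ is simply a vector in $\mathbb{R}^{p_Sp_A}$. I would introduce the averaging operator $(V^\pi f)(s'):=\sum_{a'\in\mathcal{A}}\pi(a'\mid s')f(s',a')$, viewed as a linear map $\mathbb{R}^{p_Sp_A}\to\mathbb{R}^{p_S}$, together with the linear map $\Phi(f)(s,a,s'):=f(s,a)-\gamma(V^\pi f)(s')$, so that $\mathcal{G}=\Phi(\mathbb{R}^{p_Sp_A})$ by definition.

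First I would show that $\Phi$ is injective. If $\Phi(f)\equiv 0$, then $f(s,a)=\gamma(V^\pi f)(s')$ for all $s,a,s'$; since the left-hand side is free of $s'$ while the right-hand side is free of $(s,a)$, both must equal a common constant $c_0$, forcing $f\equiv c_0$. Substituting back and using $\sum_{a'}\pi(a'\mid s')=1$ gives $c_0=\gamma c_0$, hence $c_0=0$ because $\gamma<1$. Thus $\ker\Phi=\{0\}$ and $\Phi$ restricts to a linear isomorphism onto $\mathcal{G}$; in particular $\dim(\Phi(W))=\dim(W)$ for any subspace $W\subseteq\mathbb{R}^{p_Sp_A}$.

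Next I would identify the subspace that $\Phi$ carries onto $\mathcal{G}'$. A function $g=\Phi(f)$ is independent of $s'$ exactly when $(V^\pi f)(s')$ is independent of $s'$, i.e. when $V^\pi f$ is a constant function on $\mathcal{S}$. Writing $C\subset\mathbb{R}^{p_S}$ for the one-dimensional space of constant functions and $W:=(V^\pi)^{-1}(C)$, the injectivity established above yields $\mathcal{G}'=\Phi(W)$ and therefore $\dim(\mathcal{G}')=\dim(W)$.

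It then remains to count $\dim(W)$, which I would do via the identity $\dim(V^\pi)^{-1}(C)=\dim\ker V^\pi+\dim(\operatorname{Range}(V^\pi)\cap C)$, itself just rank-nullity applied to $V^\pi$ restricted to $(V^\pi)^{-1}(C)$. Here I would argue that $V^\pi$ is surjective: for each fixed $s'$ the value $(V^\pi f)(s')$ depends only on the coordinates $\{f(s',a')\}_{a'}$ through the weights $\pi(\cdot\mid s')$, which are nonnegative and sum to one, so at least one weight is positive and any target value is attainable; since the coordinate blocks for distinct $s'$ are disjoint, every vector in $\mathbb{R}^{p_S}$ is attained. Surjectivity gives $\operatorname{rank}(V^\pi)=p_S$, hence $\dim\ker V^\pi=p_Sp_A-p_S$, while $\operatorname{Range}(V^\pi)\cap C=C$ has dimension $1$. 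Combining, $\dim(W)=(p_Sp_A-p_S)+1=p_Ap_S-p_S+1$, which is the claimed value. The only step requiring genuine care is the injectivity of $\Phi$, where the strict inequality $\gamma<1$ is essential; the rest is a routine dimension count.
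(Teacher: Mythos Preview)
Your proof is correct and takes essentially the same approach as the paper: both reduce to the observation that $g\in\mathcal{G}'$ forces $V^\pi f$ to be constant, then count dimensions using that the $p_S$ constraints act on disjoint coordinate blocks (equivalently, $V^\pi$ is surjective). Your rank--nullity packaging via $W=(V^\pi)^{-1}(C)$ is a slightly cleaner abstraction of the paper's explicit reparametrization $f=m+c$ with $m\in\ker V^\pi$, and you are more careful than the paper about the injectivity of $\Phi$ (where the hypothesis $\gamma<1$ is used).
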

	\begin{proof}[Proof sketch]
		{For any function \(g = f(s,a) - \gamma \sum_{a'\in\mathcal{A}} \pi(a'|s') f(s',a') \in \mathcal{G}'\), there exists a constant \(c_g \in \mathbb{R}\) such that \(\sum_{a'\in\mathcal{A}} \pi(a'|s') f(s',a') \equiv c_g\) for any \(s' \in \mathcal{S}\). Since \(|\mathcal{S}| = p_S\), \(\sum_{a'\in \mathcal{A}}\pi(a'|s')=1\) and \(\pi(a'\mid s')\ge 0\), this yields \(p_S\) linearly independent constraints on \(f\). Together with the parameter \(c_g\), we can show that the dimension of \(\mathcal{G}'\) is \(p_Sp_A - p_S + 1\). The detailed proof can be found in \supp{\ref{sec:proof_dual}}.}
	\end{proof}

\subsection{Projected State-action Balancing Weights}
\label{sec:projected}
To overcome the expanded-dimension issue, we propose an approximate projection step, which is applied to
\(\sum_{a'\in \mathcal{A}} \pi(a'\mid S_{i,t+1}) B_k(S_{i,t+1}, a')\), \(k = 1, \dots, K\), to rule out the involvement of \(S_{i,t+1}\).
To explain the idea, we again focus on the decomposition of \(\tilde \calV(\pi) - \calV(\pi)\).
From \eqref{eqn:bal_exp1} and \eqref{eqn:bal_exp2},
we would like to choose weights that ideally control
\begin{align*}
	\left| \frac{1}{nT}\sum_{i=1}^n \sum_{t=0}^{T-1} {\omega}_{i,t} \left[ B_k(S_{i,t}, A_{i,t}) - \gamma g_*^\pi(S_{i,t}, A_{i,t}; B_k)\right] 
		  - (1-\gamma) \EE_{S_0\sim \mathbb{G}} \left[ \sum_{a'\in \mathcal{A}}\pi(a'\mid S_0) B_k(S_0,a') \right] \right|, \nonumber
\end{align*}
for every \(k = 1,\dots,K\).
However, in practice,  \(g_*^\pi(S_{i,t}, A_{i,t}; B_k)\) (i.e., \(\EE \{ \sum_{a' \in \calA}\pi(a'\mid S_{i,t+1})B_k(S_{i,t+1}, a')\mid S_{i,t}, A_{i,t}\}\)) is unknown to us.
As explained in Section \ref{sec:mismatch},
the idea of replacing it with the empirical counterpart \(\sum_{a' \in \calA}\pi(a'\mid S_{i,t+1})B_k(S_{i,t+1}, a')\)
results in a non-trivial expanded-dimension issue.
Instead, we propose to estimate the projection term $ g_*^\pi(S_t,A_t; B_k)$ via a more involved optimization problem:
\begin{align}
   \label{eqn:approx_g}
   \hat g^\pi(\cdot, \cdot; B_k)  = \argmin_{g \in \Gscr} \frac{1}{nT} \sum_{i=1}^n \sum_{t=0}^{T-1} \left\{ \sum_{a'} \pi(a'\mid S_{i,t+1}) B_k(S_{i, t+1}, a') - g(S_{i,t}, A_{i,t}) \right\}^2 + \mu J^2_{\mathcal{G}}(g), 
\end{align}
where \(\Gscr \) is a pre-specified function space that contains \(g_*^\pi\),
\(\mu\ge 0\) is a tuning parameter and \(J_{\mathcal{G}}(\cdot)\) is a regularization functional.
In this work we focus on the kernel ridge regression, where
 \(\Gscr\) is a subset of a reproducing kernel Hilbert space (RKHS) and \(J_\mathcal{G}\) is taken as the squared RKHS norm; see Assumption \ref{assum:RKHS}(c) in 
 \supp{\ref{sec:thm_projection}}
  In Theorem \ref{lem:approx_g} of Section \ref{sec:theory}, we establish the finite-sample error bound  of \(\hat g^\pi(\cdot, \cdot; B)\) (in scaling of both $n$ and $T$) that holds \textit{uniformly} for
	different $B$ (in a reasonably large class $\mathcal{Q}$ specified later).
  This provides a solid theoretical guarantee for
	replacing $g^\pi_*$ by $\hat{g}^\pi$
	in the construction of the weights.

With the approximate projection \eqref{eqn:approx_g}, we propose to estimate the weights by solving the following optimization problem:
\begin{subequations}\label{eqn:optim2}
    \begin{align}
       & \underset{\{{\omega}_{i,t}\}_{1\leq i \leq n, 0 \leq t \leq T-1}}{\mbox{minimize}}  \quad \frac{1}{nT}\sum_{i=1}^n\sum_{t=0}^{T-1} h({\omega}_{i,t}) \label{eqn:obj2}\\
       & \quad \mbox{subject to} \quad \left| \frac{1}{nT}\sum_{i=1}^n \sum_{t=0}^{T-1} {\omega}_{i,t} \left\{ B_k(S_{i,t}, A_{i,t}) - \gamma \hat g^\pi(S_{i,t}, A_{i,t}; B_k)\right\} 
       \right. \nonumber\\
      &\qquad \quad \left.- (1-\gamma) \EE_{S_0\sim \mathbb{G}} \left[ \sum_{a'\in \mathcal{A}}\pi(a'\mid S_0) B_k(S_0,a') \right] \right| \leq \delta_k, \quad
       \mbox{for} \quad k = 1,2,\dots, K. \label{eqn:cons}
    \end{align}  
\end{subequations}
The resulting solution $\{\hat{w}_{i,t}^\pi\}_{1\leq i \leq n, 0 \leq t < T}$ are the proposed weights.
As such, the proposed estimator for $\calV(\pi)$ is
\begin{equation}\label{eqn: final estimator}
\hat \calV(\pi) = \frac{1}{nT}\sum_{i=1}^{n}\sum_{t=0}^{T-1} \hat \omega^\pi_{i,t}R_{i, t}.
\end{equation}

Similar to Theorem \ref{thm:dual1}, we derive the dual form of \eqref{eqn:optim2}, which is shown in Theorem \ref{thm:dual} below. For the notational simplicity, we introduce the following notations: \(L_k(s,a) = B_k(s, a) - \gamma  g^{\pi}_*(s, a; B_k)\), \(\hat{L}_k(s,a) = B_k(s, a) - \gamma  \hat{g}^{\pi}(s, a; B_k)\),  \({\bm B}_K(s,a) = [{B}_k(s,a)]_{k=1}^K\in \mathbb{R}^{K}\), \({\bm L}_K(s,a) = [{L}_k(s,a)]_{k=1}^K\in \mathbb{R}^{K}\) and \(\hat{\bm L}_K(s,a) = [\hat{L}_k(s,a)]_{k=1}^K\in \mathbb{R}^{K}\).
\begin{theorem}\label{thm:dual}
    The dual of \eqref{eqn:optim2} is equivalent to the following unconstrained optimization problem: 
    \begin{align}
        \label{eqn:dual}
     \min_{\bm \lambda \in \mathbb{R}^K} \qquad \frac{1}{nT}\sum_{i=1}^n\sum_{t=0}^{T-1} \rho(\hat{\bm L}_K(S_{i,t}, A_{i,t})^\tp\bm \lambda) - \bm \lambda^\tp \bm l_K + |\bm \lambda|^\tp \bm \delta_K.
    \end{align}
    And the primal solution to \eqref{eqn:optim2} is given by, for every $1 \leq i \leq n$ and $0 \leq t \leq T-1$,
    \begin{align}
            \label{eqn:solution_weights2}
        \hat{\omega}^\pi_{i,t} = \rho'(\hat{\bm L}_K(S_{i,t}, A_{i,t})^\tp\bm \lambda^{+}),
    \end{align}
    where $\rho(t) = t(h')^{-1}(t) - h{(h')^{-1}(t)} $ and   $\bm \lambda^+ $ is the solution to \eqref{eqn:dual}.
    \end{theorem}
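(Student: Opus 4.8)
The plan is to treat \eqref{eqn:optim2} as a convex program with affine inequality constraints and derive its Lagrangian dual, exploiting the fact that $t\mapsto\rho(t)=t(h')^{-1}(t)-h((h')^{-1}(t))$ is exactly the convex conjugate $h^*$ of the strictly convex, continuously differentiable objective $h$; in particular $\rho'=(h')^{-1}$. First I would rewrite each two-sided balancing constraint in \eqref{eqn:cons} as a pair of affine inequalities
\[
\frac{1}{nT}\sum_{i=1}^n\sum_{t=0}^{T-1}\omega_{i,t}\hat L_k(S_{i,t},A_{i,t})-l_k\le\delta_k,\qquad -\left(\frac{1}{nT}\sum_{i=1}^n\sum_{t=0}^{T-1}\omega_{i,t}\hat L_k(S_{i,t},A_{i,t})-l_k\right)\le\delta_k,
\]
where $l_k$ is the $k$-th entry of $\bm l_K$, and attach nonnegative multipliers $\alpha_k,\beta_k\ge 0$ to them. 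Setting $\lambda_k:=\alpha_k-\beta_k$, the two linear terms collapse into $\sum_k\lambda_k(\cdot)$ and the slack terms contribute $-\sum_k(\alpha_k+\beta_k)\delta_k$ to the Lagrangian.

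Next I would form the dual function by minimizing the Lagrangian over the unconstrained weights. Since the objective is separable across $(i,t)$ and $h$ is strictly convex and differentiable, the inner minimization splits into scalar problems $\min_{\omega_{i,t}}\,h(\omega_{i,t})+\omega_{i,t}\,\hat{\bm L}_K(S_{i,t},A_{i,t})^\tp\bm\lambda$, each solved by the first-order condition $h'(\omega_{i,t})=-\hat{\bm L}_K(S_{i,t},A_{i,t})^\tp\bm\lambda$. Substituting the minimizer back and invoking the conjugate identity shows that the optimal value of each scalar problem equals $-\rho(-\hat{\bm L}_K(S_{i,t},A_{i,t})^\tp\bm\lambda)$, so the dual function is $-\frac{1}{nT}\sum_{i,t}\rho(-\hat{\bm L}_K(S_{i,t},A_{i,t})^\tp\bm\lambda)-\bm\lambda^\tp\bm l_K-\sum_k(\alpha_k+\beta_k)\delta_k$. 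Maximizing over $\alpha_k,\beta_k\ge 0$ subject to $\alpha_k-\beta_k=\lambda_k$, and using $\delta_k\ge 0$, forces $\alpha_k+\beta_k=|\lambda_k|$, turning the slack contribution into $-|\bm\lambda|^\tp\bm\delta_K$. Negating to convert the dual maximization into a minimization and applying the change of variable $\bm\lambda\mapsto-\bm\lambda$ (which preserves $\rho(\hat{\bm L}_K^\tp\bm\lambda)$ and $|\bm\lambda|^\tp\bm\delta_K$ while flipping the sign of $\bm\lambda^\tp\bm l_K$) yields exactly \eqref{eqn:dual}.

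Finally I would recover the primal solution. Strong duality holds because the problem is convex with purely affine inequality constraints and is feasible, so a zero duality gap and attainment follow from standard convex-duality results without needing Slater's interior condition; strict convexity of $h$ guarantees the primal minimizer is unique. Reading off the stationarity relation $h'(\hat\omega_{i,t})=-\hat{\bm L}_K(S_{i,t},A_{i,t})^\tp\bm\lambda$ at the dual optimizer, inverting via $\rho'=(h')^{-1}$, and applying the same sign change $\bm\lambda\mapsto-\bm\lambda$ gives $\hat\omega^\pi_{i,t}=\rho'(\hat{\bm L}_K(S_{i,t},A_{i,t})^\tp\bm\lambda^+)$, matching \eqref{eqn:solution_weights2}. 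I expect the main obstacle to be the bookkeeping for the two-sided (absolute-value) constraints, namely verifying that the pair of one-sided multipliers collapses cleanly into the single penalty $|\bm\lambda|^\tp\bm\delta_K$, together with careful tracking of sign conventions so that the conjugate identity aligns with the stated form; establishing strong duality and uniqueness is comparatively routine given the convexity and strict convexity of $h$.
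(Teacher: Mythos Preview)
Your proposal is correct and follows essentially the same route as the paper's proof: both double the two-sided constraints into $2K$ one-sided affine inequalities with nonnegative multipliers, identify $\rho=h^*$ so that the inner minimization over weights yields $-\rho(\cdot)$ and the optimizer $\rho'(\cdot)=(h')^{-1}(\cdot)$, and then collapse the $2K$ multipliers to a single $\bm\lambda\in\mathbb{R}^K$ with the $|\bm\lambda|^\tp\bm\delta_K$ penalty. The only cosmetic differences are that the paper invokes the duality framework of \cite{tseng1991relaxation} rather than writing out the Lagrangian, and establishes the complementarity $\tilde\lambda_{1,k}\tilde\lambda_{2,k}=0$ by a short contradiction (assuming both strictly positive and constructing a better point), whereas you obtain the equivalent reduction by directly minimizing $\alpha_k+\beta_k$ over $\alpha_k,\beta_k\ge 0$ with $\alpha_k-\beta_k=\lambda_k$; these are the same argument in different dress.
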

We sometimes write $\hat{\omega}^\pi_{i,t}  = \hat{\omega}^\pi_{i,t}(S_{i, t}, A_{i, t})$. The proof can be found in \supp{\ref{sec:proof_dual}}. 
As seen from the representation \eqref{eqn:solution_weights2},
\(\hat w^\pi_{i,t}\) do not suffer from the expanded-dimension issue that we see in \eqref{eqn:dual1}.
Besides, \eqref{eqn:dual} can be regarded as an $M$-estimation of $\blam$ with a weighted $\ell_1$-norm regularization.
Since the estimated weights are parametrized in $\blam$ via \eqref{eqn:solution_weights2}, Theorem  \ref{thm:dual} reveals a connection between  \(\hat \omega^\pi_{i,t}\) and the shrinkage estimation of the probability ratio function \(\omega^\pi\).  To see this, we consider the objective function \(T^{-1}\sum_{t=0}^{T-1} \rho({\bm L}_K(S_{t}, A_{t})^\tp\bm \lambda) - \bm \lambda^\tp \bm l_K\). By Lemma 1 in \cite{uehara2020minimax}, the expectation of this loss function is minimized when \(\bm \lambda\) satisfies \(\omega^\pi (s, a) = \rho'(\bm L_K(s, a)^\tp\bm \lambda)\), for every $s\in\mathcal{S}$, $a\in\mathcal{A}$.
  In Theorem \ref{thm:weights} of Section \ref{sec:theory}, we show the convergence rate of the proposed weights to the true weights in the scaling of both $n$ and $T$.

Next, we discuss the computation of the weights \(\hat \omega_{i,t}\) in practice. The projection step \eqref{eqn:approx_g} is a pre-computation step
for the optimization \eqref{eqn:optim2}.  In other words,   we only need to estimate \(\hat g^{\pi}(S_{i,t}, A_{i,t}; B_k)\), \(k = 1,\dots, K\), once and there is no need to recompute them within the weights estimation \eqref{eqn:optim2}. The optimization \eqref{eqn:optim2} is a standard convex optimization problem with linear constraints.
We outline the proposed algorithm of the weights in Algorithm \ref{algo:computation} of the Supplementary Material.

\subsection{Other weighting methods}
\label{sec:other}
Apart from the proposed projection method,
a sensible alternative to avoid the expanded dimension is to
directly
specify a class $\mathcal{F}_w$ of functions over $\mathcal{S}\times \mathcal{A}$
as a model of the weight function (i.e., weights are evaluations of this function), which naturally results in the following form of estimators:
\begin{align}\label{eqn:balancing_1}
    \hat f^\pi_w&= \argmin_{f_1 \in \mathcal{F}_w} \sup_{f_2 \in \mathcal{F}_Q} \left[ \frac{1}{nT}\sum_{i=1}^{n}\sum_{t = 0}^{T-1}f_1(S_{i,t}, A_{i,t})\left\{f_2(S_{i,t}, A_{i,t}) - \right. \right. \nonumber\\ 
	& \left. \left. \gamma \sum_{a'}\pi(a'\mid S_{i,t+1})f_2(S_{i,t+1}, a')\right\} 
     - (1-\gamma)\EE_{S_0 \sim \mathbb G}\left\{ \sum_{a\in \calA} \pi(a \mid S_0)f_2(a,S_0) \right\} - \nu_1 \mathcal{P}_1(f_2) \right],
\end{align}
where \(\mathcal{F}_Q\) is a space that models \(Q^\pi\),   \(\mathcal{P}_1\) is a regularization functional and \(\nu_1 \ge 0\) is a tuning parameter.
Similar approaches have been adopted by a few recent works such as \citep[][]{liu2018breaking,uehara2020minimax, uehara2021finite}, 
although they are not directly motivated by the above expanded-dimension issue.
Since these works assume $T=1$ (without considering the dependence in the trajectory), we restrict our results to this setting for comparisons, with a remark that our estimator is analyzed in a more general setting of $T$ (e.g., either bounded, or diverging to infinity).
In order to use the above estimator, the choice of the function classes $\mathcal{F}_w$ and $\mathcal{F}_Q$ seems difficult in terms of computation of the weights.
For nonparametric modeling,
a natural idea is to take \(\mathcal{F}_Q\) as a RKHS, since this often leads to a finite-dimensional optimization in regression problems via a representer theorem \citep{Wahba90}.
However, the term \(\EE_{S_0 \sim \mathbb G} \{ \sum_{a\in \calA} \pi(a \mid S_0)f(a,S_0) \} \) could make the representer theorem inapplicable
 when \(S\) is continuous,
 and so \eqref{eqn:balancing_1} becomes an impractical infinite-dimensional optimization.
Another way is to take a finite dimensional space to approximate \(\mathcal{F}_Q\) and \(\mathcal{F}_w\). The dimensions of the approximation spaces would need to increase with \(n\) so as to avoid an asymptotic bias.
Moreover, the existing results on the convergence rate of such estimator to the policy value is not optimal (i.e., slower than $\sqrt{n}$ or $\sqrt{nT}$).
See Corollary 11 in \cite{uehara2021finite}.
As for our weighted estimator, an optimal convergence rate with statistical efficiency can be achieved without an additional nuisance parameter estimation,
even in a more general dependent setting when $T>1$.
See Theorem \ref{thm:asymp} in Section \ref{sec:theory}.

\section{Theoretical Results}
\label{sec:theory}

In this section, we study the theoretical properties of the approximate projection \(\hat g^\pi\) in \eqref{eqn:approx_g}, balancing weights \(\hat \omega^\pi\) in \eqref{eqn:optim2} and the final weighted estimator \(\hat \calV(\pi)\) in Section \ref{sec:projected}.
Specifically, in Section \ref{sec:thm_projection}, we derive the finite-sample error bound for the approximate projection.
In Section \ref{sec:weights}, we study the convergence rate of the proposed balancing weights.
Finally, we show that the proposed weighted estimator is statistically efficient under additional conditions specified in Section \ref{sec:efficiency}. In Section \ref{sec:mineigen}, we study the difficulty of the offline RL in a conservative manner.
To start with, we introduce some notations.  
We define the squared empirical norm as \(\|f(\cdot,\cdot )\|^2_{\total} = (nT)^{-1}\sum_{i=1}^n\sum_{t=0}^{T-1} f^2(S_{i,t}, A_{i,t})\). The notation $\varpi(\total) \leqconst \theta(\total)$ (resp.~$\varpi(\total) \geqconst \theta(\total)$) means that there exists a sufficiently large constant (resp. small) constant $c_1>0$ (resp.~$c_2>0$) such that $\varpi(\total) \geq c_1 \theta(\total)$ (resp. $\varpi(\total) \leq c_2 \theta(\total)$) for some sequences $\theta(\total)$ and $\varpi(\total)$.  Also, we denote by \(\mathcal{N}(\epsilon, \Fscr, \|\cdot\|)\) the $\epsilon$-covering number of \(\Fscr\) with respect to some metric \(\|\cdot\|\).
We take \(d\) as the dimension of a state vector \(S\). 
\subsection{Non-parametric Regressions with Exponentially $\beta$-mixing Sequences}
\label{sec:thm_projection}
Recall that the constraint \eqref{eqn:cons} is merely a surrogate of the desired state-action balancing condition, due to the use of $\hat{g}^\pi(\cdot,\cdot, B_k)$, $k=1,\dots, K$.
To bound the surrogate error on the estimations of the weights and the policy value,
we study the uniform convergence of the approximate projection.
More generally, in Theorem \ref{lem:approx_g} below, we obtain the error bound of \(\hat{g}^\pi(\cdot,\cdot;B)\) \textit{uniformly} over $B \in \calQ$, where $\mathcal{Q}$ is a class of functions defined on $\calS \times \calA$ of interest.
The bound scales with \textit{both} $n$ and $T$.
Later when we adopt Theorem \ref{lem:approx_g}, $\mathcal{Q}$ will be taken as a subset of the linear span of $\{B_1,\dots,B_K\}$. See Corollary \ref{cor:KRR} for more details.
Theorem \ref{lem:approx_g} requires the following assumption. Let \(\|\cdot\|_{\mathrm{TV}}\) denote the total variation norm,

\begin{assumption}
	The following conditions hold.
	\label{assum:approx_g}
	\begin{enumerate}
		\item   
	The Markov chain \(\{S_{t}, A_{t}\}_{t\ge 0}\) has a unique stationary distribution \(\mathbb{G}^*\) with density \(p^*\), and is geometrically ergodic, i.e., there exists a function \(\phi(s,a)\) and constant \(\kappa\in (0,1)\) such that,
		 for any $s\in\mathcal{S}$ and $a\in\mathcal{A}$,
		 \[
			 \left\|\mathbb{G}^b_t(\cdot \mid (s,a)) - \mathbb{G}^*(\cdot)\right\|_{\mathrm{TV}} \leq \phi(s,a) \kappa^t, \qquad\forall t \ge 0,
			\] 
		where \(\mathbb{G}_t^b(\cdot \mid (s,a))\) is the behavior policy induced conditioinal distribution of $(S_t, A_t)$ given $S_0=s$ and $A_0 = a$.
		Also, there exists a constant $\newu\ltxlabel{C_s}>0$ such that \(\int \phi(s,a) d\mathbb{G}_0(s,a) \leq \oldu{C_s}\), where we recall that \(\mathbb{G}_0\) is the initial distribution in the batch data.
		\item 
		The function class \(\mathcal{Q}\) 
		satisfies that \(\|B\|_\infty \leq Q_{\max}\) for all \(B \in \mathcal{Q}\).
		\item The function class $\mathcal{G}$ in \eqref{eqn:approx_g} satisfies that 
		\(\|g\|_\infty \leq G_{\max}\) for all $g\in\mathcal{G}$,
		and \(g^\pi_*(\cdot,\cdot; B) \in \mathcal{G}\) for all \(B \in \mathcal{Q}\).
		\item The regularization functional \(J_{\mathcal{G}}\) in \eqref{eqn:approx_g} 
		is a pseudo norm. 
		   Also, \(J_{\mathcal{Q}}\) (chosen in Theorem \ref{lem:approx_g})
		   is a pseudo norm. 
		Let \(\mathcal{Q}_M = \{B: B\in \mathcal{Q}, J_{\mathcal{Q}}(B)\leq M\}\) and \(\mathcal{G}_M = \{g: g\in \mathcal{G}, J_{\mathcal{G}}(g)\leq M\}\). There exist constants  \(\newu\ltxlabel{entropy}>0\) and \(\alpha \in (0,1)\), such that for any \(\epsilon, M >0\),
		\[\max \left\{ \log \mathcal{N}(\epsilon, \mathcal{G}_M, \|\cdot\|_\infty), \log \mathcal{N}(\epsilon, \mathcal{Q}_M, \|\cdot\|_\infty) \right\} \leq \oldu{entropy} (M/\epsilon)^{2\alpha}.\] 
	\end{enumerate}
\end{assumption}

Assumption \ref{assum:approx_g}(a) basically assumes the stationary distribution exists and the mixing rate is exponentially fast. This allows the convergence rate to scale well with \(T\).  By a truncation argument (see the proof of Theorem \ref{lem:approx_g}), we can show that our estimation is almost equivalent to the nonparametric regression based on stationary and exponentially $\beta$-mixing sequences.  It is known that if a Markov chain is aperiodic and satisfies some drift condition in terms of a well-behaved non-negative measurable function, then it is geometrically ergodic, see \cite{chan1989note} and Chapter 15 of \cite{meyn1995markov} for a detailed characterization. 
The boundedness assumptions of $\mathcal{Q}$ and $\mathcal{G}$ in Assumptions \ref{assum:approx_g}(b) and \ref{assum:approx_g}(c) are used to simplify the proof and can  be relaxed by a careful truncation argument.  Similar assumptions are also adopted in \cite{liao2020off}. Assumption \ref{assum:approx_g}(d) specifies the complexity of the spaces. These entropy assumptions are satisfied for common functional classes, such as RKHS and Sobolev spaces \citep{hearst1998support,geer2000empirical,gyorfi2006distribution}.  Take the Sobolev spaces as an example, \(\alpha = d/(2q)\), where \(q\) is the number of continuous derivatives possessed by the functions in the corresponding space.
With Assumption \ref{assum:approx_g}, we obtain the following uniform finite-sample error bound.

\begin{theorem}
	\label{lem:approx_g}
	Suppose Assumption \ref{assum:approx_g} holds. For any $0 < \tau \leq 1/3$ and all sufficiently large $\Total$ {(i.e., either \(n\) or \(T\) is large enough)},   with probability at least \(1-2\delta - 1/(\Total)\), the following inequality holds for all \(B \in \mathcal{Q}\):
	\begin{multline}\label{eqn:approx_g_bound}
{\EE \left\{ \frac{1}{T}\sum_{t=0}^{T-1} \left[ \hat g^\pi(S_t, A_t; B)- \g (S_t, A_t; B) \right]^2  \right\}}
	 +   \norm{\hat g^\pi(\cdot, \cdot; B) - \g (\cdot, \cdot; B)  }_{\total}^2 + \mu J_{\mathcal{G}}^2 \{\hat g^\pi(\cdot,\cdot; B)\}\\
	 \lesssim \mu\{1 + J_{\mathcal{G}}^2(g^\pi_*(\cdot,\cdot; B)) + J_{\mathcal{Q}}^2(B)\} + \frac{1}{\Total \mu^{\frac{\alpha}{1- \tau(2+\alpha)}} }  + \frac{[\log(\max\{1/\delta, \Total\})]^{1/\tau}}{\Total},
	\end{multline}
	where the leading constant in the above inequality depends on \(Q_{\max}\), \(G_{\max}\), \(\kappa\), and \(\oldu{C_s}\).

	Further suppose \(J_{\mathcal{G}}^2(g^\pi_*(\cdot,\cdot; B))\) and \(J_{\mathcal{Q}}^2(B)\) are bounded. If we take \(\delta = (\Total)^{-1}\),  \(\tau = (1+\alpha)\log(\log (\Total))/(\alpha\log (\Total))\) and \(\mu \asymp  (\Total)^{-1/(1+\alpha)} (\log (\Total)) ^{(2+\alpha)/(1+\alpha)}\) 
	, then for a sufficiently large $\Total$, the following inequality holds for all \(B \in \mathcal{Q}\) with the probability at least \(1-1/(\Total)\):
	$$\EE \{\hat g^\pi (\cdot, \cdot; B) - \g (\cdot, \cdot; B)  \}^2 +  \norm{\hat g^\pi(\cdot, \cdot; B) - \g (\cdot, \cdot; B)  }_{\total}^2 \lesssim (\Total)^{-\frac{1}{1+\alpha}}(\log (\Total))^{\frac{2+\alpha}{1+\alpha}}.$$
\end{theorem}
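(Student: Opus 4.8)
The plan is to read \eqref{eqn:approx_g} as a penalized least-squares regression of the pseudo-response $Y_{i,t}(B) := \sum_{a'}\pi(a'\mid S_{i,t+1})B(S_{i,t+1},a')$ on the covariate $(S_{i,t},A_{i,t})$, whose population regression function is exactly $\g(\cdot,\cdot;B)$ by \eqref{eqn:gstar}. Writing $\xi_{i,t}(B) = Y_{i,t}(B) - \g(S_{i,t},A_{i,t};B)$, Assumption \ref{assum:approx_g}(b)--(c) give $|\xi_{i,t}(B)|\le Q_{\max}+G_{\max}$ and $\EE[\xi_{i,t}(B)\mid S_{i,t},A_{i,t}]=0$. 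First I would write the basic inequality from the optimality of $\hat g^\pi$: comparing its objective to that of $\g$ and cancelling the common $Y$-term yields, with $\Delta_B := \hat g^\pi(\cdot,\cdot;B)-\g(\cdot,\cdot;B)$,
\begin{align*}
\norm{\Delta_B}_{\total}^2 + \mu J_{\mathcal{G}}^2\{\hat g^\pi(\cdot,\cdot;B)\} \le \frac{2}{nT}\sum_{i,t}\xi_{i,t}(B)\,\Delta_B(S_{i,t},A_{i,t}) + \mu J_{\mathcal{G}}^2(\g(\cdot,\cdot;B)).
\end{align*}
Everything then reduces to a bound, uniform over $g\in\mathcal{G}$ and $B\in\mathcal{Q}$, on the weighted empirical process $(nT)^{-1}\sum_{i,t}\xi_{i,t}(B)\{g-\g\}(S_{i,t},A_{i,t})$, together with a one-sided comparison between $\norm{\cdot}_{\total}$ and the population quantity $\EE[T^{-1}\sum_t(\cdot)^2]$ on the left of \eqref{eqn:approx_g_bound}.

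The second step handles the temporal dependence. Using Assumption \ref{assum:approx_g}(a) and $\int\phi\,d\mathbb{G}_0 \le C_s$, I would argue by truncation that, up to an event of probability $O(1/nT)$ and a negligible bias, the non-stationary chain can be replaced by its stationary, geometrically $\beta$-mixing version (the source of the additive $1/(nT)$ in the stated probability). I would then apply a blocking / Berbee-coupling (Yu's independent-blocks) argument to couple the $\beta$-mixing block sums with independent copies; since the mixing coefficients decay like $\kappa^{q}$, block gaps of order $\log(nT)$ make the coupling error $O(1/nT)$. The free parameter $\tau\in(0,1/3]$ indexes the block length (equivalently, the number of near-independent blocks $\asymp(nT)^{1-\tau}$), and it is precisely this choice that produces the two competing terms: a variance term whose complexity exponent is inflated from the iid value $\alpha$ to $\alpha/(1-\tau(2+\alpha))$, and a deviation term $[\log(\max\{1/\delta,nT\})]^{1/\tau}/(nT)$ coming from the tail of the truncated block sums.

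Third, conditional on the independent-block reduction, I would control the weighted empirical process by chaining and peeling. The key inputs are the entropy bounds of Assumption \ref{assum:approx_g}(d): applied to $\mathcal{G}_M$ they control the index $g$, and applied to $\mathcal{Q}_M$ they deliver uniformity over $B$ (the dependence on $B$ is linear through $\xi_{i,t}(B)$, hence Lipschitz in $\|\cdot\|_\infty$, so the product class retains the same $2\alpha$ entropy order). A Dudley integral with $\log\mathcal{N}(\epsilon)\lesssim(M/\epsilon)^{2\alpha}$ and $\alpha\in(0,1)$ gives a critical radius of order $(nT)^{-1/(2(1+\alpha))}$; peeling over dyadic scales of $J_{\mathcal{G}}(\hat g^\pi)$, $J_{\mathcal{Q}}(B)$ and $\norm{\Delta_B}$ lets me bound the process by a small multiple of $\norm{\Delta_B}_{\total}^2 + \mu J_{\mathcal{G}}^2(\hat g^\pi)$ plus the residual variance and deviation terms. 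Reinserting into the basic inequality and absorbing the $\norm{\Delta_B}_{\total}^2$ and $\mu J_{\mathcal{G}}^2(\hat g^\pi)$ pieces into the left-hand side yields \eqref{eqn:approx_g_bound}, with the leading constant depending only on $Q_{\max},G_{\max},\kappa,C_s$.

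The main obstacle, I expect, is obtaining a single Bernstein/Talagrand-type inequality for the coupled block sums that is simultaneously uniform over the infinite index set $\mathcal{G}\times\mathcal{Q}$, sharp enough in the variance to attain the exponent $\alpha/(1-\tau(2+\alpha))$ rather than something larger, and explicit enough in its tail to produce the $[\log]^{1/\tau}$ dependence --- all while the ``noise'' $\xi_{i,t}(B)$ itself varies with $B$. Once \eqref{eqn:approx_g_bound} is established, the rate statement is a direct optimization: taking $\delta=(nT)^{-1}$ and $\tau=(1+\alpha)\log\log(nT)/(\alpha\log(nT))$ forces $[\log(nT)]^{1/\tau}=(nT)^{\alpha/(1+\alpha)}$ and drives $\alpha/(1-\tau(2+\alpha))\to\alpha$, so with $\mu\asymp(nT)^{-1/(1+\alpha)}(\log nT)^{(2+\alpha)/(1+\alpha)}$ each of the three right-hand terms collapses to the common order $(nT)^{-1/(1+\alpha)}(\log nT)^{(2+\alpha)/(1+\alpha)}$, giving the claimed bound.
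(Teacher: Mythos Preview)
Your proposal is essentially the same route as the paper's: an optimality-based inequality, a burn-in truncation of order $\log(nT)$ to reduce non-stationarity to the stationary $\beta$-mixing case (the source of the $1/(nT)$ probability loss), an independent-block reduction with block length governed by the exponent $\tau$, and a peeling argument over the joint complexity of $\mathcal{G}$ and $\mathcal{Q}$ using the entropy in Assumption~\ref{assum:approx_g}(d).

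One difference worth flagging: the paper does \emph{not} linearize as you do. Instead of bounding the multiplier process $(nT)^{-1}\sum_{i,t}\xi_{i,t}(B)\,\Delta_B(S_{i,t},A_{i,t})$, it works with the excess-loss function
\[
f^\pi(g_1,g_2,B)(S_t,A_t,S_{t+1}) = (g_2-g_1)(S_t,A_t)\bigl\{3\delta_t^\pi(B) - 2g_2(S_t,A_t) - g_1(S_t,A_t)\bigr\},
\]
and splits the target quantity into $I_1(B)+I_2(B)$, where $I_1$ is controlled directly by optimality and $I_2$ is a \emph{ratio-type} deviation $(\EE-\mathbb{P}_N)f^\pi/\bigl(\EE f^\pi + 2^\ell t\bigr)$. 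The advantage is that the Bernstein condition $\EE[(f^\pi)^2]\le K_2\,\EE[f^\pi]$ is immediate from boundedness, so the paper can quote off-the-shelf relative-deviation inequalities (Theorem~19.3 of Gy\"orfi et~al.\ for the burn-in block, and Theorem~4 of Farahmand~2012 for the stationary $\beta$-mixing part). This delivers both the population $L_2$ bound and the empirical norm in one shot, without your separate ``one-sided comparison'' step, and sidesteps the obstacle you anticipated of engineering a bespoke Talagrand-type inequality for the $B$-indexed multiplier process. Your linearized approach would work, but the paper's excess-loss route is shorter precisely because it leverages existing ratio-type results whose hypotheses are verified in two lines.
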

Next, we adapt Theorem \ref{lem:approx_g} to our case. 
{More specifically, we consider the setting when 
	\(\Gscr\) in Assumption \ref{assum:approx_g}(c) is  an RKHS.} 
Due to space limitations, we list the corresponding assumption in the Supplementary Material (See Assumption \ref{assum:RKHS} 
in \supp{\ref{sec:thm_projection}}
Then we have the following corollary. 
\begin{corollary}
	\label{cor:KRR}
	Suppose Assumption \ref{assum:RKHS}  
	(in the Supplementary Material) hold, and \(\hat g^\pi(\cdot, \cdot; B_k)\), $k=1,\dots,K$, are defined by \eqref{eqn:approx_g} with the same tuning parameter $\mu$.  
	If \(\mu \asymp  (\Total)^{-1/(1+\alpha)} (\log (\Total)) ^{(2+\alpha)/(1+\alpha)}\), then
	for any \({\bm \upsilon = (\upsilon_k)_{k=1}^K }\in \mathbb{R}^K\) such that  \(\sum_{k=1}^K \upsilon_k B_k \in \mathcal{Q}\), $J_{\mathcal{Q}}(\sum_{k=1}^K \upsilon_k B_k) < \infty$ 
	and  \( g^\pi (\cdot, \cdot;\sum_{k=1}^K \upsilon_k B_k) \in \mathcal{G}\) (where $K$ is allowed to grow with \(\Total\)) and a sufficiently large $\Total$, the following inequality holds with the probability at least \(1-1/(\Total)\):
	\begin{multline*}
	\EE \left\{ \sum_{k=1}^K {\upsilon_k} \hat g^\pi (\cdot, \cdot;  B_k) - \sum_{k=1}^K {\upsilon_k} \g (\cdot, \cdot;  B_k) \right\}^2  + \left\|\sum_{k=1}^K {\upsilon_k} \hat g^\pi(\cdot, \cdot; B_k) -\sum_{k=1}^K {\upsilon_k} \g (\cdot, \cdot;B_k) \right\|_{\total}^2 \\
	\lesssim  (\Total)^{-\frac{1}{1+\alpha}}(\log (\Total))^{\frac{2+\alpha}{1+\alpha}}.
	\end{multline*}
\end{corollary}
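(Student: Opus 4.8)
The plan is to reduce the corollary to a single application of Theorem \ref{lem:approx_g} by exploiting the \emph{linearity} of both the population projection \(\g(\cdot,\cdot;B)\) and its estimator \(\hat g^\pi(\cdot,\cdot;B)\) in the argument \(B\). First I would observe that the regression response in \eqref{eqn:approx_g}, namely \(Y_{i,t}(B) := \sum_{a'}\pi(a'\mid S_{i,t+1})B(S_{i,t+1},a')\), is linear in \(B\), so \(Y_{i,t}(\sum_k \upsilon_k B_k) = \sum_k \upsilon_k Y_{i,t}(B_k)\). Likewise, since \(\g(S_t,A_t;B) = \EE\{Y_{i,t}(B)\mid S_t,A_t\}\) is a conditional expectation, it is linear in \(B\), giving \(\g(\cdot,\cdot;\sum_k \upsilon_k B_k) = \sum_k \upsilon_k \g(\cdot,\cdot;B_k)\).

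Next I would establish the corresponding identity for the estimator. Under Assumption \ref{assum:RKHS}, \(\Gscr\) is (a subset of) an RKHS and \(J_{\mathcal{G}}\) is the RKHS norm, so \eqref{eqn:approx_g} is a kernel ridge regression. By the representer theorem its minimizer is \(\hat g^\pi(\cdot,\cdot;B) = \sum_{i,t} c_{i,t}(B)\,\kappa((S_{i,t},A_{i,t}),\cdot)\) with coefficient vector \(c(B) = (\mathbf{K} + nT\mu I)^{-1} Y(B)\), where \(\mathbf{K}\) is the \(nT\times nT\) Gram matrix of the design \(\{(S_{i,t},A_{i,t})\}\). Because all the \(\hat g^\pi(\cdot,\cdot;B_k)\) share the same design points and the same tuning parameter \(\mu\), the smoother matrix \((\mathbf{K}+nT\mu I)^{-1}\) is identical across \(k\); combined with the linearity of \(B\mapsto Y(B)\) and the strict convexity (hence uniqueness) of the kernel ridge objective, this yields the exact identity \(\hat g^\pi(\cdot,\cdot;\sum_k \upsilon_k B_k) = \sum_k \upsilon_k \hat g^\pi(\cdot,\cdot;B_k)\).

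With these two identities, the left-hand side of the claimed inequality equals
\begin{equation*}
\EE\{\hat g^\pi(\cdot,\cdot;B) - \g(\cdot,\cdot;B)\}^2 + \norm{\hat g^\pi(\cdot,\cdot;B) - \g(\cdot,\cdot;B)}_{\total}^2, \qquad B := \textstyle\sum_{k=1}^K \upsilon_k B_k,
\end{equation*}
so the problem collapses to bounding the projection error for the \emph{single} function \(B\). The hypotheses \(\sum_k \upsilon_k B_k \in \mathcal{Q}\), \(J_{\mathcal{Q}}(B) < \infty\), and \(\g(\cdot,\cdot;B) \in \Gscr\) are exactly the admissibility and boundedness requirements needed to invoke the second (rate) part of Theorem \ref{lem:approx_g} with the prescribed schedule \(\mu \asymp (\Total)^{-1/(1+\alpha)}(\log \Total)^{(2+\alpha)/(1+\alpha)}\), which delivers the stated rate \((\Total)^{-1/(1+\alpha)}(\log \Total)^{(2+\alpha)/(1+\alpha)}\). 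Crucially, since Theorem \ref{lem:approx_g} holds \emph{uniformly} over \(B \in \mathcal{Q}\) on a single high-probability event of probability at least \(1-1/(\Total)\), the bound is obtained simultaneously for every admissible \(\bm\upsilon\), with no union bound required.

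I expect the main obstacle to be making the linearity identity for \(\hat g^\pi\) fully rigorous: one must verify that the representer-theorem solution is \emph{exactly} linear in the response and, in particular, that this is what licenses passing the sum \(\sum_k \upsilon_k\) through the nonlinear estimation operator. One should also check that the admissibility conditions keep \(J_{\mathcal{G}}(\g(\cdot,\cdot;B))\) and \(J_{\mathcal{Q}}(B)\) bounded uniformly as \(K\) grows with \(\Total\); this is precisely what the hypotheses on \(\bm\upsilon\) secure for each fixed admissible choice, after which the uniformity over \(\mathcal{Q}\) in Theorem \ref{lem:approx_g} absorbs the growth of \(K\) through the covering-number condition of Assumption \ref{assum:RKHS}.
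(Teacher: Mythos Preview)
Your proposal is correct and follows essentially the same approach as the paper: establish the linearity identities \(\g(\cdot,\cdot;\sum_k\upsilon_k B_k)=\sum_k\upsilon_k\g(\cdot,\cdot;B_k)\) (by definition of the conditional expectation) and \(\hat g^\pi(\cdot,\cdot;\sum_k\upsilon_k B_k)=\sum_k\upsilon_k\hat g^\pi(\cdot,\cdot;B_k)\) (via the explicit kernel ridge regression solution \(\hat g^\pi(s,a;B_k)=\bm Y_k^\tp(\bm G+\mu\bm I)^{-1}\bm G_\kappa(s,a)\) with shared \(\mu\)), then reduce to a single application of Theorem \ref{lem:approx_g} at \(B=\sum_k\upsilon_k B_k\), using its uniformity over \(\mathcal{Q}\). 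The paper presents the steps in the reverse order but the argument is the same.
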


The proofs of Theorem \ref{lem:approx_g} and Corollary \ref{cor:KRR} can be found in 
\supp{\ref{sec:proof_approx_g}}.
From Theorem \ref{lem:approx_g}, by carefully choosing \(\mu\) and \(\tau\), we can achieve the optimal nonparametric convergence rate that holds uniformly for all \(B \in \mathcal{Q}\) up to a logarithmic factor, compared to the i.i.d. setting. More importantly, Theorem \ref{lem:approx_g} does not require  the initial distribution to be the stationary distribution, which can be unrealistic in practice but is often required in most existing results such as \cite{farahmand2012regularized}. Thus our result is broadly applicable. 
Accordingly, Corollary \ref{cor:KRR} provides a tight bound (in the scaling of both $n$ and $T$) on the proposed approximate projection step \eqref{eqn:approx_g}, which leads to an accurate estimation of the target state-action balancing condition for the construction of the proposed weights. 

\subsection{Convergence Rates of Balancing Weights}
\label{sec:weights}

With Theorem \ref{lem:approx_g} and Corollary \ref{cor:KRR}, we now derive the convergence of the proposed weights $\hat \omega^\pi$.
Define
\[\zeta_{\total} = \left( (\Total)^{-\frac{1}{1+\alpha}}(\log (\Total))^{\frac{2+\alpha}{1+\alpha}} \right)^{1/2}.\]
For any square and symmetric matrix $\bm{A}$, $\lambda_{\max}(\bm{A})$ and $\lambda_{\min}(\bm{A})$ represents the maximum and minimum eigenvalues respectively. And we use \(\|\cdot\|_2\) 	to denote the Euclidean norm of a vector. 
We will need the following assumption.
\begin{assumption}
	\label{assum:weights}
	The following conditions hold.
	\begin{enumerate}
		\item  There exist \(r_1 >1/2\) and \(\blam^*\in \mathbb{R}^K\) such that the true weight function 
		\(\omega^\pi(s,a)\) satisfies 
		\(\sup_{s,a} |\omega^\pi(s,a) - \rho'\{\bm L_K(s,a)^\tp \blam^*\}| \leq \newu\ltxlabel{weightsbound} K^{-r_1}\), where \(\oldu{weightsbound}>0\) is a constant. Also, there exists a constant \(\newu\ltxlabel{weightsC1} > 0\) such that \( \sup_{s,a}|\omega^\pi(s,a)| \leq  \oldu{weightsC1}\). 
		\item  The second derivative of $\rho$ defined in Theorem \ref{thm:dual}, i.e., 
		\(\rho''\), is a positive and  continuous function and  \( \rho'' \ge \newu\ltxlabel{weightssecond}\) for some constant \(\oldu{weightssecond}>0\). 
		\item There exist constants \(\newu\ltxlabel{weightsC2}>0\) and \(\newu\ltxlabel{weightsC4}>0\) such that
		\(
		\sup_{s,a}\|\bm B_K(s,a)\|_2 \leq \oldu{weightsC2} K^{1/2},\)
		and for every $T\geq 0$,
		{
		\[\left\{  
			\begin{array}{l}
					\max\left\{	\lambda_{\max}\left\{ \EE\left[\frac{1}{T}\sum_{t=0}^{T-1}\bm L_K(S_t,A_t) \bm L_K(S_t,A_t)^\tp\right] \right\}, \EE_{(S,A) \sim \mathbb{G}^*} \left[ \frac{\bar{p}^b_T(S,A)}{p^*(S,A)} \right] ^2\right\}	\leq \oldu{weightsC4}\\
			\lambda_{\max}\left\{ \EE_{(S,A) \sim \mathbb{G^*}}\left[\bm L_K(S,A) \bm L_K(S,A)^\tp\right] \right\} \leq \oldu{weightsC4}.
			\end{array}
		\right.
			\]}
		\item 
		There exists a quantity \(\psi(K) > 0\) depending on \(K\) such that
		\[
			\lambda_{\min}\left\{\EE\left[\frac{1}{T}\sum_{t=0}^{T-1}\bm L_K(S_t,A_t) \bm L_K(S_t,A_t)^\tp\right]\right\}\ge \psi(K).
		\]
		\item 
		\(\sum_{k=1}^K \blam^\ast_{k} B_k \in \mathcal{Q}\)
		, \(g^\pi_*(\cdot,\cdot; \sum_{k=1}^K \blam^\ast_{k} B_k) \in \mathcal{G}\), where \(\mathcal{Q}\) and \(\mathcal{G}\) are function classes considered in Corollary \ref{cor:KRR} (also see 
		\supp{\ref{sec:proof_approx_g}}). 
		In addition, the same tuning parameter \(\mu \asymp (\Total)^{-1/(1+\alpha)} (\log (\Total)) ^{(2+\alpha)/(1+\alpha)}\) is adopted when estimating \( g^\pi_* (\cdot, \cdot;  B_k)\), \(k = 1,\dots,K\). 
		For any vector \(\bm a\) that satisfies \(\|\sum_{k=1}^K \alpha_k B_k\|_\infty \leq Q_{\max} \), we have  \(\sum_{k=1}^K \alpha_k B_k \in \mathcal{Q}\) and \(g(\cdot,\cdot; \sum_{k=1}^K \alpha_k B_k) \in \mathcal{G}\).\\
		\(K  = \smallO(\zeta_{\total}^{-2})\).
		\item  $\|\bm \delta_K\|_2 = \bigO\left[ \{\psi(K)\}^{-1}\left(  {\sqrt{K}\log (\Total)}/{\sqrt {\Total}} + K^{-r_1}+ \sqrt{K}\zeta_{\total}\right) \right]$.
	\end{enumerate}
\end{assumption}
 Assumption \ref{assum:weights}(a) specifies the requirement on the uniform approximation error of the true weight function \(\omega^\pi\).
 The boundedness of \(\omega^\pi\) can be guaranteed if the average visitation density $\bar p_T^b$ is bounded away from 0 and the policy-induced discounted visitation probability $d^\pi$ is bounded above.
 This overlapping condition is commonly assumed in the literature of the ATE  and RL \citep[e.g.][]{Wong-Chan18,uehara2020minimax,uehara2021finite}.
 For Assumption \ref{assum:weights}(b) on $\rho''$,  by the relationship between $\rho$ and $h$ detailed in Theorem \ref{thm:dual}, many convex choices of \(h\) adopted in \eqref{eqn:optim2} will result in a $\rho$ that satisfies this condition,
   e.g., \(h(x) = (x-1)^2\).
 For Assumption \ref{assum:weights}(c), the uniform bound for \(\|\bm B_K\|_2\) is a mild technical condition on the basis functions \(B_k\), \(k = 1,\dots,K\). It is satisfied by many classes of basis functions, including the regression spline, the trigonometric polynomial, and the wavelet basis \citep{newey1997convergence, horowitz2004nonparametric, chen2007large, belloni2015some}. See the same assumption in \cite{fan2016improving} (Condition 6 of their Assumption 4.1). 
 As for the largest eigenvalue condition, we verify that such condition holds for common bases such as tensor-product B-spline basis and tensor-product wavelet basis in Lemma \ref{lem:eigenvalues} 
of the Supplementary Material.  
 As for Assumption \ref{assum:weights}(d), \(\psi(K)\) is allowed to depend on \(K\) in our analysis as opposed to the existing study such as \cite{shi2020statistical}. 
 \revise{While we allow \(\psi(K)\) to diminish as \(K\) grows, we later show that \(\psi(K)\) can be  strictly bounded below if we are willing to make  further relatively minor assumptions. See detailed discussion in Section \ref{sec:mineigen}.}
 Assumption \ref{assum:weights}(e) is a mild condition 
 requiring the best approximation $\sum_{k=1}^K \blam^\ast_{k} B_k$ in
	  \(\mathcal{Q}\), and its projection in \(\mathcal{G}\). 
Now we can show the convergence of the proposed balancing weights.

\begin{theorem}
	\label{thm:weights}
	Suppose Assumptions  \ref{assum:RKHS} 
	(in the Supplementary Material) and \ref{assum:weights} hold. If we further assume that \(\{\psi(K)\}^{-1}\sqrt{K}\zeta_{\total} = \smallO(1) \) as $\Total \rightarrow +\infty$,  then 
		\begin{multline*}
			\max\left\{ \left\|\hat{\omega}^\pi -\omega^\pi\right\|_{\total}, 
			{\EE \left\{ \frac{1}{T}\sum_{t=0}^{T-1} \left[ \hat \omega^\pi(S_t, A_t)- \omega^\pi (S_t, A_t) \right]^2  \right\}^{1/2}}
			\right\}= \\
				\bigOp\left[ \{\psi(K)\}^{-1}\left(  \frac{\sqrt{K}\log (\Total)}{\sqrt {\Total}} + K^{-r_1}+ \sqrt{K}\zeta_{\total}\right) \right] = \smallOp(1).
			\end{multline*}

\end{theorem}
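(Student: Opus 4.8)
The plan is to read the dual problem \eqref{eqn:dual} as a penalized $M$-estimation for $\blam$, obtain a rate for $\bm\lambda^+ - \blam^*$ (with $\blam^*$ the population parameter furnished by Assumption \ref{assum:weights}(a)), and then transfer that rate to the weights through the Lipschitz map $\rho'$ via the representation \eqref{eqn:solution_weights2}. Writing $F_{\total}(\blam) = (nT)^{-1}\sum_{i,t}\rho(\hat{\bm L}_K(S_{i,t},A_{i,t})^\tp\blam) - \blam^\tp\bm l_K$ for the smooth part of the dual objective, the starting point is the basic inequality $F_{\total}(\bm\lambda^+) + |\bm\lambda^+|^\tp\bm\delta_K \le F_{\total}(\blam^*) + |\blam^*|^\tp\bm\delta_K$ combined with a second-order expansion of $F_{\total}$. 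Because $\rho'' \ge \oldu{weightssecond} > 0$ by Assumption \ref{assum:weights}(b), the Hessian of $F_{\total}$ dominates $\oldu{weightssecond}$ times the empirical second-moment matrix $(nT)^{-1}\sum_{i,t}\hat{\bm L}_K\hat{\bm L}_K^\tp$, so the first task is to certify that this random matrix has smallest eigenvalue of order $\psi(K)$.

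First I would establish the curvature. The population matrix $\EE[T^{-1}\sum_t \bm L_K\bm L_K^\tp]$ has smallest eigenvalue at least $\psi(K)$ by Assumption \ref{assum:weights}(d). The gap between it and the empirical matrix built from $\hat{\bm L}_K$ splits into (i) the replacement of $\bm L_K$ by $\hat{\bm L}_K$, whose operator-norm effect is $\bigOp(\sqrt K\,\zeta_{\total})$ after writing $\hat{\bm L}_K\hat{\bm L}_K^\tp - \bm L_K\bm L_K^\tp$ in terms of $\hat{\bm L}_K-\bm L_K$ and applying Corollary \ref{cor:KRR} coordinatewise (the factor $\sqrt K$ coming from Cauchy--Schwarz over the $K$ coordinates), and (ii) the empirical-to-population deviation for the trajectory, which by the exponential $\beta$-mixing reduction used in the proof of Theorem \ref{lem:approx_g} and $\sup_{s,a}\|\bm B_K(s,a)\|_2\lesssim\sqrt K$ contributes a term of order $\sqrt K\log(\Total)/\sqrt{\Total}$. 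Under $K = \smallO(\zeta_{\total}^{-2})$ from Assumption \ref{assum:weights}(e) and $\{\psi(K)\}^{-1}\sqrt K\zeta_{\total}=\smallO(1)$, both deviations are $\smallOp(\psi(K))$, so with high probability the smallest eigenvalue stays $\gtrsim\psi(K)$, yielding the local strong convexity $F_{\total}(\blam) - F_{\total}(\blam^*) - \innerprod{\nabla F_{\total}(\blam^*)}{\blam - \blam^*} \gtrsim \psi(K)\|\blam - \blam^*\|_2^2$.

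Next, and this is the crux, I would bound the gradient $\nabla F_{\total}(\blam^*)$, decomposing it into four pieces. The deterministic piece is the population gradient with the true $\bm L_K$: since $\bm L_K(s,a) = \bm B_K(s,a) - \gamma\,\bm g^\pi_*(s,a)$ gives $\EE[T^{-1}\sum_t \omega^\pi\bm L_K] = \bm l_K$ exactly (the population form of \eqref{eq: Estimating Equation for ratio}), this piece equals $\EE[T^{-1}\sum_t(\rho'(\bm L_K^\tp\blam^*) - \omega^\pi)\bm L_K]$, which a Cauchy--Schwarz bound combined with the bounded largest eigenvalue of $\EE[T^{-1}\sum_t\bm L_K\bm L_K^\tp]$ in Assumption \ref{assum:weights}(c) and the uniform error $K^{-r_1}$ in \ref{assum:weights}(a) renders $\bigO(K^{-r_1})$. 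The stochastic piece is the centered empirical average of $\rho'(\bm L_K^\tp\blam^*)\bm L_K$; using $\beta$-mixing concentration (a blocking/truncation argument as in Theorem \ref{lem:approx_g}) and $\|\bm B_K\|_2\lesssim\sqrt K$, its Euclidean norm is $\bigOp(\sqrt K\log(\Total)/\sqrt{\Total})$. The surrogate piece, arising from replacing $\bm L_K$ by $\hat{\bm L}_K$ both inside $\rho'$ and in the outer factor, is controlled by $\sqrt K\,\|\hat g^\pi(\cdot,\cdot;\sum_k\lambda^*_k B_k) - g^\pi_*(\cdot,\cdot;\sum_k\lambda^*_k B_k)\|_{\total} = \bigOp(\sqrt K\zeta_{\total})$ via Corollary \ref{cor:KRR}, where the $\sqrt K$ stems from $\|\hat{\bm L}_K\|_2\lesssim\sqrt K$. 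Finally, the penalty subgradient has Euclidean norm at most $\|\bm\delta_K\|_2$, which Assumption \ref{assum:weights}(f) calibrates to the target order so as not to inflate the rate. Collecting these, $\|\nabla F_{\total}(\blam^*)\|_2 = \bigOp(\sqrt K\log(\Total)/\sqrt{\Total} + K^{-r_1} + \sqrt K\zeta_{\total})$.

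Combining the curvature lower bound with the basic inequality yields $\|\bm\lambda^+ - \blam^*\|_2 = \bigOp[\{\psi(K)\}^{-1}(\sqrt K\log(\Total)/\sqrt{\Total} + K^{-r_1} + \sqrt K\zeta_{\total})]$. To pass to the weights I would write
\begin{multline*}
\hat\omega^\pi - \omega^\pi = [\rho'(\hat{\bm L}_K^\tp\bm\lambda^+) - \rho'(\bm L_K^\tp\bm\lambda^+)] \\ + [\rho'(\bm L_K^\tp\bm\lambda^+) - \rho'(\bm L_K^\tp\blam^*)] + [\rho'(\bm L_K^\tp\blam^*) - \omega^\pi],
\end{multline*}
and bound the three brackets using the Lipschitz continuity of $\rho'$: the first equals (up to constants) $\gamma\|\hat g^\pi(\cdot,\cdot;\sum_k\lambda^+_k B_k) - g^\pi_*(\cdot,\cdot;\sum_k\lambda^+_k B_k)\| = \bigOp(\zeta_{\total})$ by a \emph{single} application of Corollary \ref{cor:KRR} with $\bm\upsilon = \bm\lambda^+$ (membership granted by Assumption \ref{assum:weights}(e)), so here no $\sqrt K$ appears since it is one scalar function difference; the second bracket is at most $\lambda_{\max}^{1/2}\{(nT)^{-1}\sum\bm L_K\bm L_K^\tp\}\,\|\bm\lambda^+ - \blam^*\|_2 = \bigOp(\|\bm\lambda^+ - \blam^*\|_2)$ by the bounded largest eigenvalue in Assumption \ref{assum:weights}(c); and the third is $\bigO(K^{-r_1})$ by \ref{assum:weights}(a). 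The second bracket dominates and reproduces the stated rate in both $\|\cdot\|_{\total}$ and the expected $L_2$ norm, with the final $\smallOp(1)$ following from $\{\psi(K)\}^{-1}\sqrt K\zeta_{\total} = \smallO(1)$ and Assumption \ref{assum:weights}(f). The hard part will be the intertwined control in the curvature step and the surrogate piece of the gradient: since $\hat{\bm L}_K$ is estimated from the same trajectory that indexes the sums, it cannot be treated as fixed, and the argument must simultaneously lean on the \emph{uniform} projection guarantee of Corollary \ref{cor:KRR} and on $\beta$-mixing matrix concentration to keep the smallest eigenvalue from collapsing below $\psi(K)$.
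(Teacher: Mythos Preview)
Your proposal is correct and follows essentially the same route as the paper's proof: the paper also localizes $\blam^+$ around $\blam^*$ via convexity of the dual objective, bounds the first-order term through the same four contributions (their \eqref{eqn:t1}--\eqref{eqn:t4} correspond to your deterministic, stochastic and surrogate pieces, with Lemma~\ref{lem:weights_concen} supplying the $\beta$-mixing vector concentration and Lemma~\ref{lem:maxeigen} the matrix concentration), lower-bounds the curvature by $\psi(K)$ via the same two-step replacement $\hat{\bm L}_K\to\bm L_K\to$ population, and then transfers the rate to the weights via the Lipschitz map $\rho'$ with the identical three-term splitting. The only cosmetic difference is that the paper phrases localization as ``$G(\blam_*+\Delta)-G(\blam_*)>0$ on a sphere'' rather than ``basic inequality plus gradient/curvature,'' and in the final bracket they (like you) invoke Corollary~\ref{cor:KRR} at $\blam^+$ via Assumption~\ref{assum:weights}(e) --- strictly speaking this uses the further split $(\hat{\bm L}_K-\bm L_K)^\tp\blam^+ = (\hat{\bm L}_K-\bm L_K)^\tp\blam^* + (\hat{\bm L}_K-\bm L_K)^\tp(\blam^+-\blam^*)$ together with $K=\smallO(\zeta_{\total}^{-2})$, but that is routine and both treatments are at the same level of detail.
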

The proof of Theorem \ref{thm:weights} can be found in 
\supp{\ref{sec:proof_weights}}.  
Theorem \ref{thm:weights} gives the convergence rate of the proposed balancing weights in terms of both the empirical  $\fltwo$
 and the $\fltwo$ metrics. Note that the rate implies that as long as either $n$ or $T$ goes to infinity,
the proposed weight estimates converge to the true ratio functions. To the best of our knowledge, this is the first consistent result of the ratio function in the scaling of both $n$ and $T$ under the discounted infinite-horizon setting.

\subsection{Estimation Error and Statistical Efficiency}
\label{sec:efficiency}
Based on Theorems \ref{lem:approx_g} and \ref{thm:weights},
we can derive an error bound for \(\hat \calV(\pi)\). 

	\begin{assumption}
		\label{assum:additional}
		Suppose the following conditions hold.
		\begin{enumerate}
			\item  There exists a \(\bm \beta \in \mathbb{R}^K\)  such that
			\(
				\sup_{s,a} |\Delta_Q(s,a)| = \sup_{s,a}|Q^\pi(s,a) - \bm B_K(s,a)^\tp \bm \beta| \leq \newu\ltxlabel{Qconst1} K^{-r_2}
				\)
				 for some constants \(r_2 > 1/2\) and \(\oldu{Qconst1}>0\).
				 Also, \(\sum_{k=1}^K  \bm\beta_kB_k \in \mathcal{Q}\),  \(g^*(\cdot, \cdot; \sum_{k=1}^K  \bm\beta_kB_k) \in \mathcal{G}\), where \(\mathcal{Q}\) and \(\mathcal{G}\) are function classes considered in Corollary \ref{cor:KRR}. (Also see \supp{\ref{sec:proof_approx_g}}). 
			\item The errors \(\epsilon_{i,t} := R_{i, t} - \EE(R_{i,t} \mid S_{i,t}, A_{i,t})\), \(i = 1,\dots, n, t = 0, \dots, T-1\), are independent mean-zero subgaussian random variables. 
		\end{enumerate}
	\end{assumption}

	Assumption \ref{assum:additional}(a) is a regularity condition for \(Q^\pi\). 
	It is satisfied by letting \(r_2 = q_Q/d\), where \(q_{Q}\) is the maximum number of continuous derivatives for \(Q^\pi\) with respect to \(S\) among all action levels, and choosing $B_K$ as basis functions such as splines and power series if \(Q^\pi\) is defined over a compact domain.
	Assumption \ref{assum:additional}(b) is a mild condition for the error of the reward. 
	{In fact, this assumption can be relaxed to allow errors themselves to be dependent. 
	Note that the estimated weight function \(\hat \omega^\pi\) only depends on \(\{S_{i,t}, A_{i,t}: i = 1,\dots,n, t = 0,\dots, T-1\}\), which is  independent of \(\{\epsilon_{i,t}: i = 1,\dots,n, t = 0,\dots, T-1 \}\).
	The proof of the convergence of \(\hat {\mathcal{V}}(\pi)\) stated in the following theorems is based on a conditioning argument that separates the effects of the weights and the errors.
	Standard techniques that deal with weighted averages of dependent random variables can be adopted to extend the current results (e.g., Theorem \ref{thm:upper_bound}) to those dependent settings.
	For instance, If there are some weak autocorrelations among $\epsilon_{i,t}$, we are still able to obtain results in Theorems \ref{thm:upper_bound} and \ref{thm:asymp}. 
	} 
\begin{theorem}
	\label{thm:upper_bound}
	Suppose Assumptions  \ref{assum:RKHS} 
	(in the Supplementary Material), \ref{assum:weights} and \ref{assum:additional} and hold. Furthermore, assume \(\{\psi(K)\}^{-1}\sqrt{K}\zeta_{\total} = \smallO(1) \). 
	Then we have 
	\begin{multline*}
	\left| \hat \calV(\pi) - \calV(\pi)\right| = \bigOp\left\{ {\|\bm \beta \circ \bm \delta_{K}\|_1}  
	 \right.\\
	\left. +  \{\psi(K)\}^{-1}\left( K^{-r_1} \zeta_{\total} + K^{1/2-r_2}\zeta_{\total} + K^{-r_1-r_2} + \sqrt{K}\zeta^2_{\total}  \right)+ \zeta_{\total}\right\} \nonumber,
	\end{multline*}
where $\circ $ refers to the element-wise product between two vectors.
\end{theorem}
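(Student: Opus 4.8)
The plan is to start from the estimator \eqref{eqn: final estimator}, invoke the Bellman equation \eqref{eq: Bellman equation for Q} in the form $r(s,a)=Q^\pi(s,a)-\gamma g_*^\pi(s,a;Q^\pi)$, and write $R_{i,t}=r(S_{i,t},A_{i,t})+\epsilon_{i,t}$. Combined with the representation $\calV(\pi)=(1-\gamma)\EE_{S_0\sim\mathbb{G}}[\sum_{a'}\pi(a'\mid S_0)Q^\pi(S_0,a')]$ from \eqref{eq: value function}, this decomposes $\hat\calV(\pi)-\calV(\pi)$ into a bias term
\[
\mathrm{(I)}=\frac{1}{nT}\sum_{i,t}\hat\omega^\pi_{i,t}\bigl[Q^\pi(S_{i,t},A_{i,t})-\gamma g_*^\pi(S_{i,t},A_{i,t};Q^\pi)\bigr]-(1-\gamma)\EE_{S_0\sim\mathbb{G}}\Bigl[\sum_{a'}\pi(a'\mid S_0)Q^\pi(S_0,a')\Bigr]
\]
plus a noise term $\mathrm{(II)}=(nT)^{-1}\sum_{i,t}\hat\omega^\pi_{i,t}\epsilon_{i,t}$. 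Using Assumption \ref{assum:additional}(a) I then split $Q^\pi=\sum_k\beta_k B_k+\Delta_Q$ with $\|\Delta_Q\|_\infty\lesssim K^{-r_2}$; since $g_*^\pi(\cdot,\cdot;\cdot)$ is linear in its last argument, this splits $\mathrm{(I)}$ into a ``span'' part $\mathrm{(I.a)}$ built from $\sum_k\beta_k B_k$ and a ``remainder'' part $\mathrm{(I.b)}$ built from $\Delta_Q$.

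For $\mathrm{(I.a)}$ I would replace each $g_*^\pi(\cdot,\cdot;B_k)$ by the approximate projection $\hat g^\pi(\cdot,\cdot;B_k)$. The piece written with $\hat g^\pi$ is exactly $\sum_k\beta_k$ times the balancing residual appearing in the constraint \eqref{eqn:cons}, hence bounded by $\sum_k|\beta_k|\delta_k=\|\bm\beta\circ\bm\delta_K\|_1$, which produces the first term of the stated bound. The leftover $\gamma(nT)^{-1}\sum_{i,t}\hat\omega^\pi_{i,t}\sum_k\beta_k(\hat g^\pi-g_*^\pi)(S_{i,t},A_{i,t};B_k)$ I control by writing $\hat\omega^\pi=(\hat\omega^\pi-\omega^\pi)+\omega^\pi$ and applying Cauchy--Schwarz to each piece. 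The factor $\|\sum_k\beta_k(\hat g^\pi-g_*^\pi)(\cdot,\cdot;B_k)\|_{\total}\lesssim\zeta_{\total}$ by Corollary \ref{cor:KRR} (taking $\upsilon_k=\beta_k$); pairing it with $\|\hat\omega^\pi-\omega^\pi\|_{\total}$ from Theorem \ref{thm:weights} yields $\{\psi(K)\}^{-1}(K^{-r_1}\zeta_{\total}+\sqrt{K}\zeta^2_{\total})$ once one absorbs $\log(\Total)/\sqrt{\Total}\lesssim\zeta_{\total}$, while pairing it with $\|\omega^\pi\|_{\total}=\bigOp(1)$ gives a contribution of order $\zeta_{\total}$.

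For $\mathrm{(I.b)}$ I exploit that the \emph{true} weight satisfies the population moment identity \eqref{eq: Estimating Equation for ratio}. Setting $f=\Delta_Q$ there shows that the population mean of $\omega^\pi(S_t,A_t)L_\Delta(S_t,A_t)$, with $L_\Delta:=\Delta_Q-\gamma g_*^\pi(\cdot,\cdot;\Delta_Q)$, equals precisely the $(1-\gamma)\EE_{S_0}[\cdots]$ term subtracted inside $\mathrm{(I.b)}$. Thus $\mathrm{(I.b)}=(nT)^{-1}\sum_{i,t}\hat\omega^\pi_{i,t}L_\Delta(S_{i,t},A_{i,t})$ minus that mean, and I split $\hat\omega^\pi=(\hat\omega^\pi-\omega^\pi)+\omega^\pi$ once more. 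The first piece is $\le\|\hat\omega^\pi-\omega^\pi\|_{\total}\|L_\Delta\|_{\total}$, and since $\|L_\Delta\|_\infty\lesssim K^{-r_2}$ (the conditional expectation $g_*^\pi$ is a sup-norm contraction), Theorem \ref{thm:weights} delivers $\{\psi(K)\}^{-1}(K^{-r_1-r_2}+K^{1/2-r_2}\zeta_{\total})$. The second piece is a centered average of the bounded (by $\lesssim K^{-r_2}$) function $\omega^\pi L_\Delta$ over a geometrically ergodic chain, so a Bernstein-type inequality for exponentially $\beta$-mixing sequences (Assumption \ref{assum:approx_g}(a)) bounds it by $\bigOp(K^{-r_2}/\sqrt{\Total})=\bigOp(\zeta_{\total})$.

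Finally, for $\mathrm{(II)}$ I condition on $\{(S_{i,t},A_{i,t})\}$: by Assumption \ref{assum:additional}(b) the $\epsilon_{i,t}$ are independent, mean-zero, subgaussian and independent of the state--action process on which $\hat\omega^\pi$ alone depends, so the conditional variance is $(nT)^{-1}\|\hat\omega^\pi\|_{\total}^2\sigma^2$, giving $\mathrm{(II)}=\bigOp(\|\hat\omega^\pi\|_{\total}/\sqrt{nT})=\bigOp(\zeta_{\total})$ via $\|\hat\omega^\pi\|_{\total}=\bigOp(1)$ and $1/\sqrt{\Total}\lesssim\zeta_{\total}$. Summing the pieces yields the claimed rate. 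The main obstacle is the bookkeeping in the middle two paragraphs: because the weights are only approximately correct, each Cauchy--Schwarz must be arranged so that the ``error $\times$ error'' cross terms (the source of the $\{\psi(K)\}^{-1}$-scaled contributions) stay of the stated order, and one must verify that the nominally separate factors $\log(\Total)/\sqrt{\Total}$ and $1/\sqrt{\Total}$ are dominated by $\zeta_{\total}$ so they fold into the final $\zeta_{\total}$ term; establishing $\|\hat\omega^\pi\|_{\total}=\bigOp(1)$ and $\|L_\Delta\|_{\total}\lesssim K^{-r_2}$ along the way is routine given Theorem \ref{thm:weights} and the contraction property of $g_*^\pi$.
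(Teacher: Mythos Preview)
Your proposal is correct and follows essentially the same decomposition as the paper, which proves Theorem~\ref{thm:upper_bound} by specializing the arguments in the proof of Theorem~\ref{thm:asymp}: your pieces $\mathrm{(I.a)}$, $\mathrm{(I.b)}$, $\mathrm{(II)}$ correspond respectively to the paper's terms (I)+(II)+(III), (IV)+(V), and (VI), and the bounding tools (the balancing constraint, Cauchy--Schwarz against $\|\hat\omega^\pi-\omega^\pi\|_{\total}$ via Theorem~\ref{thm:weights}, Corollary~\ref{cor:KRR}, and a concentration bound for the centered $\omega^\pi L_\Delta$ average) are the same.

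The one noteworthy difference is your handling of the $\omega^\pi\times(\hat g^\pi-g_*^\pi)$ cross term inside $\mathrm{(I.a)}$: you bound it directly by Cauchy--Schwarz as $\|\omega^\pi\|_{\total}\cdot\bigOp(\zeta_{\total})=\bigOp(\zeta_{\total})$, whereas the paper (in its proof of Theorem~\ref{thm:asymp}, term (II)) exploits the first-order optimality condition of the kernel ridge regression \eqref{eqn:approx_g} to rewrite this term as a martingale average plus remainders. Your shortcut is perfectly adequate here because Theorem~\ref{thm:upper_bound} only asks for an $\bigOp(\zeta_{\total})$ contribution from this piece; the paper's more delicate route is needed only for Theorem~\ref{thm:asymp}, where one must identify the exact leading $\sqrt{nT}$-scaled term to obtain asymptotic normality and efficiency.
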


Theorem \ref{thm:upper_bound} can be proved by the similar arguments in the proof of Theorem \ref{thm:asymp}.
To obtain the sharp convergence rate, based on this theorem,
one need to tune $K$ accordingly.
As such, the eigenvalue bound $\psi(K)$ specified in Assumption \ref{assum:weights}(d) becomes crucial. 
\revise{Indeed, we can show that \(\psi(K)\) is strictly lower bounded by a positive constant independent of \(K\) under some mild conditions. We defer the detailed discussion about characterizing \(\psi(K)\) in Section \ref{sec:mineigen}. 
In the following, we establish the bounds when $\psi(K)$ is bounded below. }

\begin{theorem}
	\label{thm:asymp}
	Suppose Assumptions  \ref{assum:RKHS} 
	(in the Supplementary Material), \ref{assum:weights}(a)--(g) and \ref{assum:additional} hold. Also, assume that \(\psi(K) \ge \newu\ltxlabel{weightsC3}\) for some constant \(\oldu{weightsC3}>0\). %
	\begin{enumerate}[label = (\roman*)]
		\item If \(
{\|\bm \beta \circ \bm \delta_{K}\|_1} = \bigOp(\zeta_{\total})\) and \(K^{-1} = \bigOp(\zeta_{\total}^{1/(r_1 + r_2)})\) ,  we have 
        \begin{align}
			\label{eqn:upper_bound2}
		\left| \hat \calV(\pi) - \calV(\pi)\right|= \bigOp(\zeta_{\total}).
		\end{align}
		In particular, we can take \(K \asymp \xi_{\total}^{-a}\) for any $a$ such that \(1/(r_1 + r_2) < a< 2 \).
		\item Assume \(
	{\|\bm \beta\circ \bm \delta_{K}\|_1}= \bigOp((\Total)^{-1/2})\), \(r_1 \ge 1\), \(r_2 >1\), \(K = \smallO((\Total)^{-1} \xi_{\total}^{-4})\),  \(K^{-1} = \smallO( (\Total)^{-1/(2r_1)}\zeta_{\total}^{-1/r_1})\),   \(K^{-1} = \smallO( (\Total)^{-1/(2r_2 - 1)}\zeta_{\total}^{-1/(r_2-1/2)})\), \(K^{-r_1-r_2} = \smallOp((\Total)^{-1/2})\), and \(\omega^\pi \in \mathcal{G}\).
		Take \[\sigma^2 = \frac{1}{T} \sum_{t=0}^{T-1}\EE \left\{\omega^\pi(S_{t},A_{t})\left(R_t + \gamma\sum_{a' \in \calA}\pi(a' \mid S_{t+1}) Q^\pi(S_{t+1}, a') - Q^\pi(S_t, A_t)\right)\right\}^2.\]
		Then, as either \(n\rightarrow \infty\) or \(T \rightarrow \infty\), we have
		\begin{align}
			\label{eqn:sigma2}
			\frac{\sqrt{nT}}{\sigma} \left\{ \hat \calV(\pi) - \calV(\pi)\right\}  \xrightarrow[]{d} N(0,1). 
		\end{align}
		In particular, if \(r_1 = 1\), then we can take \(K \asymp (\Total)^{a}\) for any \(a\) such that \(1/(6r_2 - 3) < a < 1/3\) so that the above results hold. 
	\end{enumerate}

\end{theorem}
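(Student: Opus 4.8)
\textbf{Part (i)} follows by specializing the general bound of Theorem~\ref{thm:upper_bound}. With $\psi(K)$ bounded below by a positive constant I would substitute the part-(i) hypotheses into that bound and check that every summand is $\bigOp(\zeta_{\total})$: the slack term $\|\bm\beta\circ\bm\delta_K\|_1=\bigOp(\zeta_{\total})$ is assumed; $K^{-r_1}\zeta_{\total}$ and $K^{1/2-r_2}\zeta_{\total}$ are at most $\zeta_{\total}$ since $K\ge1$, $r_1>1/2$ and $r_2>1/2$; $K^{-r_1-r_2}=\bigOp(\zeta_{\total})$ by raising $K^{-1}=\bigOp(\zeta_{\total}^{1/(r_1+r_2)})$ to the power $r_1+r_2$; and $\sqrt{K}\,\zeta_{\total}^2\lesssim\zeta_{\total}$ because $K=\smallO(\zeta_{\total}^{-2})$. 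The binding term is the projection error $\zeta_{\total}$ itself, giving $|\hat\calV(\pi)-\calV(\pi)|=\bigOp(\zeta_{\total})$; the admissible exponent range for $K$ is exactly the intersection of the lower bound $K^{-1}=\bigOp(\zeta_{\total}^{1/(r_1+r_2)})$ and the upper bound $K=\smallO(\zeta_{\total}^{-2})$.

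\textbf{Part (ii)} is where the real work lies, and the plan is to establish the asymptotically linear expansion $\hat\calV(\pi)-\calV(\pi)=(\Total)^{-1}\sum_{i=1}^n\sum_{t=0}^{T-1}\omega^\pi(S_{i,t},A_{i,t})\,\eta_{i,t}+R_{\total}$, where $\eta_{i,t}:=R_{i,t}+\gamma\sum_{a'\in\calA}\pi(a'\mid S_{i,t+1})Q^\pi(S_{i,t+1},a')-Q^\pi(S_{i,t},A_{i,t})$ is the temporal-difference residual and $R_{\total}=\smallOp((\Total)^{-1/2})$. First I would write $R_{i,t}=r(S_{i,t},A_{i,t})+\epsilon_{i,t}$, approximate $Q^\pi$ by $f_\beta=\sum_k\beta_kB_k$ with $\|Q^\pi-f_\beta\|_\infty\lesssim K^{-r_2}$ (Assumption~\ref{assum:additional}(a)), and expand the estimator using the Bellman identity \eqref{eq: Bellman equation for Q} and the representation \eqref{eq: value function}. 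Writing $B_k(S_{i,t},A_{i,t})-\gamma\sum_{a'}\pi(a'\mid S_{i,t+1})B_k(S_{i,t+1},a')=\hat L_k(S_{i,t},A_{i,t})+\gamma[\hat g^\pi(S_{i,t},A_{i,t};B_k)-\sum_{a'}\pi(a'\mid S_{i,t+1})B_k(S_{i,t+1},a')]$ lets me invoke the balancing constraint \eqref{eqn:cons} on the $\hat L_k$-part (contributing at most $\|\bm\beta\circ\bm\delta_K\|_1$), while the next-state fluctuations generated by $\eta_{i,t}$ and by the $f_\beta$-expansion cancel up to a remainder driven by $Q^\pi-f_\beta$.

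The crux, and the step I expect to be the main obstacle, is the projection-error term $\gamma(\Total)^{-1}\sum_{i,t}\hat\omega^\pi_{i,t}[\hat g^\pi(S_{i,t},A_{i,t};f_\beta)-\g(S_{i,t},A_{i,t};f_\beta)]$. The crude bound from Corollary~\ref{cor:KRR} gives only $\bigOp(\zeta_{\total})$, which is larger than $(\Total)^{-1/2}$ and hence useless for a central limit theorem, so this term cannot simply be discarded. Instead I would exploit the first-order optimality (normal equations) of the kernel ridge regression \eqref{eqn:approx_g} together with the hypothesis $\omega^\pi\in\mathcal{G}$ to show that, after replacing $\hat\omega^\pi$ by $\omega^\pi$, this term equals $\gamma(\Total)^{-1}\sum_{i,t}\omega^\pi(S_{i,t},A_{i,t})\,\xi^{f_\beta}_{i,t}+\smallOp((\Total)^{-1/2})$, where $\xi^{f_\beta}_{i,t}=\sum_{a'}\pi(a'\mid S_{i,t+1})f_\beta(S_{i,t+1},a')-\g(S_{i,t},A_{i,t};f_\beta)$ is the next-state fluctuation. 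This is exactly the piece needed to reconstitute $\gamma\omega^\pi\xi^{Q^\pi}$ and hence the full residual $\eta_{i,t}=\epsilon_{i,t}+\gamma\xi^{Q^\pi}_{i,t}$; in other words, the nuisance-estimation error is not negligible but supplies precisely the transition-variance component of the efficiency bound \eqref{eq: EIF}. Making this orthogonality argument rigorous—controlling the difference between $\hat g^\pi-\g$ and the in-sample projection of $\xi^{f_\beta}$, uniformly over the growing basis and under $\beta$-mixing—is the delicate part.

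It then remains to collect the lower-order remainders into $R_{\total}=\smallOp((\Total)^{-1/2})$ and to apply a martingale central limit theorem. The deterministic-bias pieces ($K^{-r_1-r_2}$, the $Q$-approximation contribution $K^{1/2-r_2}\zeta_{\total}$, the weight-approximation contribution $K^{-r_1}\zeta_{\total}$, the higher-order $\sqrt K\,\zeta_{\total}^2$, and the slack $\|\bm\beta\circ\bm\delta_K\|_1$) are all shown to be $\smallOp((\Total)^{-1/2})$ using the part-(ii) restrictions on $K$, $r_1$, $r_2$ and $\bm\delta_K$, which are calibrated exactly so that each term beats $(\Total)^{-1/2}$. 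The weight-estimation cross terms split by a conditioning argument: since $\hat\omega^\pi$ depends on the data only through $\{S_{i,t},A_{i,t},S_{i,t+1}\}$ and is independent of $\{\epsilon_{i,t}\}$ (Assumption~\ref{assum:additional}(b)), $(\Total)^{-1}\sum_{i,t}(\hat\omega^\pi-\omega^\pi)\epsilon_{i,t}=\bigOp((\Total)^{-1/2}\|\hat\omega^\pi-\omega^\pi\|_{\total})=\smallOp((\Total)^{-1/2})$ by Theorem~\ref{thm:weights}, while the fluctuation cross terms require a localized empirical-process bound over the class of candidate weights (controlled via the entropy/RKHS complexity and the rate of Theorem~\ref{thm:weights}). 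Finally, because $\EE[\eta_{i,t}\mid S_{i,t},A_{i,t}]=0$ by the Bellman equation, the array $\{\omega^\pi(S_{i,t},A_{i,t})\eta_{i,t}\}$ is a martingale-difference array; I would verify the conditional-variance condition $(\Total)^{-1}\sum_{i,t}\EE[(\omega^\pi\eta_{i,t})^2\mid\calF_{i,t}]\goesip\sigma^2$ (via ergodicity, Assumption~\ref{assum:approx_g}(a), so that it holds whether $n$ or $T$ diverges) and a Lindeberg condition (from boundedness of $\omega^\pi$ and the subgaussian tails of $\eta_{i,t}$), and then conclude $\sqrt{\Total}\,\{\hat\calV(\pi)-\calV(\pi)\}/\sigma\xrightarrow[]{d}N(0,1)$.
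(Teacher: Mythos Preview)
Your proposal is correct and follows essentially the same route as the paper: the same six-term decomposition (balancing slack, projection error with true weights, projection error with weight-estimation error, $\Delta_Q$ terms, and the noise term), the same crucial use of the first-order optimality condition of the kernel ridge regression together with $\omega^\pi\in\mathcal{G}$ to turn the $\bigOp(\zeta_{\total})$ projection error into the transition-fluctuation contribution $\gamma(\Total)^{-1}\sum_{i,t}\omega^\pi_{i,t}e_{i,t}$, and the same martingale CLT to finish. The only minor difference is that the paper dispatches the weight-estimation cross terms (your ``fluctuation cross terms'') by direct Cauchy--Schwarz and Freedman's inequality rather than a localized empirical-process argument, which is simpler but otherwise equivalent in effect.
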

The proof of this theorem can be found in 
\supp{\ref{sec:proof_asymp}}.
Note that Theorem \ref{thm:asymp} requires \(\psi(K)\) to be lower bounded by a positive constant, which can be satisfied under the conditions in \revise{Assumption \ref{assum:mineign}.} 
The constraints for \(\bm \delta_K\) in Theorem \ref{thm:asymp}(i) and (ii) are stronger than that in Assumption \ref{assum:weights}, which lead to the final desired \(\zeta_{\total}\)-consistency and \(\sqrt{\Total}\)-consistency of our weighted estimator in cases (i) and (ii) respectively. 
Compared with case (i), the constraints for  \(K\) in case (ii) are more restrictive so that the bias is asymptotically negligible and thus \(\hat \calV(\pi)\) is $\sqrt{\Total}$-consistent. When \(r_1 \ge 1\) and \(r_2 >1\), the existence of \(K\) is guaranteed. The additional assumption of \(\omega^\pi\), i.e., $\omega^\pi \in \calG$, is a mild assumption, which allows the bias term to diminish asymptotically.

If $T$ is fixed, one can show that $\sigma^2/T$ is indeed the semi-parametric efficiency bound in the standard i.i.d. setting. When both $n$ and $T$ are allowed to go to infinity,  $\sigma^2$ becomes
$
\EE^* \{\omega^\pi(S,A)(R_t + \gamma\sum_{a' \in \calA}\pi(a' \mid S') Q^\pi(S', a') - Q^\pi(S, A))\}^2,
$
where $\EE^*$
denotes the expectation with respect to the stationary measure induced by the behavior policy, i.e., $\mathbb{G}^*$. As shown in \cite{kallus2019efficiently}, $\sigma^2$ is also the statistical efficiency bound in the notion of \cite{komunjer2010semiparametric}. Note that our results do not require that the data come from stationary distribution, which is however needed in \cite{kallus2019efficiently}. 

{Finally, we remark that to the best of our knowledge, only two prior works establish the convergence rates of the policy value estimators under some non-parametric models in the scaling of both \(n\) and \(T\). 
One is \cite{shi2020statistical}, 
which directly estimates \(Q\)-function.
The underlying analysis does not require the stationary assumption for the data generating process.
However, they did not show that their estimator can achieve the statistical efficiency. In addition, their conditions require that the initial distribution of the data is bounded away from zero, which we do not require.
The other one is \cite{kallus2019efficiently}.
This work shows the convergence of their estimator when \(T \rightarrow \infty\). In their theoretical results, stationary assumption is needed. In order to obtain the efficiency, this work requires the adoption of cross-fitting (sample splitting) for the Q-function and ratio (weight) function estimations. However, it is unclear how to perform an efficient cross-fitting due to the existence of temporal dependence among the batch data.}

\subsection{\revise{Lower boundedness for the minimal eigenvalue}}
\label{sec:mineigen}

{In this section, we discuss \(\psi(K)\) defined in Assumption \ref{assum:weights}(d) in details. For the notational simplicity, we take \(\overline \pr\) and \(\bar \EE\) as the probability and expectation  with respect to the average visitation distribution \(\bar{p}_T^b\).
We also write  \[d^\pi(s',a'\mid s, a) = \sum_{t=0}^\infty \gamma^t p_t^\pi(s',a'\mid S_0=s, A_0 = a)\]  as the conditional discounted visitation probability.  Define the operator \(\mathcal{P}^\pi : \mathcal{S} \times \mathcal{A} \rightarrow \mathcal{S} \times \mathcal{A}\) by 
\[(\mathcal{P}^\pi f)(s,a) = \bar{\EE} \left[ \sum_{a'\in \mathcal{A}} f(S',a') \pi(a'\mid S')  \mid S = s, A= a  \right]\]
for any function \(f: \mathcal{S} \times \mathcal{A} \rightarrow \mathbb{R} \), and denote \(I: \R^{\mathcal{S} \times \mathcal{A}} \rightarrow \R^{\mathcal{S} \times \mathcal{A}}\) as the identity operator.
}

{In Section C.1 of \cite{shi2020statistical}, a sufficient condition for the lower boundedness of \(\psi(K)\) is provided. They argue that, under some boundedness conditions on the average visitation probability, the minimal eigenvalue is lower bounded by a constant independent of \(K\) as long as $\gamma$ is small enough.
(In fact, we show that this sufficient condition can be further relaxed; 
see Corollary \ref{cor:mineigen} for details.) However, in practice, we may not know the distance between the target policy and the behavior policy in advance, or be able to choose a reasonably small $\gamma$ that reflects the desired emphasis of the long-term rewards.
More importantly, choosing $\gamma$ close to $1$ is often preferred in many applications as we discussed before.
Despite its importance, the theoretical property of
non-parametric OPE under this setting is largely uncharted territory
in the current state of the literature.
As a result, it is important to understand the behavior of $\psi(K)$ for \textit{any} $0 \leq \gamma < 1$ and target policy $\pi$.

In the following, we focus on the general operator \(I - \gamma \mathcal{P}^\pi\) and study the (squared) minimal eigenvalue of it:
\begin{align}
	\label{eqn:Upsilon}
	\Upsilon : = \inf_{\{f:\bar{\EE} f^2(S,A) \ge 1\}}	\left[{\bar{\EE}\left\{ (I-\gamma\mathcal{P}^\pi)f(S,A)  \right\}^2}\right].
\end{align}
To see the relationship between \(\Upsilon\) and \(\psi(K)\), we can take \(f(s,a) = \bm B_K^\tp(s,a)\bm \alpha\) for some \(\bm \alpha \in \mathbb{R}^K\) in \eqref{eqn:Upsilon}. Then we have
\begin{align*}
	\psi(K) & = \inf_{\| \bm \alpha\|_2 = 1}	\left[{\bar{\EE}\left\{ (I-\gamma\mathcal{P}^\pi)\bm B_K^\tp (S,A) \bm \alpha  \right\}^2}\right] \\
	& = \inf_{\| \bm \alpha\|_2 = 1} \frac{\bar{\EE}\left\{ (I-\gamma\mathcal{P}^\pi)\bm B_K^\tp (S,A) \bm \alpha  \right\}^2}{\bar{\EE}\left\{ \bm B_K^\tp (S,A)\bm \alpha \right\}^2} {\bar{\EE}\left\{ \bm B_K^\tp (S,A)\bm \alpha \right\}^2}. \\
	& \ge \Upsilon \lambda_{\min} \left\{ \bar{\EE} \bm B_K (S,A)  \bm B^\tp_K (S,A)\right\}.
\end{align*}
In Theorem \ref{thm:mineign}, we propose a necessary and sufficient condition for bounding the minimal eigenvalue of the operator \(I - \gamma \mathcal{P}^{\pi}\) that works for any \(\pi\) and \(\gamma\). When combined with the standard eigenvalue condition on the basis $\bm{B}_K$, it allows a comprehensive characterization of the lower boundedness of $\psi(K)$.}

\begin{theorem}
	\label{thm:mineign}
	\(\Upsilon\) 
	is lower bounded by a positive constant if and only if 
	\[\chi := \sup_{\{f: \bar{\EE} f^2(S,A) \le 1\}} {\bar{\EE} \left[ \EE_{(S',A') \sim d^\pi(\cdot,\cdot \mid S, A)}\left\{ f(S',A') \right\} \right]^2} < + \infty. \]
	In this case, $\Upsilon = (1-\gamma)^2/\chi$.

\end{theorem}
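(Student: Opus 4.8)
The plan is to recognize $\Upsilon$ and $\chi$ as reciprocal spectral quantities of the operator $A := I - \gamma\mathcal{P}^\pi$ on the Hilbert space $L^2(\bar{p}_T^b)$, equipped with $\langle f,g\rangle = \bar{\EE}[f(S,A)g(S,A)]$ and $\|f\| = (\bar{\EE} f^2)^{1/2}$. First I would rewrite both objects intrinsically. Since $\|Af\|^2$ and $\|f\|^2$ are homogeneous of degree two, the constraint $\bar{\EE} f^2 \ge 1$ may be replaced by $\|f\|=1$, so $\Upsilon = \inf_{\|f\|=1}\|Af\|^2$ is the squared modulus of injectivity of $A$. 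For $\chi$, the crucial identification is that the discounted-visitation operator $\mathcal{D}^\pi f(s,a) := \EE_{(S',A')\sim d^\pi(\cdot,\cdot\mid s,a)}\{f(S',A')\} = (1-\gamma)\sum_{t\ge 0}\gamma^t(\mathcal{P}^\pi)^t f(s,a)$, where the factor $1-\gamma$ normalizes $d^\pi(\cdot,\cdot\mid s,a)$ to a probability; this is precisely where the $(1-\gamma)^2$ in the conclusion originates. Hence $\chi = \sup_{\|f\|=1}\|\mathcal{D}^\pi f\|^2 = \|\mathcal{D}^\pi\|_{\op}^2$ whenever finite. Writing $D := (1-\gamma)^{-1}\mathcal{D}^\pi = \sum_{t\ge 0}\gamma^t(\mathcal{P}^\pi)^t$, a telescoping computation gives the pointwise identities $ADf = DAf = f$ (valid since the tail $\gamma^{t}(\mathcal{P}^\pi)^t f\to 0$ pointwise for bounded $f$), exhibiting $D$ as the a priori only pointwise-defined inverse of $A$; I would also record that $\mathcal{P}^\pi$, and thus $D$, is positivity-preserving and that $\|(1-\gamma)Df\|_\infty \le \|f\|_\infty$.

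The sufficiency direction ($\chi<\infty \Rightarrow \Upsilon = (1-\gamma)^2/\chi$) is then a clean reciprocity argument. If $\chi<\infty$ then $\mathcal{D}^\pi$ is bounded with $\|\mathcal{D}^\pi\|_{\op}=\chi^{1/2}$, so $D$ is bounded and the pointwise identities lift to $L^2$, making $A$ boundedly invertible with $A^{-1}=D$. For any $f$ with $\|f\|=1$, $\|Af\| \ge \|A^{-1}\|_{\op}^{-1}\|f\| = (1-\gamma)/\chi^{1/2}$, giving $\Upsilon\ge(1-\gamma)^2/\chi$; conversely, feeding $f=Dg$ for unit $g$ into the infimum yields $\Upsilon^{1/2}\le 1/\|D\|_{\op} = (1-\gamma)/\chi^{1/2}$, so equality holds.

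The necessity direction ($\Upsilon>0 \Rightarrow \chi<\infty$) is where the real work lies, because $A$ being bounded below a priori gives only injectivity with closed range, not surjectivity, so one cannot immediately assert that the pointwise inverse $D$ is $L^2$-bounded. I would sidestep the surjectivity/adjoint route using positivity. For bounded $f$, $Df$ is bounded (by $\|f\|_\infty/(1-\gamma)$), hence lies in $L^2$, and $A(Df)=f$ holds in $L^2$; applying the lower bound $\|Ah\|\ge\Upsilon^{1/2}\|h\|$ with $h=Df$ gives $\|Df\|\le\Upsilon^{-1/2}\|f\|$ for every bounded $f$. To pass to all $f\in L^2$ I would exploit that $D$ maps nonnegative functions to nonnegative functions and that $D(f\wedge M)\uparrow Df$ monotonically as $M\to\infty$: monotone convergence yields $\|Df^+\|\le\Upsilon^{-1/2}\|f^+\|$ and likewise for $f^-$, so $\|Df\|\le\|D|f|\,\|\le\Upsilon^{-1/2}\|f\|$. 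Thus $\|D\|_{\op}\le\Upsilon^{-1/2}<\infty$, whence $\chi=(1-\gamma)^2\|D\|_{\op}^2<\infty$, and the sufficiency direction then upgrades this to the exact identity $\Upsilon=(1-\gamma)^2/\chi$.

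The main obstacle is exactly this necessity step: showing that the formal Neumann-series inverse is a genuinely bounded $L^2$ operator given only that $A$ is bounded below. The subtlety is that $\mathcal{P}^\pi$ need not be an $L^2$-contraction—its operator norm can exceed one, since $\bar{p}_T^b$ is not $\mathcal{P}^\pi$-invariant—so operator-norm convergence of $\sum_t\gamma^t(\mathcal{P}^\pi)^t$ is unavailable and a direct spectral/continuation argument fails; the positivity-plus-monotone-convergence truncation is what makes the extension from bounded test functions to all of $L^2$ go through without extra assumptions. Throughout, I would be careful to distinguish the pointwise action of $D$ and $A$ from their $L^2$ realizations, justifying that the telescoping identities and the probability normalization of $d^\pi(\cdot,\cdot\mid s,a)$ hold first pointwise and then lift to $L^2$ by the density of bounded functions.
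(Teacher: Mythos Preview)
Your approach is correct and rests on the same core identity as the paper: the telescoping relation showing that $D=\sum_{t\ge 0}\gamma^t(\mathcal{P}^\pi)^t$ is the (pointwise) inverse of $A=I-\gamma\mathcal{P}^\pi$, which is exactly the paper's equation $f(s,a)=(1-\gamma)^{-1}\EE_{(S',A')\sim d^\pi(\cdot,\cdot\mid s,a)}[(I-\gamma\mathcal{P}^\pi)f(S',A')]$. The sufficiency direction in both arguments is identical up to phrasing.

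Where you diverge is in the necessity direction, and your version is more careful. The paper argues that $\Upsilon>0$ makes the null space of $A$ trivial and then asserts without further justification that for any $f$ there exists $\tilde f$ with $f=(I-\gamma\mathcal{P}^\pi)\tilde f$; this is surjectivity, which does not follow from injectivity in infinite dimensions. You instead observe that for bounded $f$ the explicit preimage $\tilde f=Df$ is itself bounded (hence in $L^2$), apply the lower bound $\|A(Df)\|\ge\Upsilon^{1/2}\|Df\|$ to get $\|Df\|\le\Upsilon^{-1/2}\|f\|$ on bounded functions, and then extend to all of $L^2$ via the positivity of $D$ and monotone convergence. This extra step is genuinely needed and is the right way to close the gap; it exploits a structural feature (positivity of the discounted-visitation operator) that the paper's bare functional-analytic framing does not invoke. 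Substantively the two proofs are the same argument, but your handling of the passage from bounded test functions to $L^2$ is what makes it rigorous.
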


Theorem \ref{thm:mineign} allows us to avoid directly analyzing the minimal eigenvalue of \(I-\mathcal{P}^\pi\), but to study the upper bound of \(\chi\) instead, which is much easier  to deal with.  We can view \(\chi\) as a criterion to examine the difficulty of OPE problems. When \(\chi\) gets larger, the OPE problem becomes more difficult. The value of \(\chi\) depends on many components, including the target policy \(\pi\), the behavior policy \(\pi_b\), the discount factor \(\gamma\) and the horizon of the observed data \(T\).  Next, we focus on the term \(\chi\) and provide some sufficient conditions for it to be upper bounded.  
Take 
\[
	{\bar{d}}^\pi (s',a') =\bar{\EE}\left[  d^\pi(s', a' \mid S, A) \right]. 
\]

\begin{assumption}
	\label{assum:mineign}
	Suppose the following conditions hold.
	\begin{enumerate}
		\item The average visitation probability (density) \(\bar{p}_{T}^b(s,a)\) 
		is lower bounded by a constant \(p_{\min}>0\) and upper bounded by a constant \(p_{\max,1}\).
		\item The transition probability under the target policy \( q(s',a'\mid s,a):=\pi(a'\mid s') p(s'\mid s,a)\)  is upper bounded by some constant \(p_{\max,2}\) for all \((s',a') \in \mathcal{S} \times \mathcal{A}\).
	\end{enumerate}
\end{assumption}

\begin{corollary}
	\label{cor:mineigen}
	If there exists a constant \(\oldu{cnt:denbound}\) such that 
	\begin{align}
		\label{eqn:denbound}
		\varrho : = \sup_{(s,a)\in \mathcal{S}\times \mathcal{A}} \frac{\bar{d}^\pi(s,a)}{\bar{p}_T^b(s,a)} \leq \newu\ltxlabel{cnt:denbound},
	\end{align}
	 then  we have 
	 \[\chi\leq \oldu{cnt:denbound}\quad \mbox{and} \quad \Upsilon \ge \frac{(1-\gamma)^2}{{\oldu{cnt:denbound}}}.\]
	In addition, 
		if we  assume   Assumption \ref{assum:mineign}, then we have
		\[\chi \leq \frac{p_{\max}}{p_{\min}}, \qquad \Upsilon \geq {(1-\gamma)^2} \frac{p_{\min}}{p_{\max}},\] 
	   where \(p_{\max} = \max\{p_{\max,1}, p_{\max,2}\}\).
Finally, \(\psi(K) \gtrsim {(1-\gamma)^2} \frac{p_{\min}}{p_{\max}}  \) under an additional condition that 
\begin{align}
	\label{eqn:basismineigen}
	\lambda_{\min}\left\{ \bar{\EE} \left[ \bm B_K(S,A) \bm B_K(S,A)^\tp)  \right]\right\} \gtrsim 1. 	
\end{align}

\end{corollary}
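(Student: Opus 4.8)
The plan is to reduce every claim to controlling the quantity $\chi$ from Theorem \ref{thm:mineign}, since that theorem already supplies the identity $\Upsilon = (1-\gamma)^2/\chi$; once $\chi$ is bounded, the lower bounds on $\Upsilon$ follow at once. First I would prove that the overlap condition $\varrho \le \oldu{cnt:denbound}$ implies $\chi \le \oldu{cnt:denbound}$. Fix any $f$ with $\bar{\EE} f^2(S,A) \le 1$. Since $d^\pi(\cdot,\cdot\mid s,a)$ is, after the discount normalization, a probability distribution on $\mathcal{S}\times\mathcal{A}$, Jensen's inequality gives $[\EE_{(S',A')\sim d^\pi(\cdot,\cdot\mid s,a)} f(S',A')]^2 \le \EE_{(S',A')\sim d^\pi(\cdot,\cdot\mid s,a)} f^2(S',A')$ pointwise in $(s,a)$. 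Taking $\bar{\EE}$ over $(S,A)$ and interchanging the two expectations by Tonelli's theorem introduces the mixed measure $\bar{d}^\pi$, namely $\bar{\EE}[\EE_{d^\pi(\cdot\mid S,A)} f^2] = \int f^2(s',a')\,\bar{d}^\pi(s',a')$. The overlap bound $\bar{d}^\pi \le \varrho\,\bar{p}_T^b \le \oldu{cnt:denbound}\,\bar{p}_T^b$ then turns this into $\oldu{cnt:denbound}\,\bar{\EE} f^2 \le \oldu{cnt:denbound}$, and a supremum over admissible $f$ yields $\chi \le \oldu{cnt:denbound}$; plugging into Theorem \ref{thm:mineign} gives $\Upsilon \ge (1-\gamma)^2/\oldu{cnt:denbound}$.

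Next I would verify that Assumption \ref{assum:mineign} forces $\varrho \le p_{\max}/p_{\min}$, which combined with the first part delivers $\chi \le p_{\max}/p_{\min}$ and $\Upsilon \ge (1-\gamma)^2 p_{\min}/p_{\max}$. The key is a pointwise upper bound on $\bar{d}^\pi(s',a') = \bar{\EE}[d^\pi(s',a'\mid S,A)]$. Writing $d^\pi(\cdot\mid s,a) = (1-\gamma)\sum_{t\ge 0}\gamma^t p_t^\pi(\cdot\mid s,a)$ and separating the $t=0$ atom, which integrates against $\bar{p}_T^b$ to reproduce $\bar{p}_T^b(s',a') \le p_{\max,1}$, from the tail, I would argue by induction that every multi-step marginal inherits the one-step bound: since $p_t^\pi(s',a'\mid s,a) = \int q(s',a'\mid \tilde s,\tilde a)\,p_{t-1}^\pi(\tilde s,\tilde a\mid s,a)$ with $q \le p_{\max,2}$ and $p_{t-1}^\pi(\cdot\mid s,a)$ of total mass one, one gets $p_t^\pi(\cdot\mid s,a)\le p_{\max,2}$ for every $t\ge 1$. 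Summing the geometric series with weights $(1-\gamma)\gamma^t$ then expresses $\bar{d}^\pi(s',a')$ as a convex combination of quantities bounded by $p_{\max,1}$ and $p_{\max,2}$, so $\bar{d}^\pi \le \max\{p_{\max,1},p_{\max,2}\} = p_{\max}$; dividing by $\bar{p}_T^b \ge p_{\min}$ gives $\varrho \le p_{\max}/p_{\min}$.

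Finally, for the bound on $\psi(K)$ I would simply invoke the inequality $\psi(K) \ge \Upsilon\,\lambda_{\min}\{\bar{\EE}[\bm B_K(S,A)\bm B_K(S,A)^\tp]\}$ already established just before Theorem \ref{thm:mineign} via the factorization of the Rayleigh quotient into an operator part bounded below by $\Upsilon$ and a Gram part; under the extra condition \eqref{eqn:basismineigen} the Gram eigenvalue is $\gtrsim 1$, so combining with $\Upsilon \ge (1-\gamma)^2 p_{\min}/p_{\max}$ yields $\psi(K) \gtrsim (1-\gamma)^2 p_{\min}/p_{\max}$ uniformly in $K$. I expect the main obstacle to be the density-propagation step in the second part: correctly handling the $t=0$ atom through the discount normalization, so that the expression remains a genuine convex combination rather than acquiring a stray $1/(1-\gamma)$ factor, and verifying that the single-step bound $q\le p_{\max,2}$ transfers to all higher-order transition marginals. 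Everything else is a direct application of Jensen's inequality, Tonelli's theorem, and the previously stated results.
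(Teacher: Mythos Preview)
Your proposal is correct and follows essentially the same route as the paper: Jensen's inequality plus Tonelli to reduce $\chi$ to an $\bar d^\pi$-expectation, then the density-ratio bound, and finally the factorization $\psi(K)\ge \Upsilon\,\lambda_{\min}\{\bar\EE[\bm B_K\bm B_K^\tp]\}$. Your density-propagation argument for $\bar d^\pi\le p_{\max}$ (separating the $t=0$ atom, pushing the one-step bound $q\le p_{\max,2}$ through all higher transitions, and recognizing the $(1-\gamma)\gamma^t$ weights as a convex combination) spells out in full what the paper asserts in one line; note, as you already flagged, that this step requires the \emph{normalized} discounted visitation measure $(1-\gamma)\sum_{t\ge 0}\gamma^t p_t^\pi$, consistent with how the paper uses $\EE_{(S',A')\sim d^\pi}$ in the proof despite writing the definition without the $(1-\gamma)$ factor.
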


In Corollary \ref{cor:mineigen}, \eqref{eqn:denbound} provides a sufficient condition for \(\chi\) to be upper bounded (or equivalently,  for \(\Upsilon\) to be lower bounded) that works for general settings without any restriction on \(\gamma\) or the target policy \(\pi\). 	
The second part of Corollary \ref{cor:mineigen} provides a specific characterization for \(\varrho\) under Assumption \ref{assum:mineign}, in which we  require   the average visitation probability \(\bar{p}_T^b\) to be lower and upper bounded. This coverage assumption is very common in RL literature \citep[e.g.][]{precup2000eligibility,kallus2019efficiently, shi2020statistical}. 
Compared to \cite{shi2020statistical}, we provide an explicit bound with respect to \(p_{\min}\) and \(p_{\max}\). More importantly,  we do not impose any further assumptions on the target policy \(\pi\) or the discount factor \(\gamma\),  in order to show the lower boundedness of  \(\psi(K)\). In other words, we show that such boundedness holds uniformly for any target policy \(\pi\) and \(\gamma\). We note that there is a parallel work regarding the well-poseness minimax optimal rates of nonparametric Q-function estimation in OPE \citep{chen2022well}. They provide a \textit{sufficient} condition for the lower boundedness of $\Upsilon$.
Their result is similar to the one specified in Corollary \ref{cor:mineigen},
and require similar boundedness conditions as in Assumption \ref{assum:mineign}.
See Assumption 4(a) and Theorem 1 in \cite{chen2022well} for more details.
Compared to their bound:  \[\Upsilon \gtrsim \frac{ (1-\gamma)^2p_{\min}}{p_{\max}\left( 1 + \gamma^2\frac{p_{\max}}{p_{\min}} \right)},\] 
Corollary \ref{cor:mineigen} provides a sharper dependence  with respect to $\gamma$, $p_{\min}$ and $p_{\max}$.
Apart from Corollary \ref{cor:mineigen}, we note that our general result in Theorem \ref{thm:mineign}
does not require any boundedness condition on visitation and transition probability as in Assumption \ref{assum:mineign} and \citep{chen2022well},
and, more importantly, provides a \textit{necessary and sufficient} condition for the boundedness of $\Upsilon$.

Finally, we remark that the additional assumption \eqref{eqn:basismineigen} is satisfied for common bases such as tensor-product B-spline basis and tensor-product wavelet basis. See Lemma \ref{lem:eigenvalues} of the Supplementary Material.

\section{Simulation Study}
\label{sec:sim}
We conduct a simulation study to investigate the finite-sample performance of the proposed estimator.
We adopt the similar simulation settings as in \cite{luckett2019estimating}, \cite{liao2020off} and \cite{shi2020statistical}.
Specifically, the data generative model is given as follows.
The state variables are two-dimensional, i.e., \(S_{i,t} = (S^{(1)}_{i,t}, S^{(2)}_{i,t})\) for $0 \leq t \leq T$ and $1 \leq i \leq n$,
while the action is binary, i.e., \(\calA =  \{0,1\}\).
The initial state follows the standard bivariate normal distribution.
The transition dynamics are given by \(S^{(1)}_{i,t+1} =(3/4) (2A_{i,t}-1)S^{(1)}_{i,t} + \epsilon^{(1)}_{i,t} \), and \(S^{(2)}_{i,t+1} =(3/4) (1-2A_{i,t})S^{(2)}_{i,t} + \epsilon^{(2)}_{i,t} \), where \(\epsilon^{(1)}_{i,t}\) and \(\epsilon^{(2)}_{i,t}\) are independent normal random variables with mean 0 and variance 0.25. The behavior policy independently follows a Bernoulli distribution with mean \(1/2\). The immediate reward \(R_{i,t}\) is defined as \(R_{i,t} = 2S^{(1)}_{i,t+1} + S^{(2)}_{i,t+1} - (1/4)(2A_{i,t}-1) \).
 We use the initial state distribution as the reference distribution \(\mathbb{G}\) and set \(\gamma\) to $0.9$. 
We evaluate the following four different target policies.
\begin{enumerate}
	\item \(\pi_1(a\mid s) = 1,\, a\in\mathcal{A}, s\in\mathcal{S}.\)
   This is the ``always-treat'' policy used in the simulation study of \cite{liao2020off}, where the chosen action is always 1, and does not depend on the state variable.
	\item
	\[
		\pi_2(a\mid s) = 	
		\begin{cases}
			1 & \text{if $s^{(1)}\leq 0$ and  $s^{(2)}\leq 0$,} \\
			0 & \text{otherwise,}
		  \end{cases},
      \quad a\in\mathcal{A}, s=(s^{(1)}, s^{(2)})\in\mathcal{S}.
	\]
	This policy is a discontinuous function with respect to the state variable. The same type of policy is used in the simulation study of \cite{shi2020statistical}. 
	\item \(\pi_3(a\mid s) = \exp\{-(s^{(1)} + s^{(2)})\}, \, a\in\mathcal{A}, s=(s^{(1)}, s^{(2)})\in\mathcal{S}.\)
	This policy is smooth with respect to the state variable.
	\item \(\pi_4(a\mid s) = 0.5, , a\in\mathcal{A}, s\in\mathcal{S}\). 
	This policy is the same as the behavior one. Note that the observed data only contain finite horizon of decision points, i.e., $T<\infty$,
  while the target here is the policy value under \textit{infinite} horizon.
\end{enumerate}
For each target policy, we consider four different combinations of $n$ and $T$, i.e., \( (n,T) = (40, 25),  (80, 25)\), \((40, 50)\) and \((80, 50)\). The true policy values \(\calV(\pi_k)\), \(k = 1,\dots,4\), were computed approximately by the Monte Carlo method.
Specifically, for every \(\pi_k\), we simulate \(\tilde{n} = 10^5\) independent trajectories of length \(\tilde T = 5000\), with initial states drawn from \(\mathbb{G}\).
Then we approximate \(\calV(\pi_k)\) by \( (1-\gamma){\tilde{n}}^{-1} \sum_{t=0}^{\tilde{T}-1} \gamma^t R_{i,t}\).

Due to space limitations, the implementation details of the proposed method (\proposed{}) are reported in 
\supp{\ref{sec:sim_implement}}. 
For comparison,  { we include the following estimators: (1) \vl{}: the estimator  from \cite{luckett2019estimating}; (2) \ql{}: the estimator from  \cite{shi2020statistical}; (3) \textsc{FQE}: the fitted Q-evaluation estimator developed in \citep*{le2019batch}  where regression problems are solved using random forest models; (4) \textsc{IS}: importance sampling estimator from \citep*{precup2000eligibility}; (5)  \textsc{ MINIMAX}: minimax weight learning method from \citep*{uehara2020minimax}; (6) \textsc{DR}:  double reinforcement learning method considered in \citep*{kallus2019double}. }  
As suggested by \cite{luckett2019estimating}, we implemented \vl{} with Gaussian basis functions since they offer the highest flexibility.
For \ql{}, we use the same basis function \(\bm B_K\) as in the proposed weighted estimator to estimate \(Q^\pi\). Note that, to compute the confidence interval for the proposed weighted estimator, we use the estimate of \(Q^\pi\) from \ql{}. Construction of our confidence interval can be found in Section 
\supp{\ref{sec:sim_implement}}. 
{The implementation details of \textsc{FQE}, \textsc{IS}, \textsc{MINIMAX} and \textsc{DR} can be found in \cite{shi2021deeply}.}

Table \ref{tab:40_50} shows the mean squared error (MSE) and median squared error (MeSE) of the above estimators, as well as the empirical coverage probabilities (ECP) and average length (AL) of their \(95\%\) confidence intervals, over 500 simulated data sets when \((n,T) = (40, 50)\). 
The results for \((n,T) = (40,25)\), \((n,T) = (80,25)\) and \((n,T) = (80,50)\) can be found in 
\supp{\ref{sec:sim_more}}.
 Overall, the proposed estimator (\proposed{}) shows competitive  performances. { Other weighted estimators such as \textsc{IS} and \textsc{MINIMAX} suffer from instability issues and produce inferior results compared with other methods. Also, \textsc{DR} is influenced by the unstable weights from IS and does not perform well in these settings.}

Specifically, for Scenario (a), \vl{} in general has the smallest MSE and MeSE. The performances of \proposed{} and \vl{} are close, while that of \ql{} {and \textsc{FQE}} are worse. For Scenario (b), \ql{} is the best when the sample size is relatively large, while the performance of \proposed{} is close to that of \ql{}. However, when \(n\) and \(T\) are relatively small, \ql{} produces some extreme estimates, as seen from the notable difference between MSE and MeSE. See Tables \ref{tab:40_50} and \ref{tab:40_25} 
 in the Supplementary Material. In contrast, the results of \proposed{} remain stable for small \(n\) and \(T\), and have comparable performance to the best estimator \textsc{FQE} in these settings. As for Scenario (c), \proposed{} and \vl{} perform similarly, and are better than \ql{} {and \textsc{FQE}}. \proposed{} always has the smallest MSE and MeSE when the target policy is \(\pi_4\).
As for ECP and AL, it seems that the confidence intervals of \proposed{} tend to have lower coverage than the target level 95\%, especially when the target policy is \(\pi_2\). 
We hypothesize that this under-coverage phenomenon is due to the regularization of weights in \eqref{eqn:obj2}, which affects the variance estimation in \eqref{eqn:sigma2}. 
Since this is beyond the scope of the paper, we leave it for future study. In practice, we recommend multiplying a factor to the length of our confidence interval (CI) in order to relieve this under-coverage issue. In this simulation study, we choose a constant factor \(1.2\) to obtain adjusted intervals. The ECPs and ALs for adjusted confidence intervals are provided in all tables. In general, our method performs robustly and satisfactorily in terms of the coverage and average length of the confidence interval.
{In the Supplementary Material, we also evaluate the performance of the above methods 
on the Cartpole environment, which  can be obtained from OpenAI Gym \citep{brockman2016openai} and has been frequently considered in the computer science literature.  
Overall, the performance of \proposed{} is appealing, See Section \ref{sec:sim_cartpole} of the Supplementary Material for more details. }

      \begin{table}[h]
        \centering
        \caption{Simulation results for four different target policies when  $n = 40$ and  $T = 50$. The MSE values (with standard errors in parentheses), MeSE values, ECP, and AL (with standard errors in parentheses) are provided for three estimators. For \proposed{}, ECPs and ALs for the adjusted confidence interval are provided after the slashes in the ECP and AL columns.} 
        \label{tab:40_50}
		\resizebox{\textwidth}{!}{%
        \begin{tabular}{c |c |cccc}
          \hline
        Target & methods  & MSE ($\times 1000$) & MeSE ($\times 1000$)  & ECP & AL ($\times 100$)  \\ 
        \hline
        $\pi_1$ & \proposed{} & 8.76 (0.527) & 3.75  & 0.96 / 0.99 & 38.93 (0.258) / 46.71 (0.310) \\ 
   & \vl{} & 7.71 (0.461) & 3.57 & 0.96 & 36.25 (0.159) \\ 
   & \ql{} & 10.2 (0.614) & 4.32  & 0.96 & 42.28 ( 0.932) \\ 
   & {\textsc{FQE}} & 10.99 (0.744) & 4.40  &\_  & \_ \\ 
   & {\textsc{minimax}} & 34.53 (3.184) & 14.04  &\_  & \_ \\ 
   & {\textsc{DR}} & 151.87 (16.670) & 22.84 &\_  & \_ \\ 
   & {\textsc{IS}} & 1040.89 (270.752) & 53.94  &\_  & \_ \\ 
        \hline
        $\pi_2$ & \proposed{} & 4.29 (0.356) & 1.75  & 0.91 / 0.94 & 23.57 (2.130) / 28.28 (2.556) \\ 
   & \vl{} & 6.61 (0.369) & 3.50 & 0.84 & 24.18 (0.142) \\ 
   & \ql{} & 36.90 (33.300) & 1.52 & 0.94 & 439.90 (415.277) \\ 
   & {\textsc{FQE}} & 3.93 (0.242) & 1.87  &\_  & \_ \\ 
   & {\textsc{minimax}} & 120.92 (1.276) & 118.98  &\_  & \_ \\ 
   & {\textsc{DR}} & 92.33 (6.212) & 16.283  &\_  & \_ \\ 
   & {\textsc{IS}} & 1111.542 (100.934) & 327.248  &\_  & \_ \\ 
   \hline
   $\pi_3$ & \proposed{} & 2.41 (0.153) & 1.06 & 0.93 / 0.97 & 18.47 (0.125) / 22.17 (0.150) \\ 
   & \vl{} &  2.45 (0.162) & 1.14 & 0.98 & 23.10 (0.158) \\ 
   & \ql{} & 2.72 (0.173) & 1.18  & 0.94 & 21.43 (1.717) \\ 
   & {\textsc{FQE}} & 4.66 (0.307) & 2.30  &\_  & \_ \\ 
   & {\textsc{minimax}} & 37.11 (1.415) & 31.21 &\_  & \_ \\ 
   & {\textsc{DR}} &49.20 (34.761) & 2.36  &\_  & \_ \\ 
   & {\textsc{IS}} & 38.12 (4.839) & 13.97  &\_  & \_ \\ 
   \hline
   $\pi_4$ & \proposed{} & 0.67 (0.044) & 0.32  & 0.93 / 0.97 & 9.70 (0.021) / 11.64 (0.025) \\ 
   & \vl{} & 0.77 (0.053) & 0.34 & 1.00 & 17.00 (0.031) \\ 
   & \ql{} &0.72 (0.047) & 0.35 & 0.93 & 10.03 (0.013) \\ 
   & {\textsc{FQE}} & 1.70 (0.103) & 0.81  &\_  & \_ \\ 
   & {\textsc{minimax}} & 1.62 (0.104) & 0.73  &\_  & \_ \\ 
   & {\textsc{DR}} &1.35 (0.087) & 0.61  &\_  & \_ \\ 
   & {\textsc{IS}} & 4.78 (0.290) & 2.27  &\_  & \_ \\ 
   \hline 
        \end{tabular}}
        \end{table}

            \section{Real Data Application}
            \label{sec:real}
            In this section we apply { the  methods mentioned in Section  \ref{sec:sim}  that provide confidence intervals (\proposed{}, \vl{} and \ql{})} to the OhioT1DM dataset \citep*{marling2020ohiot1dm} obtained from the Ohio University.
			This dataset contains approximately eight weeks' records of CGM blood glucose levels, insulin doses and self-reported life-event data for each of six subjects with type 1 diabetes. Following \cite{shi2020statistical}, we divide the trajectory of every subject into segments of three-hour spans and constructed the state variable \(S_{i,t} = (S^{(1)}_{i,t},S^{(2)}_{i,t},S^{(3)}_{i,t})\)
            as follows.
            First \(S^{(1)}_{i,t}\) is the average CGM glucose level over the three-hour interval \([t-1,t)\).
            Next, \(S^{(2)}_{i,t}\) is constructed based on the \(i\)-th-subject's self-reported time and the corresponding carbohydrate estimate for the meal. More specifically,
           \(
                S_{i,t}^{(2)} = \sum_{j=1}^J \mathrm{CE}_j \gamma_c^{36(t_j-t+1)},\nonumber
            \)
            where \(\mathrm{CE}_1,\mathrm{CE}_2, \dots, \mathrm{CE}_J\) are carbohydrate estimates for the \(i\)-th-subject's meals at times \(t_1, t_2, \dots, t_J \in [t-1,t)\), and \(\gamma_c\) is a 5-minute decay rate.
            Here we set \(\gamma_c = 0.5\).
            Last, \(S_{i,t}^{(3)}\) is the average of the basal rate during the three-hour interval.
             The action variable \(A_{i,t}\) is binarized  according to the amount of insulin injected. In particular, we set \(A_{i,t} = 1\) when the total amount of insulin delivered to the \(i\)-th subject is larger than one unit during the three-hour interval. Otherwise, we set \(A_{i,t} = 0\).
            In the data, the time series of glucose levels and the life-event data are not perfectly overlapped within the same time frame. So we remove several boundary points of time series to ensure that all state variables have records in the same time frame.  According to the Index of Glycemic Control (IGC) \citep{rodbard2009interpretation}, the immediate reward \(R_{i,t}\) is constructed as 
            \[
                R_{i,t}= 
            \begin{cases}
                -\frac{1}{30}(80- S^{(1)}_{i,t+1})^2,& \text{if } S^{(1)}_{i,t+1} < 80;\\
                0,              & \text{if } 80 \leq S^{(1)}_{i,t+1} < 140;\\
                -\frac{1}{30}(S^{(1)}_{i,t+1} - 140)^{1.35}, &\text{otherwise }.
            \end{cases}
            \]

            Similar to the simulation study, we set the discount factor \(\gamma = 0.9\). In addition, we study six reference distributions \(\mathbb{G}\), where each of them is taken as the point mass at the initial state of a subject. We evaluate the aforementioned three estimators via two policies --- the always-treat policy \(\pi_1(a \mid s) \equiv 1\) and the never-treat policy \(\pi_0(a \mid s) \equiv 0\) since
            the estimated optimal policy based on \cite{shi2020statistical} is very close to the always-treat policy.
           A similar discovery has also been observed in \cite{luckett2019estimating}.
            Therefore, we expect to see that \(\calV(\pi_1) > \calV(\pi_0)\).

            \begin{figure}[h]
                \caption{Offline policy evaluation (OPE) estimates as well as the associated 95\% confidence intervals on the OhioT1DM dataset under always-treat and never-treat policies. The six sub-figures displays these quantities when the reference distributions are the initial state variable of six patients respectively. 
                The adjusted confidence intervals for \proposed{} are also included.} 
                \includegraphics[width =14.5cm]{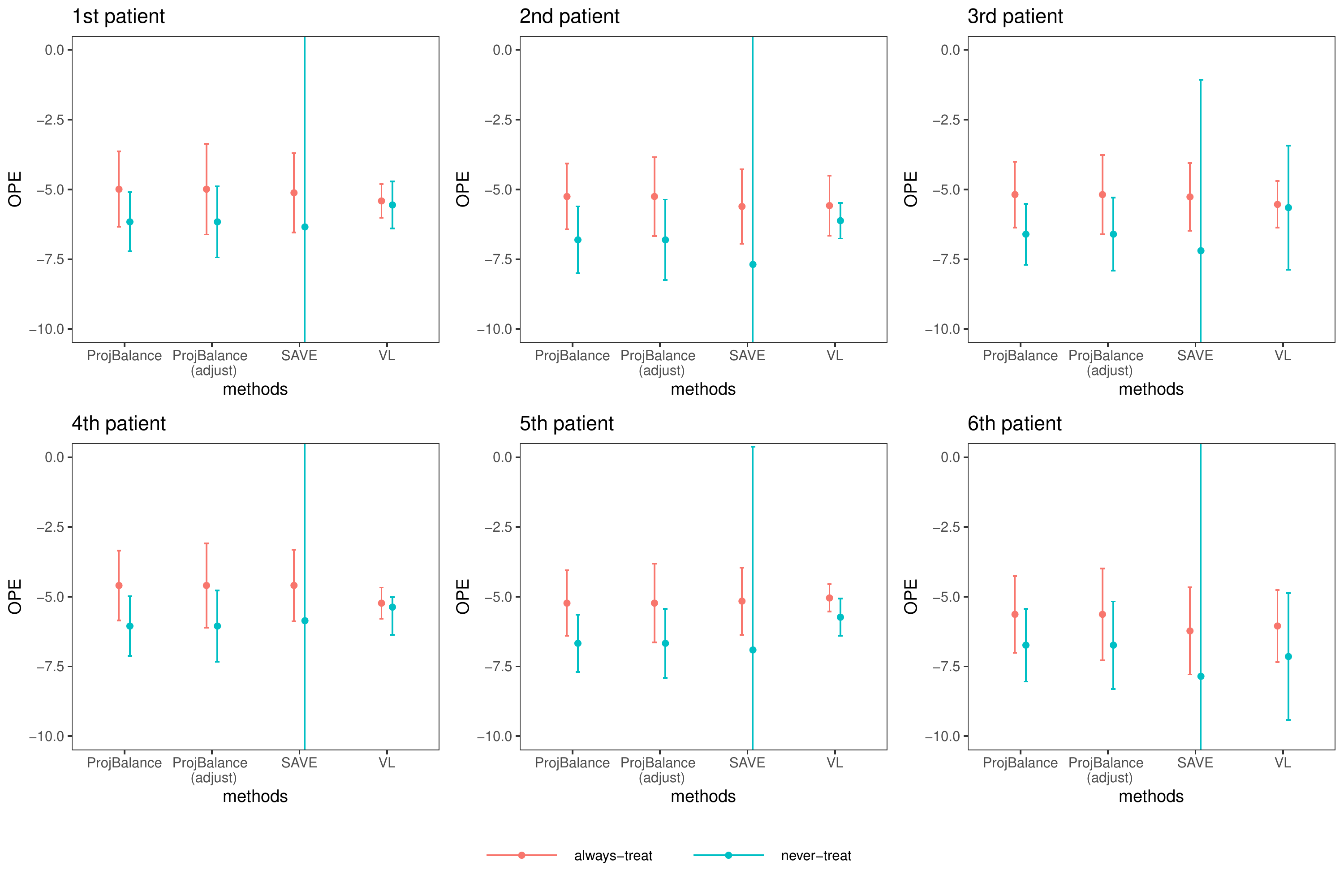}
                \label{fig:ohio}
            \end{figure}

            Figure \ref{fig:ohio} shows the point estimates and associated confidence intervals of \(\calV(\pi_1)\) and \(\calV(\pi_0)\) for \proposed{}, \vl{} and \ql{}.
            To adjust for the under-coverage issue we observe in the simulation study, we also calculate the adjusted confidence interval for \proposed{}.
			In most scenarios, it can be observed that estimated values under the always-treat policy are larger than that of the never-treat policy, which aligns with our expectation. Note that the results of \vl{} do not show a distinctive difference between the two policies compared with other methods.
            For \ql{}, due to the invertibility issue (the matrix \(\hat \Sigma_{\pi}\) in \cite{shi2020statistical} is numerically low-rank 
			under the never-treat policy), the confidence interval for the never-treat policy is surprisingly large. Even though the confidence intervals of our proposed estimators rely on the Q-function estimated by \ql{} (where the point estimate of the Q-function does not involve inverting the problematic matrix), the confidence interval does not inherit the instability issue.
   
\section{Acknowledgements}
Wong's research was partially supported by the National Science Foundation (DMS-1711952 and CCF-1934904).
Portions of this research were conducted with the advanced computing resources provided by Texas A\&M High Performance Research Computing.

\clearpage

\setcounter{page}{1}
\title{Supplement to ``Projected State-action Balancing Weights for Offline Reinforcement Learning''}
\maketitle

\renewcommand{\thesection}{S\arabic{section}}
\renewcommand{\theequation}{S\arabic{equation}}
\renewcommand{\thefigure}{S\arabic{figure}}
\renewcommand{\thetable}{S\arabic{table}}
\renewcommand{\thelemma}{S\arabic{lemma}}
\renewcommand{\thecorollary}{S\arabic{corollary}}
\renewcommand{\thetheorem}{S\arabic{theorem}}
\renewcommand{\theassumption}{\arabic{theorem}'}
\renewcommand{\theproposition}{S\arabic{proposition}}
\renewcommand{\thedefinition}{S\arabic{definition}}
\setcounter{section}{0}

\section{Additional Implementation Details and Numerical Results}
	
	\subsection{Algorithm}
	We present the algorithm outline for the proposed estimator in Algorithm \ref{algo:computation}.
	\begin{algorithm}
		\SetAlgoLined
		\SetKwInOut{init}{Initialization}
		\KwIn{Observed trajectories \(\calD_n = \{\{S_{i,t}, A_{i,t}, R_{i,t}\}_{t=0}^{T-1} \cup S_{i,T}\}_{i=1}^n\); number of basis functions \(K\);  a grid of tuning parameters \(\mu_{l}\), \(l = 1,\dots, L\); function \(h\); basis function  \(B_{k}\), \(k = 1,\dots, K\); reference distribution: $\mathbb{G}$.
		}
		Calculate \(\EE_{S\sim \mathbb{G}} \{\sum_{a\in \mathcal{A}}\pi(a\mid S) B_k(S,a)\}\), \(k = 1,\dots, K\).
		
		Divide $\calD_n$ into the training data $\calD_1$ and validation data $\calD_2$.
		\BlankLine
		\For {$l = 1,\dots, L$}{
			\For{$k=0,1,\dots,K$}
			{
				Compute the solution of \eqref{eqn:approx_g} (i.e., \(\hat g^\pi_l(\cdot, \cdot, B_k)\)) by  taking \(\mu = \mu_{l}\) on $\calD_1$.
			}
			Calculate the validation error \(\sum_{k=1}^K  \sum_{(i,t) \in \calD_2}\{\hat g^\pi_l(S_{i,t}, A_{i,t}, B_k) - \sum_{a\in \mathcal{A}} \pi(a'\mid S_{i,t+1}) B_k(S_{i,t+1}, a')\}^2 / \{(nT)^{-1} \sum_{i=1}^n \sum_{t=1}^T B^2_k(S_{i,t}, A_{i,t})\}\)
		}
		
		Select  \(l^*\) that achieves the smallest validation error among all tuning parameters.
		
		Take \(\hat g^\pi(S_{i,t}, A_{i,t}, B_k)\) as \(\hat g^\pi_{l^*}(S_{i,t}, A_{i,t}, B_k)\) and optimize \eqref{eqn:optim1} in the main text with the smallest feasible \(\bm \delta_K = \delta \bm J\), where \(\bm J = [1,\dots, 1]^\tp\). Obtain the solution \(\hat \omega^\pi_{i,t}\), \(i = 1, \dots, n\) and $t = 0, \cdots, (T-1)$.
		
		\KwOut{$ \hat \calV^\pi(\pi)  = \frac{1}{nT}\sum_{i=1}^n \sum_{t=0}^{T-1} \hat \omega^\pi_{i,t} R_{i,t}$}
		\caption{Outlines for computing \(\hat \calV(\pi)\).}
		\label{algo:computation}
	\end{algorithm}

	\subsection{Implementation  of the Proposed Estimator}
	\label{sec:sim_implement}
	In this subsection, we discuss the implementation details in the simulation study. For the proposed estimator (\proposed{}), the basis functions were constructed as follows. 
	{For settings described in Section \ref{sec:sim}, we adopted the tensor product basis functions for two dimensional state variables.}
	First, we set \(B_{1,l}(S,A) = b_{1,l_1}(S^{(1)})b_{2,l_2}(S^{(2)})\mathbbm{1}_{A =1 }\), \(B_{2,l}(S,A) = b_{1,l_1}(S^{(1)})b_{2,l_2}(S^{(2)})\). Here \(b_{1,l_1}\) and  \(b_{2,l_2}\) were one-dimensional cubic B-spline sets where internal knots were placed at equally spaced sample quantiles for state variables. To avoid extrapolation of the basis function, we put three repeated boundary knots. Then we chose \(\bm B_{K} (S,A) =[B_{k}(S,A)]_{k=1}^K = [B_{1,l}(S,A) , B_{2,l}(S,A)]_{l=1}^{K/2}\). In our numerical experiments, we set \(K = 2\max\{(nT)^{1/3}, 4^2\}\), where \(4^2\) was determined by the fact that there should be at least 4 one-dimensional basis for constructing cubic B-spline.
	{Note that the number of tensor products of spline bases grows exponentially as the dimension of the state space increases. We recommend reducing the interaction order of the tensor products or using kernel bases in the higher dimensional settings, while the number of basis functions \(K\) can be still chosen as stated before.  In the simulation study for Cartpole datasets in Section \ref{sec:sim_more}, we did not include any interaction of tensor products among four state variables, for the sake of simplicity. }
	For the approximate projection step \eqref{eqn:approx_g}, we used a kernel ridge regression with Gaussian kernel,
	where the bandwidth parameter in Gaussian kernel was chosen by the median heuristic (See Section 2.2 in \cite{garreau2017large}).%
	We adopted 5-fold cross-validation to tune the parameter \(\mu\).
	Note that we only needed one \(\mu\) for \(K\) projections, so we took an average of the ``standardized" validation errors from the projection steps for all basis functions, where the validation errors were standardized by empirical norm of the corresponding basis function, as our validation criterion. 
	For the weight estimation \eqref{eqn:optim2}, 
	we chose \(h(x) = (x-1)^2\). As \(\mathbb{G}\) is known to us, we used Monte Carlo method to obtain \(\EE_{S\sim \mathbb{G}}\{\sum_{a'\in \mathcal{A}} \pi(a'\mid S)B_k(S, a')\}\). To reduce the amount of tuning,
	we took \(\bm \delta_K = \delta \bm J\), where \(\delta >0\) and \(J = [1,\dots,1]\in \mathbb{R}^K\), and selected the minimal \(\delta\) that had a valid solution to \eqref{eqn:optim2}.

	In the following, we provide the details for constructing the confidence interval for \proposed{}.
	Take \(\hat {Q}^\pi\) as the estimator for \(Q^\pi\) obtained by \ql{} in \cite{shi2020statistical} and let \[\hat \sigma_{i,t}= \hat \omega_{i,t}^\pi \left( R_{i,t} + \gamma \sum_{a'\in \mathcal{A}} \pi(a'\mid S_{i,t+1}) \hat Q^\pi(S_{i,t+1}, a') - \hat Q^\pi(S_{i,t}, A_{i,t})\right).\]
	Calculate \[\hat \sigma = \left\{ \frac{1}{nT}\sum_{i=1}^n \sum_{t=0}^{T-1} \hat\sigma_{i,t}^2 \right\}^{1/2}.\]
	Then  the confidence interval is given by 
	\[\left[\hat \calV(\pi) - z_{\frac{\alpha}{2}}\frac{\hat \sigma}{\sqrt{nT}},\   \hat \calV(\pi) +  z_{\frac{\alpha}{2}}\frac{\hat \sigma}{\sqrt{nT}} \right],\]
	where \(\hat \calV(\pi) = (nT)^{-1}\sum_{i=1}^n\sum_{t=0}^{T-1}\hat \omega^\pi_{i,t} R_{i,t}\)
	and \(z_{\frac{\alpha}{2}}\) is $(1-\alpha/2)$-quantile of the standard normal distribution.

	\subsection{Additional Simulation Results}	
	\label{sec:sim_more}
	Tables \ref{tab:40_25}, \ref{tab:80_25} and \ref{tab:80_50} show the simulation results under the  target policies and data generation process described in Section \ref{sec:sim}, where \((n,T) = (40, 25)\), \((n,T) = (80,25)\) and \((n,T) = (80, 50)\) respectively.
	
	\begin{table}[h]
		\centering
		\caption{Simulation results for four different target policies when  $n = 40$ and  $T = 25$.  See details in Table \ref{tab:40_50}.}
		\label{tab:40_25}
		\resizebox{\textwidth}{!}{%
		\begin{tabular}{c |c |cccc}
			\hline
			Target & methods  & MSE ($\times 1000$) & MeSE ($\times 1000$) & ECP & AL ($\times 100$) \\ 
			\hline
			$\pi_1$ & \proposed{} & 17.32 (1.064) & 8.17 & 0.96 / 0.99 & 66.30 (6.145) / 79.56 (7.374) \\ 
			& \vl{} & 15.43 (0.942) & 7.08 & 0.97 & 54.15 (0.472) \\ 
			& \ql{} & 272.36 (178.185) & 11.06 & 0.96 &  637.96 (337.450) \\
			& {\textsc{FQE}} & 23.58 (1.431) & 10.74  &\_  & \_ \\ 
			   & {\textsc{minimax}} & 46.27 (3.544) & 16.75  &\_  & \_ \\ 
			   & {\textsc{DR}} & 855.00 (178.345) & 82.56   &\_  & \_ \\ 
			   & {\textsc{IS}} & 945.92 (234.336) & 53.75  &\_  & \_ \\  
			\hline
			$\pi_2$ & \proposed{} & 10.83 (0.874) & 3.17 & 0.87 / 0.91 &35.26 (4.682) / 42.31 (5.619) \\ 
			& \vl{} &  12.33 (0.721) & 5.65  & 0.88 &  36.63 (0.364)  \\ 
			& \ql{} & 310.41 (301.139) & 3.25 & 0.94 & 918.00 (874.044)\\ 
			& {\textsc{FQE}} & 23.58 (1.431) & 10.74  &\_  & \_ \\ 
			   & {\textsc{minimax}} & 41.52 (1.973) & 28.31  &\_  & \_ \\ 
			   & {\textsc{DR}} & 60.20 (5.623) & 16.30   &\_  & \_ \\ 
			   & {\textsc{IS}} & 2973.88 (1803.384) & 70.509  &\_  & \_ \\  
			\hline
			$\pi_3$ & \proposed{} & 4.62 (0.299) & 1.899 & 0.92 / 0.95 &  27.67 (0.984) / 33.20 (1.181) \\ 
			& \vl{} &  4.50 (0.305) & 1.89 & 0.98 & 33.04 (0.239) \\ 
			& \ql{} & 15.87 (8.981) & 2.35 & 0.94 & 669.62 (636.178)  \\ 
			& {\textsc{FQE}} & 8.687 (0.604) & 3.69  &\_  & \_ \\ 
			   & {\textsc{minimax}} & 45.51 (1.710) & 39.61  &\_  & \_ \\ 
			   & {\textsc{DR}} & 68.18 (21.377) & 3.98  &\_  & \_ \\ 
			   & {\textsc{IS}} & 34.02 (4.214) & 12.45  &\_  & \_ \\  
			\hline
			$\pi_4$ & \proposed{} & 1.25 (0.087) & 0.49& 0.94 / 0.98 & 13.61 (0.074) / 16.33 (0.089)  \\ 
			& \vl{} &  1.63 (0.111) & 0.66 & 0.99 & 24.47 (0.070) \\ 
			& \ql{} & 1.40 (0.096) & 0.53 & 0.95 & 14.68 (0.119) \\ 
			& {\textsc{FQE}} & 3.46 (0.224) & 1.65  &\_  & \_ \\ 
			   & {\textsc{minimax}} & 3.26 (0.202) & 1.50  &\_  & \_ \\ 
			   & {\textsc{DR}} & 2.48 (0.157) & 1.15   &\_  & \_ \\ 
			   & {\textsc{IS}} & 4.67 (0.281) & 2.28  &\_  & \_ \\  
			\hline
		\end{tabular}}
	\end{table}

	\begin{table}[h]
		\centering
		\caption{Simulation results for four different target policies when  $n = 80$ and  $T = 25$. See details in Table \ref{tab:40_50}.}
		\label{tab:80_25}
		\resizebox{\textwidth}{!}{%
		\begin{tabular}{c |c |cccc}
			\hline
			Target& methods  & MSE (\(\times 1000\)) & MeSE (\(\times 1000\)) & ECP & AL ($\times 100$) \\ 
			\hline
			$\pi_1$ & \proposed{} & 10.13 (0.661) & 4.61 & 0.95 / 0.97 &38.95 (0.175) / 46.74 (0.211)\\ 
			& \vl{} & 8.84 (0.562) & 3.66 & 0.95 & 38.38 (0.178) \\ 
			& \ql{} & 11.55 (0.766) & 4.90 & 0.94 & 40.58 (0.183)  \\ 
			& {\textsc{FQE}} & 3.46 (0.224) & 1.65  &\_  & \_ \\ 
			   & {\textsc{minimax}} & 38.08 (3.330) & 17.80  &\_  & \_ \\ 
			   & {\textsc{DR}} & 181.51 (24.850) & 28.07   &\_  & \_ \\ 
			   & {\textsc{IS}} & 4.67 (0.281) & 2.28  &\_  & \_ \\  
			\hline
			$\pi_2$ & \proposed{} &  3.88 (0.279) & 1.76 & 0.92 / 0.97 &  21.53 (0.120) / 25.83 (0.144) \\ 
			& \vl{} & 5.06 (0.293) & 2.46 & 0.94 &25.18 (0.145) \\ 
			& \ql{} &3.42 (0.199) & 1.77 & 0.96 & 23.55 (0.512) \\ 
			& {\textsc{FQE}} & 3.88 (0.214) & 2.04  &\_  & \_ \\ 
			   & {\textsc{minimax}} & 36.48 (1.556) & 26.28  &\_  & \_ \\ 
			   & {\textsc{DR}} & 28.51 (2.915) & 7.08  &\_  & \_ \\ 
			   & {\textsc{IS}} & 2951.59 (2613.528) & 62.73  &\_  & \_ \\  
			\hline
			$\pi_3$ & \proposed{} &  2.33 (0.134) & 1.09 & 0.94 / 0.98 & 18.36 (0.1) / 22.03 (0.120))\\ 
			& \vl{} & 2.33 (0.137) & 1.18 & 0.98 & 23.75 (0.122) \\ 
			& \ql{} & 2.74 (0.161) & 1.26 &  0.94 & 19.56 (0.177) \\ 
			& {\textsc{FQE}} & 4.96 (0.322) & 2.10  &\_  & \_ \\ 
			   & {\textsc{minimax}} & 32.12 (1.150) & 25.85  &\_  & \_ \\ 
			   & {\textsc{DR}} & 15.64 (5.164) & 2.35   &\_  & \_ \\ 
			   & {\textsc{IS}} & 22.39 (4.263) &7.05  &\_  & \_ \\  
			\hline
			$\pi_4$ & \proposed{} &0.75 (0.042) & 0.41 & 0.92 / 0.98 &  9.81 (0.018) / 11.77 (0.022)\\ 
			& \vl{} & 0.88 (0.056) & 0.37 & 1 & 17.32 (0.033) \\ 
			& \ql{} &  0.79 (0.045) & 0.43  & 0.93 &  10.19 (0.107) \\ 
			& {\textsc{FQE}} & 1.66 (0.713) & 0.66  &\_  & \_ \\ 
			   & {\textsc{minimax}} & 1.46 (0.084) & 0.73  &\_  & \_ \\ 
			   & {\textsc{DR}} & 1.25 (0.076) & 0.62   &\_  & \_ \\ 
			   & {\textsc{IS}} & 2.27 (0.144) & 1.10  &\_  & \_ \\  
			\hline
		\end{tabular}}
	\end{table}

	\begin{table}[h]
		\centering
		\caption{Simulation results for four different target policies when  $n = 80$ and  $T = 50$. See details in Table \ref{tab:40_50}.}
		\label{tab:80_50}
		\resizebox{\textwidth}{!}{%
		\begin{tabular}{c |c |cccc}
			\hline
			Target & methods  &  MSE (\(\times 1000\)) & MeSE (\(\times 1000\))  & ECP & AL ($\times 100$) \\ 
			\hline
			$\pi_1$ & \proposed{} & 5.43 (0.353) & 2.31  & 0.92 / 0.97 &  27.2(0.0679) / 32.58 (0.081) \\ 
			& \vl{} &  5.09 (0.318) & 2.01  & 0.94 & 26.80 (0.075)   \\ 
			& \ql{} & 5.93 (0.382) & 2.65 & 0.92 & 28.09 (0.117)  \\ 
			& {\textsc{FQE}} & 6.21 (0.402) & 2.57  &\_  & \_ \\ 
			   & {\textsc{minimax}} & 27.27 (2.030) & 12.17  &\_  & \_ \\ 
			   & {\textsc{DR}} & 67.29 (11.554) & 11.03   &\_  & \_ \\ 
			   & {\textsc{IS}} & 192585.20 (189045.400) & 44.53  &\_  & \_ \\  
			\hline
			$\pi_2$ & \proposed{} &  2.49 (0.204) & 1.09 & 0.87 / 0.93 &  15.05 (0.045)  / 18.06 (0.054) \\ 
			& \vl{} & 2.92 (0.162) & 1.58 & 0.89 & 16.93 (0.056)  \\ 
			& \ql{} &1.73 (0.107) & 0.72 & 0.94 & 15.78 (0.053) \\ 
			& {\textsc{FQE}} & 2.10 (0.140) & 0.91  &\_  & \_ \\ 
			& {\textsc{minimax}} & 33.13 (1.275) & 26.19  &\_  & \_ \\ 
			& {\textsc{DR}} & 11.98 (1.063) & 4.10   &\_  & \_ \\ 
			& {\textsc{IS}} & 2951.59 (2613.528) & 62.73  &\_  & \_ \\  
			\hline
			$\pi_3$ & \proposed{} & 1.19 (0.071) & 0.61 & 0.94 / 0.98 &  13.01 (0.048) / 15.61 (0.057) \\ 
			& \vl{} & 1.25 (0.071) & 0.73 & 0.98 & 16.42 (0.053)\\ 
			& \ql{} & 1.34 (0.079) & 0.64 & 0.94 &  13.97 (0.359)\\ 
			& {\textsc{FQE}} & 2.23 (0.174) & 0.89  &\_  & \_ \\ 
			& {\textsc{minimax}} & 32.78 (0.747) & 30.92  &\_  & \_ \\ 
			& {\textsc{DR}} & 3.45 (0.966) & 0.86   &\_  & \_ \\ 
			& {\textsc{IS}} & 29.67 (6.884) & 6.79  &\_  & \_ \\  
			\hline
			$\pi_4$ & \proposed{} & 0.39 (0.023) & 0.21  & 0.92 / 0.97 & 6.900 (0.008) / 8.28 (0.009)\\ 
			& \vl{} & 0.44 (0.026) & 0.24 & 1 & 12.12 (0.027) \\ 
			& \ql{} & 0.41 (0.024) & 0.22 & 0.92 & 7.06 (0.005) \\ 
			& {\textsc{FQE}} & 0.77 (0.046 & 0.38 &\_  & \_ \\ 
			& {\textsc{minimax}} & 0.77 (0.049) & 0.39  &\_  & \_ \\ 
			& {\textsc{DR}} & 0.62 (0.039) & 0.29   &\_  & \_ \\ 
			& {\textsc{IS}} & 2.26 (0.143) & 1.02  &\_  & \_ \\  
			\hline
		\end{tabular}}
	\end{table}

	\subsection{{Weighted Estimator without Projection and Augmented Projected Balancing Estimator}}
\label{sec:sim_proj}
	{In this section, we compare the performance of the weighted estimator (\textsc{balance}) where weights are obtained from \eqref{eqn:optim1}  with  our proposed estimator \proposed{} under the simulation settings described in Section \ref{sec:sim}. In particular, the same basis functions are adopted for two weighted estimators in the balancing step. Also, we include the augmented estimator (\textsc{aug}) based on our procedure into the comparison. The augmented estimator is constructed by 
	\begin{multline*}
		(1-\gamma)\EE_{S\sim \mathbb{G}}\left\{ \sum_{a\in \mathcal{A}}\pi(a\mid S)\hat Q^\pi(S, a) \right\}\\
		 + \frac{1}{nT}\sum_{i=1}^n \sum_{t=0}^{T-1} \hat\omega^\pi_{i,t} \left[ R_{i,t} -\hat Q^\pi (A_{i,t}, S_{i,t}) + \gamma \left\{ \sum_{a'\in \mathcal{A}} \pi(a'\mid S_{i,t+1})\hat Q^\pi(S_{i,t+1}, a') \right\}  \right],
	\end{multline*}
	where \(\hat Q^\pi\) is estimated by \ql{} and \(\hat \omega^\pi_{i,t}\), \(i = 1,\dots, n\), \(t = 0,\dots, T-1\) are obtained from \eqref{eqn:optim2}. 
	Table \ref{tab:proj} shows the performance of above three estimators when \(n = 40\) and \(T = 50\). As we can see,  \textsc{balance} only performs better when target policy is \(\pi_1\). Under other settings, \proposed{} performs much better than \textsc{balance}, especially when target policy is \(\pi_2\). As for \textsc{aug}, it has a very similar performance to \ql{} in all four settings. 
	}

	\begin{table}[h]
		\centering
		\caption{{Simulation results for \proposed{},  \textsc{balance} and \textsc{aug} under four different target policies when  $n = 40$ and  $T = 50$. See details in Table \ref{tab:40_50}}.}
		\label{tab:proj}
		\resizebox{\textwidth}{!}{%
		\begin{tabular}{c |c |cccc}
			\hline
			Target & methods  & MSE ($\times 1000$) & MeSE ($\times 1000$) & ECP & AL ($\times 100$) \\ 
			\hline
			$\pi_1$ & \proposed{} & 8.76 (0.527) & 3.75  & 0.96 / 0.99 & 38.93 (0.258) / 46.71 (0.310) \\ 
			& {\textsc{balance}} & 4.73 (0.298) & 2.36  & 0.99 / 1.00 & 38.63 (0.127) / 46.36 (0.152) \\ 
			& {\textsc{aug}} & 9.86 (0.585) & 4.31  & 0.95 / 0.99 & 38.81 (0.259) / 46.57 (0.310) \\ 
			\hline
			$\pi_2$ & \proposed{} & 4.29 (0.356) & 1.75  & 0.91 / 0.94 & 23.57 (2.130) / 28.28 (2.556) \\ 
			& {\textsc{balance}} & 117.23 (1.38) & 111.44  & 0.002 / 0.002 & 19.43 (0.583) / 23.33 (0.699) \\ 
			& {\textsc{aug}} & 36.66 (33.118) & 1.52  & 0.92 / 0.95 & 23.60 (2.166) / 28.32 (2.599) \\
			\hline
			$\pi_3$ & \proposed{} & 2.41 (0.153) & 1.06 & 0.93 / 0.97 & 18.47 (0.125) / 22.17 (0.150) \\ 
			& {\textsc{balance}} & 8.32 (0.32) & 6.79  & 0.45 / 0.57 & 14.95 (0.048) / 17.94 (0.058) \\
			& {\textsc{aug}} & 2.62 (0.164) & 1.16  & 0.92 / 0.97 & 18.50 (0.124) / 22.20 (0.148) \\
			\hline
			$\pi_4$ & \proposed{} & 0.67 (0.044) & 0.32  & 0.93 / 0.97 & 9.70 (0.021) / 11.64 (0.025) \\ 
			& {\textsc{balance}} & 0.84 (0.053) & 0.37  & 0.91 / 0.95 & 9.64 (0.010) / 11.56 (0.012) \\ 
			& {\textsc{aug}} & 0.73 (0.048) & 0.34  & 0.92 / 0.96 & 9.69 (0.023) / 11.63 (0.027) \\
			\hline
		\end{tabular}}
	\end{table}

	\subsection{{Simulation results for the Cartpole environment}}
	\label{sec:sim_cartpole}
{In this section, we compare the performance of various estimators using the CartPole datasets from the OpenAI Gym environment \citep{brockman2016openai}. The state variables in the CartPole environment are four dimensional, including cart position, cart velocity, pole angle and pole angular velocity. The action  is binary.  We made a slight modification to the Cartpole environment following \cite{uehara2020minimax} and \cite{shi2021deeply}. To summarize, we added a small Gaussian noise on the original deterministic transition dynamic and defined a new state-action-dependent reward function. See details in Section B2.1 in \cite{shi2021deeply}. Following \cite{uehara2020minimax} and \cite{shi2021deeply}, we first ran a deep-Q network to get a near-optimal policy as the target policy ($\pi_{opt}$). 
And we then applied ``epsilon-greedy'' adjustment using a factor $\epsilon$ to define our behavior policy, i.e.,
\begin{align*}
	\pi_b (a\mid s) =  (1-\epsilon) * \pi_{opt}(a\mid s) + \epsilon * 0.5,
\end{align*}
for every $(s, a) \in \calS \times \calA$.
For comparison, we implemented various methods such as fitted-Q evluation (\textsc{FQE}), \textsc{minimax} and \textsc{DR} using the ways they are implemented in \cite{shi2021deeply} and the public code from \url{https://github.com/RunzheStat/D2OPE}. We set the discount factor \(\gamma = 0.9\) in our evaluations. 
}

{In Tables \ref{tab:cartpole_0.1}, \ref{tab:cartpole_0.2} and \ref{tab:cartpole_0.3}, we show the results for \(200\) simulated dataset when setting \(\epsilon = 0.1\), \(\epsilon = 0.2\) and \(\epsilon = 0.3\) respectively. We fixed \(n = 50\) and \(T = 100\) for all three settings. As we can see,   \proposed{} has the smallest  or the second smallest MSE and MeSE among all the methods. The emprical coverages of \proposed{} are also better than other two methods (\ql{} and \vl{}). We did not report the confidence interval for the remaining methods because there are no related theoretical results.}

	\begin{table}[h]
		\centering
		\caption{{Simulation results for the Cartpole environment when  $n = 50$, $T = 100$ and $\epsilon = 0.1$. See detailed description in Table  \ref{tab:40_50}}}
		\label{tab:cartpole_0.1}
		\resizebox{\textwidth}{!}{%
		\begin{tabular}{c |c |cccc}
			\hline
			methods  & MSE ($\times 10^6$) & MeSE ($\times 10^6$) & ECP & AL ($\times 10^3$) \\ 
			\hline
			{\proposed{}} & 0.17 (0.011) & 0.13 & 0.72 / 0.86 & 0.93 (0.016) / 1.12 (0.02)\\  
			{\textsc{\ql{}}}  & 0.49 (0.022) & 0.43 & 0.12 & 0.92 (0.018) \\
			{\textsc{\vl{}}}  & 32.66 (1.262) & 29.57 & 0 & 1.84 (0.008) \\
			{\textsc{FQE}} &5.15 (0.033)&5.12& \_& \_\\
			{\textsc{minimax}} & 43.04 (1.122)&43.24 & \_& \_\\ 
			{\textsc{DR}}&  1.26 (0.042)&1.11& \_& \_ \\
			{\textsc{IS}} & 5581.23 (491.491)&2757.71& \_& \_\\
			\hline
		\end{tabular}}
	\end{table}

	\begin{table}[h]
		\centering
		\caption{{Simulation results for the Cartpole environment when  $n = 50$, $T = 100$ and $\epsilon = 0.2$.  See detailed description in Table \ref{tab:40_50}}}
		\label{tab:cartpole_0.2}
		\resizebox{\textwidth}{!}{%
		\begin{tabular}{c |c |cccc}
			\hline
			methods  & MSE ($\times 10^6$) & MeSE ($\times 10^6$) & ECP & AL ($\times 10^3$) \\ 
			\hline
			{\proposed{}} & 0.10 (0.009) & 0.04 & 0.84 / 0.91 & 0.85 (0.014) / 1.02 (0.016)\\  
			{\textsc{\ql{}}}  & 0.33 (0.021) & 0.24 & 0.38 & 0.84 (0.014) \\
			{\textsc{\vl{}}}  & 23.06 (0.547) & 22.5 & 0 & 1.21 (0.003) \\
			{\textsc{FQE}} & 4.44 (0.043)&4.38& \_& \_\\
			{\textsc{minimax}} & 35.28 (1.163)&33.69 & \_& \_\\ 
			{\textsc{DR}}&  0.76 (0.049)&0.64& \_& \_ \\
			{\textsc{IS}} & 20541.94 (2356.361)&9295.65& \_& \_\\
			\hline
		\end{tabular}}
	\end{table}

	\begin{table}[h]
		\centering
		\caption{{Simulation results for the Cartpole environment when  $n = 50$, $T = 100$ and $\epsilon = 0.3$.  See detailed description in Table  \ref{tab:40_50}}
		}
		\label{tab:cartpole_0.3}
		\resizebox{\textwidth}{!}{%
		\begin{tabular}{c |c |cccc}
			\hline
			methods  & MSE ($\times 10^6$) & MeSE ($\times 10^6$) & ECP & AL ($\times 10^3$) \\ 
			\hline
			{\proposed{}} & 0.14 (0.011) & 0.08 & 0.65 / 0.76 & 0.84 (0.031) / 1.01 (0.038)\\  
			{\textsc{\ql{}}}  &  0.13 (0.014) & 0.05 & 0.74 & 0.93 (0.124) \\
			{\textsc{\vl{}}}  & 16.08 (0.326) & 15.9 & 0 & 1.02 (0.001)\\
			{\textsc{FQE}} &3.99 (0.029)&3.98 & \_& \_\\
			{\textsc{minimax}} & 26.94 (0.962)&25.54 & \_& \_\\ 
			{\textsc{DR}}& 0.87 (0.053)&0.7& \_& \_ \\
			{\textsc{IS}} &59936.33 (7265.061)&30726.43& \_& \_\\
			\hline
		\end{tabular}}
	\end{table}

	\section{Proof}

	\subsection{{Proof in Section \ref{sec:basic}}}
	\label{sec:proof_sec2}
	{Here we show how to obtain \eqref{eq: Estimating Equation for ratio} from \eqref{eq: fwd bellman d}.}
	
	{Starting from (9), we multiply \(f(s,a)\) on both sides and get}
   { \begin{align}
      &d^\pi(s,a) f(s,a) = (1-\gamma) G(s) \pi(a\mid s) f(s,a)+ \gamma f(s,a)\int_{s'\in \mathcal{S}} \sum_{a'\in \mathcal{A}}d^{\pi}(s', a')p(s \mid s', a')\pi(a \mid s)ds' \label{eqn:wdef1}
    \end{align}
    Then we integrate with respect to \(s\) and \(a\) for both sides of \eqref{eqn:wdef1}. Next by the change of measure, we obtain
    \begin{align}
      \int_{s \in \mathcal{S}} \sum_{a \in \mathcal{A}} d^\pi(s,a) f(s,a) ds & = (1-\gamma) \int_{s \in \mathcal{S}, a \in \mathcal{A}} \mathbb G(s) \pi(a\mid s) f(s,a) \nonumber\\
    &  \qquad  + \gamma \int_{s \in \mathcal{S}} \sum_{a\in \mathcal{A}} f(s,a)\int_{s'\in \mathcal{S}} \sum_{a'\in \mathcal{A}}d^{\pi}(s', a')p(s \mid s', a')\pi(a \mid s)ds' ds \nonumber\\
    & =  (1-\gamma) \int_{s \in \mathcal{S}, a \in \mathcal{A}} \mathbb G(s) \pi(a\mid s) f(s,a) \nonumber\\
    &  \qquad  + \gamma \int_{s'\in \mathcal{S}}  
    \sum_{a'\in \mathcal{A}}d^{\pi}(s', a') \int_{s \in \mathcal{S}} \sum_{a \in \mathcal{A}} f(s,a)p(s \mid s', a')\pi(a \mid s) ds ds' \nonumber\\
    & = (1-\gamma) \EE_{S_0 \sim \mathbb{G}} \left[ \sum_{a \in \mathcal{A}} f(a, S0) \right] \nonumber\\
    &  \qquad  + \gamma \int_{s\in \mathcal{S}}  
    \sum_{a\in \mathcal{A}} d^{\pi}(s, a) \EE\left[\sum_{a'\in \mathcal{A}} \pi(a'\mid S') f(S', a)\mid S = s, A= a  \right] ds \label{eqn:wdef2}
    \end{align}}

	{Then, based on the definition of \(\bar{p}_T^b(S_t, A_t)\), we have
	\begin{multline}
	   \EE \left[\frac{1}{T}\sum_{t=0}^{T-1} \frac{d^{\pi}(S_t, A_t)}{\bar{p}_T^b(S_t, A_t)} \left\{f(S_t, A_t) - \gamma \EE\left[\sum_{a'\in \mathcal{A}} \pi(a'\mid S_{t+1}) f(S_{t+1}, a)\mid S_t, A_t  \right]\right\}\right]\\
	   =  \frac{1}{T} \sum_{t=0}^{T-1} \int_{s \in \mathcal{S}} \sum_{a \in \mathcal{A}} p_t(s, a)  \frac{d^\pi(s, a)}{\bar p^b_T(s, a)} \left\{f(s, a) - \gamma \EE\left[\sum_{a'\in \mathcal{A}} \pi(a'\mid s') f(s', a)\mid s, a  \right]\right\} ds \\
		= \int_{s \in \mathcal{S}} \sum_{a \in \mathcal{A}} {d^\pi(s, a)}\left\{f(s, a) - \gamma \EE\left[\sum_{a'\in \mathcal{A}} \pi(a'\mid s') f(s', a)\mid s, a  \right]\right\} ds. \label{eqn:wdef3}
	\end{multline}}

{Combining \eqref{eqn:wdef2} and \eqref{eqn:wdef3},  equation \eqref{eq: Estimating Equation for ratio} follows.}

	\subsection{Technical Proof in Section \ref{sec:method}}
	\label{sec:proof_dual}
	Here we show the proof of Lemma \ref{lem:expand_dim_example} and Theorem \ref{thm:dual}. For Lemma \ref{lem:expand_dim_example}, the proof is straightforward. For Theorem 2, we first remove the absolute value notation in \eqref{eqn:cons} by adding additional \(K\) constrains and therefore transform the optimization problem into a standard convex optimization problem with linear constraints.  Then we make use of the duality results in \cite{tseng1991relaxation} to show the corresponding statement in Lemma \ref{lem:expand_dim_example}.

	\begin{proof}[Proof of Lemma \ref{lem:expand_dim_example}]
		First, for any function $g\in\mathcal{G}'$, there exists $c\in \mathbb{R}$ (independent of $s,a,s'$) such that
		\[
		g(s,a,s') = f(s,a) - \gamma c \quad \mbox{where}\quad \sum_{a'\in\mathcal{A}} \pi(a'|s') f(s',a')\equiv c.
		\]
		Re-parametrizing $f(s,a)= m(s,a) + c$, we have
		\begin{equation*}
		\mathcal{G}' = \left\{ g(s,a,s') = m(s,a) + (1-\gamma)c: c\in \mathbb{R}, \sum_{a'\in\mathcal{A}} \pi(a'\mid s') m(s',a') \equiv 0 \right\}.
		\end{equation*}
		
		Let $\{s_1,\dots,s_{p_S}\}$ and $\{a_1,\dots, a_{p_A}\}$ be the possible values of $\mathcal{S}$ and $\mathcal{A}$ respectively.
		A real-valued function $m$ defined on $\mathcal{S}\times \mathcal{A}$ can be identified by a matrix $\bm{M}=[m(s_i,a_j)]_{i,j=1}^{p_S,p_A}\in\mathbb{R}^{p_S\times p_A}$.
		Similarly, we define $\bm{\Pi}=[\pi(a_j\mid s_i)]_{i,j=1}^{p_S,p_A}$.
		Denote
		by $\bm{1}_a\in \mathbb{R}^a$ a vector of ones for any positive integer $a$.
		To derive $\dim(\mathcal{G}')$, we study the constraint $\sum_{a'\in\mathcal{A}'} \pi(a'\mid s') m(s',a') \equiv 0$, which is equivalent to
		\begin{equation}
		\langle\bm{\pi}_i, \bm{m}_i\rangle = 0, \quad i=1,\dots, p_S, \label{eqn:expand_constraint}
		\end{equation}
		where $\bm{\pi}_i$ and $\bm{m}_i$ are the $i$-th row of $\bm{\Pi}$ and $\bm{M}$ respectively.
		Since the elements of $\bm{\pi}_i$ are non-negative and sum to 1 for all $i$, and each of these constraints are restricted on non-overlapping elements on $\bm{M}$, so \eqref{eqn:expand_constraint} has $p_S$ linearly independent constraints on $\bm{M}$.
		It is easy to see that $\dim(\{g(s,a,s') = m(s,a) : \sum_{a'\in\mathcal{A}} \pi(a'\mid s') m(s',a') \equiv 0\})= p_S p_A-p_S$.
		Together with the parameter $c$, we can
		show that $\dim(\mathcal{G}')= p_Sp_A-p_S+1$.
		
	\end{proof}

	\begin{proof}[Proof of Theorem \ref{thm:dual}]
		Let
		$$
		\hat{\bm L} = \frac{1}{nT}
		\left[
		\begin{array}{cccc c c}
		\hat{A}_1(S_{0,0},A_{0,0}) & \dots & \hat{A}_1(S_{0,T-1},A_{0,T-1})&  \hat{A}_1(S_{1,0},A_{1,0})& \dots& \hat{A}_1(S_{n,T-1},A_{n,T-1})\\
		\vdots  & \ddots  & \vdots  & \vdots & \ddots   &\vdots \\
		\hat{A}_K(S_{0,0},A_{0,0}) & \dots &   \hat{A}_K(S_{0,T-1},A_{0,T-1})&  \hat{A}_K(S_{1,0},A_{1,0})& \dots& \hat{A}_K(S_{n,T-1},A_{n,T-1})\\
		\end{array}
		\right], 
		$$
		$$ \bm w = [w_{i,t}]_{i=1,\dots,n, t = 0,\dots, T} \in \mathbb{R}^{nT}$$
		Then \eqref{eqn:cons} can be represented by 
		
		\begin{align}
		\mbox{min} &\qquad \frac{1}{nT}\sum_{i=1}^n \sum_{t=0}^{T-1} h(w_{i,t}) \nonumber\\
		\mbox{subject to} &\qquad 
		\left[
		\begin{array}{c}
		\hat{\bm L}\\
		-\hat{\bm L}
		\end{array}
		\right] \bm w \leq \left[ 
		\begin{array}{c}
		\bm \delta_K + \bm l_K\\
		\bm \delta_K - \bm l_K
		\end{array}
		\right] \label{eqn:cons2}
		\end{align}

		From  \cite{tseng1991relaxation}, the dual of \eqref{eqn:cons2} is 
		\begin{align*}
		\mbox{max} &\qquad g(\bm \lambda)\\
		\mbox{subject to}& \qquad \bm  \lambda \ge 0,
		\end{align*}
		where \(g(\bm \lambda) = -\frac{1}{nT}\sum_{j=1}^n\sum_{t=0}^{T-1} h^*(- \bm Q(S_{i,t}, A_{i,t})^\tp \bm \lambda) - \langle \bm \lambda, \bm d \rangle\), \(\bm \lambda \in \mathbb{R}^{2K}\) and \(\bm d = [(\bm \delta_K + \bm l_K)^\tp, (\bm \delta_K - \bm l_K)^\tp ]^\tp \in \mathbb{R}^{2K}\). In addition, \(\bm Q(S_{i,t}, A_{i,t}) = [\hat{\bm L}_K(S_{i,t},A_{i,t})^\tp, -\hat{\bm L}_K(S_{i,t},A_{i,t})^\tp ]^\tp \in \mathbb{R}^{2K}\) and \(h^*\) is the convex conjugate of \(h\) defined as
		\begin{align*}
		h^*(t) &= \sup_w\{tw - h(w)\}\\
		& = t \tilde w - h(\tilde w),
		\end{align*}
		where \(\tilde w\) satisfies the first order condition that \(t - h'(\tilde w) = 0\). Then we obtain that \(\tilde w = (h')^{-1}(t)\) and 
		\[h^*(t) = t(h')^{-1}(t) - h\{(h')^{-1}(t)\}. \]
		Take \(\rho(t)= h^*(t)\), and it can be verified that 
		\(\rho'(t) = (h')^{-1}(t) = \tilde{w}\).
		Then the dual form can be written as
		\begin{align}
		\label{eqn:dual_double}
		\mbox{min} & \qquad \ell(\bm\lambda) = \frac{1}{nT}\sum_{j=1}^n\sum_{t=0}^{T-1} \rho(- \bm Q(S_{i,t}, A_{i,t})^\tp \bm \lambda)  + \bm \lambda^\tp \bm d\\
		\mbox{subject to}& \qquad \bm  \lambda \ge 0  \nonumber
		\end{align}
		
		Suppose \( \tilde{ \bm \lambda }= [ \tilde{ \bm \lambda }_1^\tp, \tilde{ \bm \lambda }_2^\tp]^\tp\), \(\tilde{ \bm \lambda }_1, \tilde{ \bm \lambda }_2\in\mathbb{R}^{K}\), is an optimal solution of \eqref{eqn:dual_double}. Take \(\tilde{  \lambda }_{1,k} \), \(\tilde{  \lambda }_{2,k} \) and \(\delta_{k} \) as the \(k\)-th element of \(\tilde{\bm  \lambda }_{1},  \tilde{ \bm  \lambda }_{2}\) and \(\bm \delta_K\) respectively. Next, we show that \(\tilde{ \lambda }_{1,k} \tilde{  \lambda }_{2,k} = 0 \) for \(k = 1,\dots, K\). 
		
		Suppose that there exists \(k\) such that \(\tilde{  \lambda }_{1,k} > 0 \) and \(\tilde{  \lambda }_{2,k} > 0\), then we take 
		\(\tilde{ \bm \lambda' }_1 = \tilde{ \bm \lambda }_1- \min\{\tilde{  \lambda }_{1,k}, \tilde{  \lambda }_{2,k}\} \bm e_k\), \(\tilde{ \bm \lambda' }_2 = \tilde{ \bm \lambda }_2- \min\{\tilde{  \lambda }_{1,k}, \tilde{  \lambda }_{2,k}\} \bm e_k\), where \(\bm e_k\) is a vector where its \(k\)-th entry is 1 and all remaining entries are zero. Take \(\tilde{ \bm \lambda' } = [\tilde{ \bm \lambda' }_1^\tp, \tilde{ \bm \lambda' }_2^\tp]^\tp\).
		Then \[\ell(\tilde{ \bm \lambda' }) - \ell(\tilde{ \bm \lambda }) = -2\min\{\tilde{ \bm \lambda }_{1,k}, \tilde{ \bm \lambda }_{2,k}\} \delta_k <0,\]
		due to the fact that \(\delta_k > 0\). Then it contradicts with the assumption that \(\tilde{\bm \lambda }\) is the solution.
		
		Therefore, we can take the \(\bm \lambda^+ = \tilde{ \bm \lambda }_2 - \tilde{ \bm \lambda }_{1} \).  And it can be verified that \(|\bm \lambda^+ |= \tilde{ \bm \lambda }_2 + \tilde{ \bm \lambda }_{1} \).   Rewriting \eqref{eqn:dual_double} yields the result in Theorem \ref*{thm:dual}.
	\end{proof}
	
	\subsection{Technical Proof in Section \ref{sec:thm_projection}}
	\label{sec:proof_projection}
	In this section, we provide proofs of Theorem \ref{lem:approx_g} and Corollary \ref{cor:KRR}. 
	
	To prove Theorem \ref{lem:approx_g}, 
	a novel truncation argument of the Markov chain is presented in Lemma \ref{lem:g_truncation} to obtain the tight concentration bound in the scaling of $n$ and $T$. Specifically, we truncate each trajectory into two parts in the scaling of $T$. Informally, for the first part, the truncation threshold should be small enough so that we could borrow the proof techniques under standard i.i.d. settings developed in \cite{liao2018just} without losing too much for the upper bound. For the remaining part, the idea is that as the chain grows, the distribution becomes ``exponentially'' close to the stationary distribution according to the mixing condition in Assumption \ref{assum:approx_g}(a) (or \ref{assum:RKHS}(a)). We first develop a delicate peeling argument based on \cite{farahmand_regularized_20092} to bound the empirical process under the stationary distribution, from which we could achieve the desired order in the scaling of both \(n\) and \(T\). Then it remains to bound the difference between the stationary distribution and the distribution after truncation. By carefully choosing the truncation threshold, we are able to balance the upper bounds from two parts and obtain the desired rate of convergence. See the details in Lemma \ref{lem:g_truncation}.

	\label{sec:proof_approx_g}
	\begin{proof}[Proof of Theorem \ref{lem:approx_g}]
		Take \(\delta_t^\pi(B) = \sum_{a'}\pi(a'\mid S_{t+1}) B(S_{t+1},a')\). 
		In the following, we write \(g^\pi (\cdot, \cdot; B)\) and \(\g (\cdot, \cdot; B) \) as \(g^\pi(B)\) and \(\g ( B)\) in short.  We define $$
		\mathbb{P}_{\total} F(X) =(nT)^{-1} \sum_{i=1}^n \sum_{t=0}^{T-1} F(X_{i,t})$$
		as the empirical expectation of some function \(F\) with an input \(X\), which represents random sampled trajectories from some Markov chains. 

		To proceed our proof, we can decompose the error as
		\begin{align*}
		 &\frac{1}{T}\sum_{t=0}^{T-1}\EE\left[\left\{\hat g^\pi (S_t, A_t; B) - \g(S_t, A_t;B)\right\}^2\right] \\
		 = & \frac{1}{T} \sum_{t=0}^{T-1}\EE\left[ \left\{\hat g^\pi (S_t, A_t; B) - \delta_t^\pi(B) + \delta_t^\pi(B) -  \g(S_t, A_t;B)\right\}^2\right]\\
		 = &\frac{1}{T} \sum_{t=0}^{T-1}\EE\left[ \left\{\delta_t^\pi(B) - \hat g^\pi (S_t, A_t; B)\right\}^2\right] + \frac{1}{T} \sum_{t=0}^{T-1} \EE\left[ \left\{\delta_t^\pi(B) -  \g(S_t, A_t;B)\right\}^2\right] \\
		& \qquad \quad  +  \frac{2}{T} \sum_{t=0}^{T-1}\EE\left[ \left\{\hat g^\pi (S_t, A_t; B) - \delta_t^\pi(B)\right\} \left\{\delta_t^\pi(B) -  \g(S_t, A_t;B)\right\}\right].
		\end{align*}
		Since $\sum_{t=0}^{T-1}\EE\left[ \left\{\delta_t^\pi(B)-  \g(S_t, A_t;B)\right\} g(S_t, A_t)\right] = 0$ for all $g \in \G$ due to the optimizing property of $\g$, the last term above can be simplified as
		\begin{align*}
		&  \frac{2}{T} \sum_{t=0}^{T-1} \EE\Big[ \bigdkh{\hat g^\pi (S_t, A_t; B) - \g(S_t, A_t;B)  \\
			& \qquad \qquad +   \g(S_t, A_t;B)- \delta_t^\pi(B)} \bigdkh{\delta_t^\pi(B) -  \g(S_t, A_t;B)}\Big]\\
		& = \frac{2}{T}\sum_{t=0}^{T-1} \EE\left[ \bigdkh{\g(S_t, A_t;B)- \delta_t^\pi(B)} \bigdkh{\delta_t^\pi(B) -  \g(S_t, A_t;B)}\right]\\
		& = - \frac{2}{T}\sum_{t=0}^{T-1}\EE\left[ \left\{\delta_t^\pi(B) -  \g(S_t, A_t;B)\right\}^2\right].
		\end{align*}
		As a result, we have
		\begin{align*}
		& \frac{1}{T}\sum_{t=0}^{T-1}\EE\left[\left\{\hat g^\pi (S_t, A_t; B) - \g(S_t, A_t;B)\right\}^2\right] \\
		 = &  \EE\Big[ \frac{1}{T} \sum_{t=0}^{T-1} \bigdkh{\delta_t^\pi(B) -  \hat g^\pi (S_t, A_t; B)}^2  -  \bigdkh{\delta_t^\pi(B) - \g(S_t, A_t;B)}^2\Big].
		\end{align*}
		For $g_1, g_2 \in \G, B \in  \Q$, we define the following two functions:
		\begin{align*}
		& f_1^\pi(g_1, g_2, B): (S_t, A_t, S_{t+1}) \mapsto  \left\{\delta_t^\pi(B) - g_1(S_t, A_t)\right\}^2 -  \left\{\delta_t^\pi(B) -  g_2(S_t, A_t)\right\}^2\\
		& f_2^\pi(g_1, g_2, B): (S_t, A_t, S_{t+1}) \mapsto   \left\{\delta_t^\pi(B) - g_2(S_t, A_t)\right\}\left\{g_1(S_t, A_t) - g_2(S_t, A_t)\right\},
		\end{align*}
		With these notations, we know that
		\begin{align*}
		& 
		\frac{1}{T}\sum_{t=0}^{T-1}\EE\left[\left\{\hat g^\pi (S_t, A_t; B) - \g(S_t, A_t;B)\right\}^2\right] = \EE \left[\frac{1}{T} \sum_{t=0}^{T-1} f_1^\pi\left\{\hat g^\pi(B), \g (B), B\right\}(S_t, A_t, S_{t+1})\right],  \\
		& \norm{\hat g^\pi (B) - \g (B) }_{\total}^2 = \PN \left[f_1^\pi\left\{\hat g^\pi(B), \g (B), B\right\}(S, A, S')  + 2 f_2^\pi\left\{\gn(B), \g(B), B\right\}(S, A, S')\right].
		\end{align*}
		In the following, we decompose
		\begin{align*}
		&
		\frac{1}{T}\sum_{t=0}^{T-1}\EE\left[\left\{\hat g^\pi (S_t, A_t; B) - \g(S_t, A_t;B)\right\}^2\right] + \norm{\hat g^\pi (B) - \g (B) }_{\total}^2 + \mu J_{\mathcal{G}}^2\left\{\hat g^\pi(B)\right\} 
		= I_1(B) + I_2(B),
		\end{align*}
		where 
		\begin{align*}
		& I_1(B) = 3\PN f_1^\pi\left\{\hat g^\pi(B), \g (B), B\right\} +  \mu [3J_{\mathcal{G}}^2\left\{\hat g^\pi (B)\right\} + 2J_{\mathcal{G}}^2 \left\{\g(B)\right\} + 2J_{\mathcal{Q}}^2(B)],\\
		& I_2(B) =  \PN f_1^\pi\left\{\hat g^\pi(B), \g (B), B\right\} + \frac{1}{T}\EE\sum_{t=0}^{T-1}f_1^\pi\left\{\hat g^\pi(B), \g (B), B\right\}(S_t, A_t, S_{t+1})  + \mu J_{\mathcal{G}}^2\left\{\hat g^\pi(B)\right\} \\
		& \qquad \qquad \qquad + 2 \PN f_2^\pi\left\{\gn(B), \g(B), B\right\}- I_1(B).\\
		\end{align*}
		For the first term, the optimizing property of $\hat g^\pi(B)$ implies that 
		\begin{align*}
		\frac{1}{3}I_1(B)  
		& =  \PN \Big[  \left\{\delta_t^\pi(B) - \hat g^\pi (S, A; B)\right\}^2 -  \left\{\delta^\pi(B) -  \g(S, A;B)\right\}^2\Big] \\
		& \qquad + \mu J_{\mathcal{G}}^2\left\{\hat g^\pi (B)\right\} + \frac{2}{3}\mu J_{\mathcal{G}}^2 \left\{\g(B)\right\} + \frac{2}{3}\mu J_{\mathcal{Q}}^2(B)\\
		& = \PN  \Big[  \left\{\delta_t^\pi(B) - \hat g^\pi (S, A; B)\right\}^2\Big]+ \mu J_{\mathcal{G}}^2\left\{\hat g^\pi (B)\right\} \\
		&  \qquad - \PN \big[  \left\{\delta^\pi(B) -  \g(S, A;B)\right\}^2\big] + \frac{2}{3} \mu J_{\mathcal{G}}^2 \left\{\g(B)\right\}+ \frac{2}{3}  \mu J_{\mathcal{Q}}^2(B)  \\
		& \leq \frac{5}{3} \mu J_{\mathcal{G}}^2 \left\{\g(B)\right\}+ \frac{2}{3} \mu J_{\mathcal{Q}}^2(B).
		\end{align*}
		Thus, $I_1(B) \leq 5 \mu J_{\mathcal{G}}^2 \left\{\g(B)\right\}+ 2\mu J_{\mathcal{Q}}^2(B)$ holds for all $B$.

		Next we derive the uniform bound of $I_2(B)$ over all $B$. 
		For simplicity, take 
		\begin{align}
		\label{eqn:penalty}
		\mathbf{J}^2(g_1, g_2, B) = J_{\mathcal{G}}^2(g_1) + J_{\mathcal{G}}^2(g_2) + J_{\mathcal{Q}}^2(B),
		\end{align}
		for any $g_1, g_2 \in \G$ and $B \in \Q$. 
		
		And take $f^\pi = f_1^\pi - f_2^\pi$, where
		\begin{align}
		\label{eqn:fpi}
		f^\pi(g_1, g_2, B): (S_t, A_t, S_{t+1})\mapsto (g_2-g_1)(S_t, A_t) \cdot \left(3\delta^\pi(B) - 2g_2(S_t, A_t) - g_1(S_t, A_t)\right).
		\end{align}
		Then 
		\begin{align*}
		I_2(B) &= 2(\EE - \PN)f^\pi \left\{\hat g^\pi(B), \g (B), B\right\}- \EE f^\pi \left\{\hat g^\pi(B), \g (B), B\right\}  -2 \mu\mathbf{J}^2(\hat g^\pi, g^\pi_*, B).  
		\end{align*}
		By Lemma \ref{lem:g_truncation}, we are able to show that with probability at least \(1 - 2\delta - 1/(\Total)\), 
		\begin{align*}
		I_2(B) \lesssim \mu + \frac{1}{\Total \mu^{\frac{\alpha}{1- \tau(2+\alpha)}} }  + \frac{[\log(\max\{1/\delta, \Total\})]^{1/\tau}}{\Total},
		\end{align*} 
		where \(0 < \tau \leq \frac{1}{3}\), and the leading constant depends on \(Q_{\max}, G_{\max}, \alpha, \kappa, \oldu{C_s}\). Taking \(\tau = (1+\alpha)\log(\log (\Total))/(\alpha\log (\Total))\) and \(\mu \asymp  (\Total)^{-1/(1+\alpha)} (\log (\Total))\), the result of this theorem then follow.

	\end{proof}
	
	Corollary \ref{cor:KRR} is a direct application of Theorem \ref{lem:approx_g}.  It utilizes the uniformity of Theorem \ref{lem:approx_g} and the construction of kernel ridge regression estimators. Before presenting our proof, we state our assumption for Corollary \ref{cor:KRR} below.
	\setcounter{theorem}{4}
	\begin{assumption}
		The following conditions hold.
		\label{assum:RKHS}
		\begin{enumerate}
			\item The Markov chain \(\{S_{t}, A_{t}\}_{t\ge 0}\) has a unique stationary distribution \(\mathbb{G}^*\) with density \(p^*\) over $\mathcal{S}$ and $\mathcal{A}$ and is geometrically ergodic, i.e., there exists a function \(\phi(s,a)\) and constant \(\kappa\in (0,1)\) such that,
			for any $s\in\mathcal{S}$ and $a\in\mathcal{A}$,
			\[
			\left\|\mathbb{G}^b_t(\cdot \mid (s,a)) - \mathbb{G}^*(\cdot)\right\|_{\mathrm{TV}} \leq \phi(s,a) \kappa^t, \qquad\forall t \ge 0,
			\] 
			where \(\|\cdot\|_{\mathrm{TV}}\) denotes the total variation norm, and \(\mathbb{G}_t^b(\cdot \mid (s,a))\) is the distribution of $(S_t, A_t)$ conditioned on $S_0=s$ and $A_0=a$,
			under the behavior policy. 
			Also, there exists a constant $\newu\ltxlabel{C_s}>0$ such that \(\int_{(s,a)} \phi(s,a) d\mathbb{G}_0(s,a) \leq \oldu{C_s}\), where recall that \(\mathbb{G}_0\) is the initial distribution of \((S,A)\).
			\item 
			The function class \(\mathcal{Q}\) 
		satisfies that \(\max\{ \|B\|_\infty, J_{\mathcal{Q}}(B)\} \leq Q_{\max}\) for all \(B \in \mathcal{Q}\).
			\item The function class $\mathcal{G}$ in \eqref{eqn:approx_g} is a subset of an RKHS whose corresponding RKHS norm is \(\|\cdot\|_{\Hscr}\). In addition, \(\mathcal{G}\) is a star-shaped set with center 0 and it satisfies that 
			\(\|g\|_\infty \leq G_{\max}\) for all $g\in\mathcal{G}$ and
			{\(\| g^\pi_*(\cdot,\cdot; B)\|_{\Hscr} \leq G_{\max}\)} for all \(B \in \mathcal{Q}\).
			\item 
			The regularization functional \(J_{\mathcal{G}}\) in \eqref{eqn:approx_g} is taken as \(\|\cdot\|_{\Hscr}^2\).
			Let \(\mathcal{Q}_M = \{B: B\in \mathcal{Q},  J_{\mathcal{Q}}(B)\leq M\}\) and \(\mathcal{G}_M = \{g: g\in \mathcal{G}, \|g\|_{\Hscr}^2 \leq M\}\). There exist constants  \(\oldu{entropy}\) and \(\alpha \in (0,1)\), such that for any \(\epsilon, M >0\),
			\[\max \left\{ \log \mathcal{N}(\epsilon, \mathcal{G}_M, \|\cdot\|_\infty), \log \mathcal{N}(\epsilon, \mathcal{Q}_M, \|\cdot\|_\infty)  \leq \oldu{entropy} \left( \frac{M}{\epsilon} \right)^{2\alpha}\right\}\] 
		\end{enumerate}
	\end{assumption}
	
	\begin{proof}[Proof of Corollary \ref{cor:KRR}]
		With conditions stated in the Corollary \ref{cor:KRR}, by directly applying Theorem \ref{lem:approx_g} with $\delta = \frac{1}{\Total}$, we can show that  with probability at least \(1-3/(\Total)\), 
		\begin{align*}
		&{\EE \left\{ \frac{1}{T}\sum_{t=0}^{T-1} \left[  \hat g^\pi \left(S_t, A_t;  \sum_{k=1}^K {\upsilon_k} B_k\right) - \g \left(S_t, A_t;  \sum_{k=1}^K {\upsilon_k} B_k\right) \right]^2\right\}}\\
		& \quad + \left\|\hat g^\pi \left(\cdot, \cdot;  \sum_{k=1}^K {\upsilon_k} B_k\right) - \g \left(\cdot, \cdot;  \sum_{k=1}^K {\upsilon_k} B_k\right)\right\|_{\total}^2 \\
		&  \lesssim (\Total)^{-\frac{1}{1+\alpha}} (\log (\Total)) ^{\frac{2+\alpha}{ 1 + \alpha}}.
		\end{align*}
		
		Then it suffices to show that 
		\begin{align}
		\label{eqn:KRR_solution_gstar}
		g_*^\pi(\cdot,\cdot;  {\bm \upsilon}^\tp [B_k]_{k=1}^K) =  {\bm \upsilon}^\tp [  g^\pi_*(\cdot,\cdot; B_k)]_{k=1}^K,
		\end{align}
		and 
		\begin{align}
		\label{eqn:KRR_solution_ghat}
		\hat g^\pi(\cdot,\cdot;  {\bm \upsilon}^\tp [B_k]_{k=1}^K) =  {\bm \upsilon}^\tp [  \hat g^\pi(\cdot,\cdot; B_k)]_{k=1}^K.
		\end{align}
		Note that \eqref{eqn:KRR_solution_gstar} is due to the definition of \(g_*^\pi\). Next, we verify \eqref{eqn:KRR_solution_ghat}.
		
		Denote the reproducing kernel of \(\mathcal{G}\) by \(\kernel(\cdot,\cdot)\).
		Write  \(Y^k_{i,t} = \sum_{a'} \pi(a'\mid S_{i,t+1}) B_k(S_{i,t+1}, a')\), \(\bm Y_k = [Y^k_{i,t}]_{i=1,\dots,n ,t=0,\dots,T-1} \), 
		and \(\bm G\) as the Gram matrix.  Take \[\bm G_\kernel(s,a) = [\kernel\left\{ (s,a), (S_{i,t}, A_{i,t}) \right\})]_{i=1,\dots,n ,t=0,\dots,T-1} \in \mathbb{R}^{nT}.\]
		Then \[\hat g^\pi (s,a; B_k) = \bm Y_k^\tp (\bm G+ \mu \bm I)^{-1}\bm G_\kernel(s,a)\]
		
		If we keep the tuning parameter the same when approximating all \(g^\pi_*(\cdot,\cdot; B_k)\), then for any \( {\bm \upsilon}\),
		\begin{align*}
			{\bm \upsilon}^\tp [\hat g^\pi (s,a; B_k)]_{k=1}^K  & =    {\bm \upsilon}^\tp \left[ \bm Y_k^\tp (\bm G + \mu \bm I)^{-1} \bm G_\kernel(s,a) \right]_{k=1}^K\\
		& =  [ {\bm \upsilon}^\tp \tilde{\bm Y}_{i,t}]^\tp_{i=1\dots,n,t=0,\dots,T-1} (\bm G+ \mu \bm I)^{-1} \bm G_\kernel(s,a)
		\end{align*}
		where  \(\tilde{\bm Y}_{i,t} = [Y^k_{i,t}]_{k=1}^K \in\mathbb{R}^K\), \( {\bm \upsilon}^\tp \tilde{\bm Y}_{i,t}= \sum_{k=1}^K  {\upsilon_k} \{\sum_{a'} \pi(a'\mid S_{i,t+1}) B_k(S_{i,t+1}, a')]\} = \sum_{a'} \pi(a'\mid S_{i, t+1})\{\sum_{k=1}^K \alpha_k  B_k(S_{i,t+1}, a') \} \).
		Then \(  {\bm \upsilon}^\tp [\hat g^\pi (s,a; B_k)]_{k=1}^K\) is the solution for approximating \(g^\pi_* (s,a;  {\bm \upsilon}^\tp [B_k]_{k=1}^K)\) using the same tuning parameter \(\mu\).
		Therefore we have with probability at least \(1- 3/(\Total)\), 
		\begin{align*}
		&\EE \left\{ \sum_{k=1}^K \alpha_k  \hat g^\pi \left(\cdot, \cdot;  B_k\right) -  \sum_{k=1}^K \alpha_k\g \left(\cdot, \cdot;  B_k\right) \right\} ^2 + \left\|\sum_{k=1}^K \alpha_k \hat g^\pi \left(\cdot, \cdot;  B_k\right) -  \sum_{k=1}^K \alpha_k \g \left(\cdot, \cdot; B_k\right)\right\|_{\total}^2 \\
		&  \lesssim (\Total)^{-\frac{1}{1+\alpha}} (\log (\Total)) ^{\frac{2+\alpha}{ 1 + \alpha}}.
		\end{align*}
		
	\end{proof}

	\subsection{Technical Proof4 in Section \ref{sec:weights}}
	\label{sec:proof_weights}
	In this section, we present the proof of Theorem \ref{thm:weights}, which is generalized from proofs in \cite{fan2016improving} and \cite{wang2020minimal}. First, we show the  \(\vltwo\) convergence  of \(\blam^+\) to \(\blam_*\) 
	This requires a more delicate decomposition than those in \cite{fan2016improving} and \cite{wang2020minimal} due to the additional projection error from Theorem \ref{lem:approx_g}. In addition, by the similar truncation argument as we discussed in Section \ref{sec:proof_projection}, we develop a tight matrix concentration inequality for independent Markov chains.  After obtaining the convergence of \(\blam^+\) to \(\blam_*\), the convergence of weights can be derived based on the results in Lemma \ref{thm:dual1}
	combined with Assumption \ref{assum:weights}(a) and (c). 
	\begin{proof}
		Recall that \(\blam^+\) is the solution to \eqref{eqn:dual} and \(\blam_*\) is the coefficient that leads to the best approximation to the true ratio function \(\omega^\pi\) with basis \(\bm L_K\). By  Assumption \ref{assum:weights}(a), we can show that \(\sup_{s,a}|\omega^\pi - \rho'(\bm L_K(s,a)^\tp \blam_*)| \leq \oldu{weightsbound}(K^{-r_1})\). 
		In the following,  we study the convergence of \(\|\blam^+ - \blam_*\|_2\).  We aim to prove 
		\begin{align}
		\label{eqn:lambda_convergence}
		\|\blam^+ - \blam_*\|_2 = \bigOp\left[ \{\psi(K)\}^{-1} \left(  \frac{\sqrt{K}\log K}{\sqrt {\Total}} + K^{1/2-r_1}+ \sqrt{K}\zeta_{\total}\right)\right].
		\end{align}

		Define the objective function \(G(\blam) = \frac{1}{nT}\sum_{i=1}^n\sum_{t=0}^{T-1} \rho(\hat{\bm L}_K(S_{i,t}, A_{i,t})^\tp\bm \lambda) + |\bm \lambda|^\tp \bm \delta_K  - \bm \lambda^\tp \bm l_K\). In order to show the above bound in \eqref{eqn:lambda_convergence}, by the continuity and convexity of \(G(\cdot)\), it suffices to show that,  with high probability, 
		\[ \inf_{\Delta \in \mathcal{C} }\{G(\blam_* + \Delta) - G(\blam_*)\} > 0,\]
		where 
		\(\mathcal{C} = \{\Delta \in \mathbb{R}^K: \|\Delta\|_2 = \newu\ltxlabel{c_delta}\{\psi(K)\}^{-1}(  {\sqrt{K}\log (\Total)}/{\sqrt {\Total}} + K^{-r_1}+ \sqrt{K}\zeta_{\total})\)\}.
		Here \(\oldu{c_delta}\) is some appropriate constant. 
		
		First, we take the following decomposition:
		\begin{align}
		&G(\blam_* + \Delta) - G(\blam_*) \\
		&= \frac{1}{nT}\sum_{i=1}^n\sum_{t=0}^{T-1}\left[ \rho\{\hat{\bm L}_K(S_{i,t}, A_{i,t})^\tp(\blam_* + \Delta)\} - \rho\{\hat{\bm L}_K(S_{i,t}, A_{i,t})^\tp\bm \lambda_*\} \right] + \bm \delta_K^\tp (|\blam_*+ \Delta| - |\blam_*|) - \Delta^\tp \bm l_K \nonumber \\
		&\ge \frac{1}{nT}\sum_{i=1}^n\sum_{t=0}^{T-1} \left[\rho'\{\hat{\bm L}_K(S_{i,t}, A_{i,t})^\tp\bm \lambda_*\}\{ \hat{\bm L}_K(S_{i,t}, A_{i,t})^\tp\Delta\} + \frac{1}{2} \rho^{''}\{\hat v(S_{i,t}, A_{i,t})\}\{\hat{\bm L}_K(S_{i,t}, A_{i,t})^\tp \Delta\}^2\right] \nonumber\\
		&\quad- \Delta^\tp \bm l_K  - |\Delta|^\tp \bm \delta_K, \label{eqn:G_delta}
		\end{align}
		where 
		\(\hat v(S_{i,t}, A_{i,t})\) is a value between \(\hat{\bm L}_K(S_{i,t}, A_{i,t})^\tp(\blam_* + \Delta)\) and \(\hat{\bm L}_K(S_{i,t}, A_{i,t})^\tp\blam_* \).

		Define \(\hat{\bm \xi}(s,a) =  \hat{\bm L}_K(s, a)- \bm L_K(s,a) \). Now  we focus on the following term:
		\begin{align}
		&\frac{1}{nT}\sum_{i=1}^n\sum_{t=0}^{T-1} \rho'\{\hat{\bm L}_K(S_{i,t}, A_{i,t})^\tp\bm \lambda_*\}\{ \hat{\bm L}_K(S_{i,t}, A_{i,t})^\tp\Delta\} - \Delta^\tp \bm l_K \nonumber \\
		= & \frac{1}{nT}\sum_{i=1}^n\sum_{t=0}^{T-1} \rho'\left[ \bm L_K(S_{i,t}, A_{i,t})^\tp \bm \lambda_* + \blam_*^\tp \hat{\bm \xi}(S_{it}, A_{it}) \right]\{ \bm L_K(S_{i,t}, A_{i,t})^\tp \Delta+ \hat{\bm \xi}(S_{i,t}, A_{i,t})^\tp\Delta\} - \Delta^\tp \bm l_K \nonumber\\
		= & \frac{1}{nT}\sum_{i=1}^n\sum_{t=0}^{T-1} \left[ \rho'\left\{ \bm L_K(S_{i,t}, A_{i,t})^\tp \bm \lambda_*  \right\}\left\{ \bm L_K(S_{i,t}, A_{i,t})^\tp \Delta \right\} - \Delta^\tp\bm l_K \right] \label{eqn:t1}\\
		+ & \frac{1}{nT}\sum_{i=1}^n\sum_{t=0}^{T-1}   \rho'\left\{ \bm L_K(S_{i,t}, A_{i,t})^\tp \bm \lambda_*  \right\} \left\{ \hat{\bm \xi}(S_{i,t}, A_{i,t})^\tp \Delta \right\}\label{eqn:t2}\\
		+ & \frac{1}{nT}\sum_{i=1}^n\sum_{t=0}^{T-1}  \rho^{''}\{\tilde v(S_{i,t}, A_{i,t})\}\left\{ \hat{\bm \xi}(S_{i,t}, A_{i,t})^\tp \blam_* \right\}\left\{ \bm L_K(S_{i,t}, A_{i,t})^\tp \Delta \right\} \label{eqn:t3}\\
		+ &  \frac{1}{nT}\sum_{i=1}^n\sum_{t=0}^{T-1}  \rho^{''}\{\tilde v(S_{i,t}, A_{i,t})\}\left\{ \hat{\bm \xi}(S_{i,t}, A_{i,t})^\tp \blam_* \right\}\left\{\hat{\bm \xi}(S_{i,t}, A_{i,t})^\tp \Delta \right\} \label{eqn:t4},
		\end{align}
		where 
		\(\tilde v(S_{i,t}, A_{i,t})\) is a value between \(\bm L_K(S_{i,t}, A_{i,t})^\tp \bm \lambda_* \) and \(\hat{\bm L}_K(S_{i,t}, A_{i,t})^\tp \bm \lambda_*\). In the following, we show how to control \eqref{eqn:t1}-\eqref{eqn:t4} one by one.

		\begin{itemize}
			\item For \eqref{eqn:t1}, 
			\begin{align}
			\eqref{eqn:t1} & = \frac{1}{nT}\sum_{i=1}^n\sum_{t=0}^{T-1} \left[ \omega^\pi(S_{i,t}, A_{i,t})\left\{ \bm L_K(S_{i,t}, A_{i,t})^\tp \Delta \right\} - \Delta^\tp\bm l_K \right] \nonumber\\
			& \quad +   \frac{1}{nT}\sum_{i=1}^n\sum_{t=0}^{T-1} \left[  \rho'\left\{ \bm L_K(S_{i,t}, A_{i,t})^\tp \bm \lambda_*  \right\} -  \omega^\pi(S_{i,t}, A_{i,t}) \right]\left\{ \bm L_K(S_{i,t}, A_{i,t})^\tp \Delta \right\} \nonumber\\
			& \ge  -\left\| \frac{1}{nT}\sum_{i=1}^n\sum_{t=0}^{T-1} \omega^\pi(S_{i,t}, A_{i,t})\bm L_K(S_{i,t}, A_{i,t}) - \bm l_K\right\|_2 \left\| \Delta\right\|_2 \nonumber\\ 
			&\quad - \bigO(K^{-r_1}) \sqrt{ \Delta \left\{\frac{1}{nT}\sum_{i=1}^n\sum_{t=0}^{T-1} \bm L_K(S_{i,t}, A_{i,t})\bm L_K(S_{i,t}, A_{i,t})^\tp  \right\}\Delta} \nonumber\\
			& \ge  -\left\| \frac{1}{nT}\sum_{i=1}^n\sum_{t=0}^{T-1} \omega^\pi(S_{i,t}, A_{i,t})\bm L_K(S_{i,t}, A_{i,t}) - \bm l_K\right\|_2 \left\| \Delta\right\|_2 \nonumber\\ 
			&\quad - \bigO(K^{-r_1}) \left[ \lambda_{\max}\left\{ \frac{1}{nT}\sum_{i=1}^n\sum_{t=0}^{T-1} \bm L_K(S_{i,t}, A_{i,t})\bm L_K(S_{i,t}, A_{i,t})^\tp \right\} \right]^{1/2}\|\Delta\|_2 \nonumber\\
			& \ge  -\left\| \frac{1}{nT}\sum_{i=1}^n\sum_{t=0}^{T-1} \omega^\pi(S_{i,t}, A_{i,t})\bm L_K(S_{i,t}, A_{i,t}) - \bm l_K\right\|_2 \left\| \Delta\right\|_2  - \bigOp(K^{-r_1})\|\Delta\|_2\label{eqn:weights_part1}.
			\end{align}
			The first inequality is based on Cauchy–Schwarz inequality and Assumption \ref{assum:weights}(a). 
			For the last inequality, we apply Lemma \ref{lem:maxeigen} which yields
			\begin{align*}
			& \lambda_{\max}\left\{ \frac{1}{nT}\sum_{i=1}^n\sum_{t=0}^{T-1} \bm L_K(S_{i,t}, A_{i,t})\bm L_K(S_{i,t}, A_{i,t})^\tp   \right\}
			\leq \oldu{weightsC4} + \smallOp(1),
			\end{align*}
			due to Assumptions \ref{assum:RKHS}(a) and \ref{assum:weights}(c). 

			Next, we show how to bound \(\| (nT)^{-1}\sum_{i=1}^n\sum_{t=0}^{T-1} \omega^\pi(S_{i,t}, A_{i,t})\bm L_K(S_{i,t}, A_{i,t}) - \bm l_K\|_2\) in \eqref{eqn:weights_part1}.
			Note that 
			\[\bm l_K = \EE \left\{ \frac{1}{T}\sum_{t=0}^{T-1} \omega^\pi(S_{i,t}, A_{i,t})\bm L_K(S_{i,t}, A_{i,t}) \right\}.\]
			Then we apply the matrix concentration inequality developed in Lemma \ref{lem:weights_concen} to bound it.  Specifically,
			we take \(\bm F(s,a) = \omega^\pi(s,a) \bm L_K(s,a)\) in Lemma \ref{lem:weights_concen}. By Assumption \ref{assum:weights}(a) and (c),  we can show that \(\sup_{(s,a)}\|\bm F(s,a)\|_2 \leq \|\omega^\pi\|_\infty \left\{ \sup_{(s,a)}\|\bm L_K(s,a)\|_2 \right\} \leq \oldu{weightsC1}\oldu{weightsC2}(1+ \gamma)\sqrt{K}\). Therefore, Lemma \ref{lem:weights_concen} implies that
			\begin{align*}
			\left\| \frac{1}{nT}\sum_{i=1}^n\sum_{t=0}^{T-1} \omega^\pi(S_{i,t}, A_{i,t})\bm L_K(S_{i,t}, A_{i,t}) - \bm l_K\right\|_2  = \bigOp\left( \frac{\sqrt{K}\log (\Total)}{\sqrt{\Total}} \right).
			\end{align*}

			Thus we can show that
			\begin{align}
			\eqref{eqn:t1} \ge -\bigOp\left( \frac{\sqrt{K}\log (\Total)}{\sqrt{\Total}} \right)\|\Delta\|_2 - \bigO(K^{- r_1}) \|\Delta\|_2 \label{eqn:t1c}
			\end{align}
			
			\item \textbf{For \eqref{eqn:t2}},
			\begin{align}
			\eqref{eqn:t2} 
			& \ge -C\sup_{s,a}\left|\rho'\left\{ \bm L_K(s, a)^\tp \bm \lambda_*  \right\} \right| \sqrt{\frac{1}{nT}\sum_{i=1}^n\sum_{t=0}^{T-1} \left\{\hat{\bm \xi}(S_{i,t}, A_{i,t})^\tp \bm \Delta \right\}^2_2}  \nonumber\\
			& \ge -C(\oldu{weightsC1} + K^{-r_1}) \|\Delta\|_2 \gamma \left\| \hat g^\pi\left(\cdot, \cdot; \bm B_K^\tp \frac{\Delta}{\|\Delta\|_2}\right) - g^\pi_* \left(\cdot, \cdot; \bm B_K^\tp \frac{\Delta}{\|\Delta\|_2}\right)\right\|_{\total},  \nonumber
			\end{align} 
			where the first equality is given by Cauchy-Schwarz inequality and the second one is due to Assumption \ref{assum:weights}(a) and the definition of \(\hat {\bm \xi}\).
			As \(\sup_{s,a} \sup_{\|\bm \delta\|\leq 1 }\|\bm B_K^\tp \bm \delta\|_\infty \lesssim  \sqrt{K}\), one can show that \({K}^{-1/2} \bm B_K^\tp \Delta/\|\Delta\|_2\lesssim 1 \). Due to Theorem \ref{lem:approx_g} and Assumption \ref{assum:weights}(e), we have 
			\begin{align*}
			K^{-1/2}\left\| \hat g^\pi\left(\cdot, \cdot; \bm B_K^\tp \frac{\Delta}{\|\Delta\|_2}\right) - g^\pi_* \left(\cdot, \cdot; \bm B_K^\tp \frac{\Delta}{\|\Delta\|_2}\right)\right\|_{\total} = \bigOp\left( \zeta_{\total}  \right).
			\end{align*}

			Therefore, 
			\begin{align}
			\eqref{eqn:t2} & \ge -C\oldu{weightsC1}  \sqrt{K}\bigOp\left(\zeta_{\total} \right)\|\Delta\|_2,\label{eqn:t2c}
			\end{align}
			where \(C\) is some positive constant.

			\item \textbf{For \eqref{eqn:t3}}, 
			note that 
			\begin{align}
			\left|\bm L_K(S_{i,t}, A_{i,t})^\tp \bm \lambda_* \right| &= \left|\blam_*\bm B_K(S_{i,t}, A_{i,t})^\tp - \gamma g^\pi_*(S_{i,t}, A_{i,t}; \blam_*^\tp \bm\beta_K)\right|\nonumber \\
			& \leq \left| \bm \lambda_*\bm B_K(S_{i,t}, A_{i,t})^\tp\right| + \left| \gamma g^\pi_*(S_{i,t}, A_{i,t}; \blam_*^\tp \bm\beta_K)\right|\nonumber\\
			& \leq Q_{\max} + \gamma G_{\max}, \nonumber
			\end{align}
			due to Assumption \ref{assum:weights}(e).
			Similarly, we can show that 
			\[\left|\hat{\bm L}_K(S_{i,t}, A_{i,t})^\tp \bm \lambda_* \right| \leq Q_{\max} + \gamma G_{\max}. \]
			As \(\hat g^\pi (\cdot, \cdot; \blam_*^\tp \bm\beta_K) \in \mathcal{G}\) by assumption,
			\(|\tilde{v}(S_{i,t}, A_{i,t})| \leq Q_{\max} + \gamma G_{\max}\). 
			Since \(\rho{''}\) is a strictly positive and continuous function, there exists a constant \(\newl\ltxlabel{wc4} > 0\) such that \(\rho{''}\{\tilde{v}(S_{i,t}, A_{i,t})\} \leq \oldl{wc4}\).
			We then have
			\begin{align}
			\eqref{eqn:t3} &\ge -\sqrt{\frac{1}{nT}\sum_{i=1}^n\sum_{t=0}^{T-1} \rho^{''}\{\tilde{v}(S_{i,t}, A_{i,t})\}^2 \left\{ \hat{\bm \xi}(S_{i,t}, A_{i,t})^\tp \blam_* \right\}^2} \sqrt{\frac{1}{nT}\sum_{i=1}^n\sum_{t=0}^{T-1} \left\{ \bm L_K(S_{i,t}, A_{i,t}) ^\tp \Delta \right\}^2}\nonumber\\
			& \ge -\oldl{wc4} \bigOp(\zeta_{\total}) \left[ \lambda_{\max}\left\{ \frac{1}{nT} \sum_{i=1}^n\sum_{t=0}^{T-1}  \bm L_K(S_{i,t}, A_{i,t})  \bm L_K(S_{i,t}, A_{i,t})^\tp  \right\} \right]^{1/2} \|\Delta\| \nonumber\\
			& \ge -C\oldl{wc4}\bigOp(\zeta_{\total}) \left( \left[ \lambda_{\max}\left\{\EE \frac{1}{nT} \sum_{i=1}^n\sum_{t=0}^{T-1}  \bm L_K(S_{i,t}, A_{i,t})  \bm L_K(S_{i,t}, A_{i,t})^\tp  \right\} \right]^{1/2}  + \bigOp(1)\right) \|\Delta\|_2 \nonumber\\
			& \ge -C\oldl{wc4}\sqrt{\oldu{weightsC4}}\bigOp(\zeta_{\total})\|\Delta\|_2, \label{eqn:t3c}
			\end{align}
			where \(C\) is some positive constant.  The first inequality of the above arguments is given by Cauchy-Schwarz inequality. The second one uses results in Theorem \ref{lem:approx_g} based on Assumption \ref{assum:weights}(e) so that the upper bound for \((nT)^{-1}\sum_{i=1}^n\sum_{t=0}^{T-1}  \{ \hat{\bm \xi}(S_{i,t}, A_{i,t})^\tp \blam_* \}^2\) is obtained. We use Lemma \ref{lem:maxeigen} 
			in the third inequality, while the last inequality is obtained due to Assumption \ref{assum:weights}(c).
			
			\item \textbf{For \eqref{eqn:t4}}, by using the same argument showing \eqref{eqn:t3c} and \eqref{eqn:t2c}, we have 
			\begin{align}
			\eqref{eqn:t4} &\ge -\sqrt{\frac{1}{nT}\sum_{i=1}^n\sum_{t=0}^{T-1} \rho^{''}\{\tilde{v}(S_{i,t}, A_{i,t})\}^2 \left\{ \hat{\bm \xi}(S_{i,t}, A_{i,t})^\tp \blam_* \right\}^2} \sqrt{\frac{1}{nT}\sum_{i=1}^n\sum_{t=0}^{T-1} \left\{ \hat{\bm \xi}(S_{i,t}, A_{i,t}) ^\tp \Delta \right\}^2}\nonumber\\
			& \ge -\oldl{wc4} \bigOp(\zeta_{\total}) \sqrt{K} \bigOp(\zeta_{\total}) \|\Delta\|_2 \nonumber\\
			& \ge -\oldl{wc4}\sqrt{K}\bigOp(\zeta^2_{\total})\|\Delta\|_2 \label{eqn:t4c} 
			\end{align}

		\end{itemize}

		Substituting \eqref{eqn:t1c}, \eqref{eqn:t2c}, \eqref{eqn:t3c} and \eqref{eqn:t4c} into \eqref{eqn:t1}, \eqref{eqn:t2}, \eqref{eqn:t3} and \eqref{eqn:t4} respectively,  we have
		\begin{align}
		&\frac{1}{nT}\sum_{j=1}^n\sum_{t=0}^{T-1} \rho'\{\hat{\bm L}_K(S_{i,t}, A_{i,t})^\tp\bm \lambda_*\}\{ \hat{\bm L}_K(S_{i,t}, A_{i,t})^\tp\Delta\} - \Delta^\tp \bm l_K \nonumber \\
		\ge& -\bigOp\left( \frac{\sqrt{K}\log (\Total)}{\sqrt {\Total}} + K^{-r_1}+ \sqrt{K} \zeta_{\total} + \sqrt{K}\zeta_{\total}^2\right)\|\Delta\|_2.
		\end{align}
		
		In the following, we discuss the remaining term in  \eqref{eqn:G_delta}.
		Due to Assumption \ref{assum:weights}(b), \(\rho^{''}\{\hat v(S_{i,t}, A_{i,t})\} \ge \oldu{weightssecond}\).
		
		Next, we show how to bound 
		\[\lambda_{\min}\left[ \frac{1}{nT}\sum_{i=1}^n\sum_{t=0}^{T-1} \hat {\bm L}_K(S_{i,t}, A_{i,t}) \hat {\bm L}_K(S_{i,t}, A_{i,t})^\tp \right].\]
		First, note that for any vector \(\bm a \in \mathbb{R}^K\) such that \(\|\bm a\|_2 = 1\), we have 
		\begin{align}
		& \bm a ^\tp  \left[ \frac{1}{nT}\sum_{i=1}^n\sum_{t=0}^{T-1} \hat {\bm L}_K(S_{i,t}, A_{i,t}) \hat {\bm L}_K(S_{i,t}, A_{i,t})^\tp \right] \bm a = \frac{1}{nT}\sum_{i=1}^n\sum_{t=0}^{T-1}    \left\| \bm a ^\tp  \hat{\bm L}_K(S_{i,t}, A_{i,t})\right\|^2_2 \nonumber\\
		\ge & \frac{1}{nT}\sum_{i=1}^n\sum_{t=0}^{T-1}    \left\| \bm a ^\tp  {\bm L}_K(S_{i,t}, A_{i,t})\right\|^2_2  - \frac{1}{nT}\sum_{i=1}^n\sum_{t=0}^{T-1}   \left\| \bm a ^\tp  \left\{ \hat{\bm L}_K(S_{i,t}, A_{i,t})-{\bm L}_K(S_{i,t}, A_{i,t}) \right\}\right\|^2_2 \nonumber\\
		\ge &\bm a ^\tp  \left[ \frac{1}{nT}\sum_{i=1}^n\sum_{t=0}^{T-1} {\bm L}_K(S_{i,t}, A_{i,t})  {\bm L}_K(S_{i,t}, A_{i,t})^\tp \right] \bm a   - \bigOp(K\zeta^2_{\total}), \label{eqn:lambda_min1}
		\end{align}
		where the last inequality can be derived by the same arguments in proving \eqref{eqn:t2}.
		
		Then we have
		\begin{align*}
		&\lambda_{\min}\left[ \frac{1}{nT}\sum_{i=1}^n\sum_{t=0}^{T-1} \hat {\bm L}_K(S_{i,t}, A_{i,t}) \hat {\bm L}_K(S_{i,t}, A_{i,t})^\tp  \right] \\
		\ge &  \lambda_{\min}\left[ \frac{1}{nT}\sum_{i=1}^n\sum_{t=0}^{T-1} {\bm L}_K(S_{i,t}, A_{i,t}) {\bm L}_K(S_{i,t}, A_{i,t})^\tp  \right]- \bigOp(K\zeta^2_{\total}) \\
		\ge & \lambda_{\min}\left[ \frac{1}{T}\sum_{t=0}^{T-1} \EE {\bm L}_K(S_{t}, A_{t}) {\bm L}_K(S_{t}, A_{t})^\tp  \right] - \bigOp\left( \sqrt{\frac{K}{\Total}}\log (\Total) \right)- \bigOp(K\zeta^2_{\total}) \\
		\ge & \psi(K) - \smallOp(\psi(K)).
		\end{align*}
		The first inequality is due to \eqref{eqn:lambda_min1}. The second inequality is by Lemma \ref{lem:maxeigen}. 
		For the third inequality, due to the condition  that \(\{\psi(K)\}^{-1} \sqrt{K}\xi_{\total} = \smallO(1)\), then \(\sqrt{\frac{K}{\Total}}\log (\Total)  = \smallO(\psi(K))\) and \(K\xi_{\total}^2 = \smallO({\psi(K)}^2) = \smallO(\psi(K))\).

		Now, returning to \eqref{eqn:G_delta},  we can show that
		\begin{align}
		G(\blam_* + \Delta) - G(\blam_*) 
		&\ge -\bigOp\left( \frac{\sqrt{K}\log (\Total)}{\sqrt {\Total}} + K^{-r_1}+ \sqrt{K}\zeta_{\total}\right)\|\Delta\|_2 \nonumber\\
		+ &\frac{1}{2nT}\sum_{i=1}^n\sum_{t=0}^{T-1} \rho^{''}\{v'(S_{i,t}, A_{i,t})\}\{\hat{\bm L}_K(S_{i,t}, A_{i,t})^\tp \Delta\}^2 - \|\Delta\|_2\|\bm \delta_K\|_2\nonumber\\
		\ge& -\bigOp\left( \frac{\sqrt{K}\log (\Total)}{\sqrt {\Total}} + K^{-r_1}+ \sqrt{K}\zeta_{\total}\right)\|\Delta\|_2 
		+ \left\{\psi(K)  - \smallOp(\psi(K))\right\}\|\Delta\|_2^2,\nonumber
		\end{align} 
		where the last inequality is due to the condition of \(\bm \delta_K\) specified in Assumption \ref{assum:weights}(g). As long as \(\|\Delta\|_2 \ge \oldu{c_delta} \{\psi(K)\}^{-1}(\frac{\sqrt{K}\log (K\Total)}{\sqrt {\Total}} + K^{-r_1}+ \sqrt{K}\zeta_{\total})\) for some large enough constant \(\oldu{c_delta}\), with high probability, \[G(\blam_* + \Delta) - G(\blam_*) > 0. \] Therefore we have proved \eqref{eqn:lambda_convergence}.

		Finally, we are ready to show the convergence of \(\hat \omega\) given below. 
		\begin{align}
		&[\EE \left\{ \hat\omega^\pi(\cdot,\cdot) - \omega^\pi(\cdot,\cdot)\right\}^2]^{1/2}\nonumber\\
		\leq  & \left[\EE \left\{\rho'(\hat {\bm L}_K(\cdot,\cdot)^\tp \blam^+) - \rho'( {\bm L}_K(\cdot,\cdot)^\tp \blam_*)\right\}^2\right]^{1/2} + \sup_{s,a} \left|\omega^\pi(s,a) - \rho'( {\bm L}_K(s,a)^\tp \blam_*)\right|\nonumber\\
		\leq & \left[\EE \left\{\rho'({\bm L}_K(\cdot,\cdot)^\tp \blam^+) - \rho'( {\bm L}_K(\cdot,\cdot)^\tp \blam_*)\right\}^2\right]^{1/2}  + \left[\EE \left\{\rho^{''}\{\tilde v(\cdot,\cdot)\} \left\{ \hat {\bm L}_K(\cdot,\cdot) - {\bm L}_K(\cdot,\cdot) \right\}^\tp \blam^+\right\}^2\right]^{1/2}+ \bigO(K^{-r_1})\nonumber\\
		\leq & \oldl{wc4} \left[\EE \left\{{\bm L}_K(\cdot,\cdot)^\tp (\blam^+ - \blam_*)\right\}^2\right]^{1/2}  + \oldl{wc4} \bigOp(\zeta_{\total}) + \bigO(K^{-r_1}) \nonumber \\
		\leq & \oldl{wc4} \|(\blam^+ - \blam_*)\|_2 \left[ \lambda_{\max}\left\{  \EE  \left[\frac{1}{T}\sum_{t=0}^{T-1}\bm L_K(S_t, A_t) \bm L_K(S_t, A_t)^\tp \right] \right\}  \right]^{1/2} + \bigOp(\zeta_{\total}) + \bigO(K^{-r_1}) \nonumber \\
		\leq & \oldl{wc4} \sqrt{\oldu{weightsC4}} \|(\blam^+ - \blam_*)\|_2 + \bigOp(\zeta_{\total}) + \bigO(K^{-r_1}) \nonumber \\
		\leq & \bigOp\left[ \{\psi(K)\}^{-1}\left(  \frac{\sqrt{K}\log (\Total)}{\sqrt {\Total}} + K^{-r_1}+ \sqrt{K}\zeta_{\total}\right) \right].\nonumber
		\end{align}
		The second inequality is based on Assumption \ref{assum:weights}(a) and the mean value theorem.  For the third inequality, we adopt the mean value theorem and Assumption \ref{assum:weights}(b) again, using the similar arguments for proving \eqref{eqn:t3} to show the boundedness of \(\rho^{''}\). Assumption \ref{assum:weights}(e) is used to obtain the desired \(\bigOp(\zeta_{\total})\) order. The fifth inequality results from Assumption \ref{assum:weights}(c) and the last inequality is due to \eqref{eqn:lambda_convergence}.
		
		Noting that
		\[\lambda_{\max}\left\{ \frac{1}{nT}\sum_{i=1}^n \sum_{t=0}^{T-1}\bm L_K(S_{i,t}, A_{i,t}) \bm L_K(S_{i,t}, A_{i,t})^\tp  \right\} = \lambda_{\max}\left\{ \EE  \left[\frac{1}{T}\sum_{t=0}^{T-1}\bm L_K(S_t, A_t) \bm L_K(S_t, A_t)^\tp \right] \right\}  + \smallOp(1)\]
		by Lemma \ref{lem:maxeigen}, we then could use the similar argument to bound \(\| \hat \omega^\pi(\cdot,\cdot) - \omega^\pi(\cdot,\cdot)\|_{\total}\), which completes the proof.
	\end{proof}

	\subsection{Technical Proof in Section \ref{sec:efficiency}}
	\label{sec:proof_asymp}
	In this section, we provide the proof of Theorem \ref{thm:asymp}, which shows the efficiency of our weighted estimator. In the proof, we first decompose the estimation error incurred by \(\hat \calV(\pi)\) to the true $\calV(\pi)$. Based on the decomposition, we show that with the help of convergence results regarding to the projection step and the final weights, together with the approximation error to  \(Q^\pi\), the desired \(\sqrt{nT}\) convergence rate of $\hat \calV(\pi)$ can be established. Finally, the efficiency of our estimator can be achieved.
	\begin{proof}[Proof of Theorem \ref{thm:asymp}]
		We mainly prove Result (ii). Result (i) can be derived through similar arguments but under a different condition for \(K\). Thus, we omit that for brevity.
		To start with, we derive the following decomposition
		\begin{align*}
		& \hat \calV(\pi) - \calV(\pi) \\
		= &\frac{1}{nT}\sum_{i=1}^n\sum_{t=0}^{T-1}\hat \omega^\pi_{i,t} \left[ Q^\pi(S_{i,t},A_{i,t}){ - }\gamma g_*^\pi(S_{i,t}, A_{i,t}; Q^\pi) \right] 
		{-} (1{-}\gamma) \EE_{S_0\sim \mathbb{G}} \left[ \sum_{a'\in \Ascr} \pi(a'\mid S_{0}) Q^\pi(S_{0},a') \right]\nonumber\\
		& + \frac{1}{nT}\sum_{i=1}^n\sum_{t=0}^{T-1} \hat \omega^\pi_{i,t}\epsilon_{i,t}\\
		= &\frac{1}{nT}\sum_{i=1}^n\sum_{t=0}^{T-1}\hat \omega^\pi_{i,t} \left[ \bm B_K(S_{i,t},A_{i,t})^\tp - \gamma g_*^\pi(S_{i,t}, A_{i,t}; \bm B_K)^\tp  \right]\bm \beta  - (1-\gamma)\EE_{S_0\sim \mathbb{G}} \left[ \sum_{a'\in \Ascr} \pi(a'\mid S_{0}) \bm B_K(S_{0},a')^\tp \bm \beta \right]\\
		&+ \frac{1}{nT}\sum_{i=1}^n\sum_{t=0}^{T-1} \hat \omega^\pi_{i,t}\left[ \Delta_Q(S_{i,t}, A_{i,t}) - \gamma g^\pi_*(S_{i,t}, A_{i,t}, \Delta_Q) \right] \\
		&- (1-\gamma)\EE_{S_0\sim \mathbb{G}} \left[ \sum_{a'\in \Ascr} \pi(a'\mid S_{0}) \Delta(S_{0},a') \right] + \frac{1}{nT}\sum_{i=1}^n\sum_{t=0}^{T-1}\hat \omega^\pi_{i,t}\epsilon_{i,t}\\
		= &\frac{1}{nT}\sum_{i=1}^n\sum_{t=0}^{T-1} \hat \omega^\pi_{i} \left[ \bm B_K(S_i,A_i)^\tp - \gamma \hat g^\pi(S_i, A_i; \bm B_K) ^\tp \right]\bm \beta  - (1-\gamma)\EE_{S_0\sim \mathbb{G}} \left[ \sum_{a'\in \Ascr} \pi(a'\mid S_{0}) \bm B_K(S_{0},a')^\tp \bm \beta \right] \cdots \cdots (\mathrm{\Rnum{1}}) \\
		& + \frac{1}{nT}\sum_{i=1}^n\sum_{t=0}^{T-1} \omega^\pi_{i,t} \gamma \left[ \hat g^\pi(S_{i,t}, A_{i,t}; \bm B_K) - g_*^\pi(S_{i,t}, A_{i,t}; \bm B_K) \right]^\tp\bm \beta  \cdots \cdots (\mathrm{\Rnum{2}})\\
		&+  \frac{1}{nT}\sum_{i=1}^n\sum_{t=0}^{T-1} (\hat \omega^\pi_{i,t}  - \omega^\pi_{i,t})\gamma \left[ \hat g^\pi(S_{i,t}, A_{i,t}; \bm B_K) - g_*^\pi(S_{i,t}, A_{i,t}; \bm B_K) \right]^\tp\bm \beta  \cdots \cdots (\mathrm{\Rnum{3}})\\
		&+ \frac{1}{nT}\sum_{i=1}^n\sum_{t=0}^{T-1}\omega^\pi_{i,t}\left[ \Delta_Q(S_{i,t}, A_{i,t}) - \gamma g^\pi_*(S_{i,t}, A_{i,t}, \Delta_Q) \right] - (1-\gamma)\EE_{S_0\sim \mathbb{G}} \left[ \sum_{a'\in \Ascr} \pi(a'\mid S_{0}) \Delta(S_{0},a') \right] \cdots \cdots (\mathrm{\Rnum{4}})\\
		& + \frac{1}{nT}\sum_{i=1}^n\sum_{t=0}^{T-1} (\hat \omega^\pi_{i,t} - \omega^\pi_{i,t})\left[ \Delta_Q(S_{i,t}, A_{i,t}) - \gamma g^\pi_*(S_{i,t}, A_{i,t}, \Delta_Q) \right]\cdots \cdots (\mathrm{\Rnum{5}}) \\
		&  + \frac{1}{nT}\sum_{i=1}^n\sum_{t=0}^{T-1}\hat \omega^\pi_{i,t}\epsilon_{i,t}\cdots \cdots (\mathrm{\Rnum{6}})
		\end{align*}
		In the following, we analyze (I)-(VI) components separately. 
		\begin{itemize}
			\item \textbf{For (\Rnum{1})}, from the optimization constraint \eqref{eqn:cons},
			\[(\mathrm{\Rnum{1}}) \leq  {\|\bm \beta \circ \bm \delta_{K}\|_1}
			.\]
			By the condition that 
			in Assumption \ref{assum:additional}(a), we can show \((\mathrm{\Rnum{1}})   = \smallO((\Total)^{-1/2})\). 
			
			\item \textbf{For (\Rnum{3})}, we control it by utilizing the convergence of weights and function \(\hat g^\pi\). Note that
			\begin{align*}
			(\mathrm{\Rnum{3}})  &\leq  \gamma  \|\hat \omega^\pi - \omega^\pi\|_{\total} \|\{g_*^\pi (\cdot, \cdot; \bm B_K) - \hat g(\cdot, \cdot; \bm B_K)\}^\tp \bm \beta \|_{\total} \\
			& = \bigOp \left(  \frac{\sqrt{K}\log (\Total)}{\sqrt {\Total}} + K^{-r_1}+ \sqrt{K}\zeta_{\total}  \right) \bigOp\left( \zeta_{\total} \right) = \smallOp((\Total)^{-1/2}),
			\end{align*}
			due to the conditions in Corollary \ref{cor:KRR} and conditions for \(K\) in Theorem \ref{thm:asymp} Result (ii). 
			
			\item \textbf{For (\Rnum{4})}, it can be seen that the mean is zero. In addition, \(\sup_{s,a}|\Delta_Q(s,a)| = \bigO(K^{-r_2})\) by Assumption \ref{assum:additional}(a),  and \(\omega^\pi(s,a)\) is bounded above by Assumption \ref{assum:weights}(a).  Applying similar arguments in the proof of Theorem \ref{thm:weights}, we can show that  \[(\mathrm{\Rnum{4}}) = \bigOp((\Total)^{-1/2}K^{-r_2}) = \smallOp((\Total)^{-1/2}).\]
			\item \textbf{For (\Rnum{5})}, we will control it by the convergence of weights and also the magnitude of \(\Delta_Q\).
			\begin{align*}
			(\mathrm{\Rnum{5}})&\leq \|\hat \omega^\pi - \omega^\pi\|_{\total} \|\Delta_Q - \gamma  g^\pi_*(\cdot, \cdot; \Delta_Q)\|_{\total}\\
			& \leq \bigOp\left( \frac{\sqrt{K}\log (\Total)}{\sqrt {\Total}} + K^{-r_1}+  \sqrt{K}\zeta_{\total} \right)  \bigO\left( K^{-r_2} \right) = \smallOp((\Total)^{-1/2}),
			\end{align*}
			due to conditions for \(K\) in Theorem \ref{thm:asymp}(ii). 
			
			\item \textbf{For (\Rnum{6})}, by
			using the convergence of \(\hat \omega^\pi\) to \(\omega^\pi\) and the independence of \(\epsilon_{i,t}\) for every $1 \leq i \leq n$ and $0 \leq t \leq T-1$, we are able to prove that 
			\begin{align}
			&\EE \left\{  \frac{1}{nT}\sum_{i=1}^n \sum_{t=0}^{T-1} (\hat \omega^\pi_{i,t}- \omega^\pi_{i,t})\epsilon_{i,t}  \right\}^2 \nonumber\\
			 = &\EE\left[ \EE \left\{  \frac{1}{nT}\sum_{i=1}^n \sum_{t=0}^{T-1} (\hat \omega^\pi_{i,t}- \omega^\pi_{i,t})\epsilon_{i,t} \right\}^2\mid \{S_{i,t}, A_{i,t}, i = 1,\dots,n, t= 0, \dots, T-1\} \right]\nonumber\\
			\leq &\EE\left[ \frac{\sup_{i,t}\EE \epsilon_{i,t}^2 }{(\Total)^2} \sum_{i=1}^n \sum_{t=0}^{T-1}(\hat \omega^\pi_{i,t}- \omega^\pi_{i,t})^2 \right] \lesssim \frac{1}{\Total} \EE \left[ \|\hat \omega^\pi - \omega^\pi\|_{\total}^2\right] = \smallO\left( \frac{1}{\Total} \right), \nonumber
			\end{align}
			The first inequality is derived based on  Assumption \ref{assum:additional}(b) and the second inequality comes from Theorem \ref{thm:weights}. Then we can show that 
			\begin{align}
			\frac{1}{nT}\sum_{i=1}^n \sum_{t=0}^{T-1} (\hat \omega^\pi_{i,t}- \omega^\pi_{i,t})\epsilon_{i,t}   = \smallOp((\Total)^{-1/2}).\nonumber
			\end{align}
			Therefore,
			\begin{align}
			(\mathrm{\Rnum{6}}) = \frac{1}{nT}\sum_{i=1}^n \sum_{t=0}^{T-1} \omega^\pi_{i,t} \epsilon_{i,t} + \frac{1}{nT}\sum_{i=1}^n \sum_{t=0}^{T-1} (\hat \omega^\pi_{i,t}- \omega^\pi_{i,t})\epsilon_{i,t} 
			= \frac{1}{nT}\sum_{i=1}^n \sum_{t=0}^{T-1} \omega^\pi_{i,t} \epsilon_{i,t}  + \smallOp((\Total)^{-1/2}) .\nonumber
			\end{align}

			\item \textbf{For (\Rnum{2})}, recall that the true weight function \(\omega^\pi(s,a) 
			\in \mathcal{G}\) by the assumption in Theorem \ref{thm:asymp}(ii). This is the key to bound (\Rnum{2}). 
			Take a function \(B =  \sum_{k=1}^K \beta_k B_k\) in optimization problem \eqref{eqn:approx_g}.  In addition, it can be seen that \(\hat g^\pi_\beta(\cdot,\cdot) := \sum_{k=1}^K \beta_k \hat g^\pi (\cdot, \cdot; B_k)\) by the structure of the optimization problem. Let \({Y}_{i,t} =  \sum_{a'}\pi(a'\mid S_{i,t+1}) B(S_{i,t+1}, a')\).

			Due to the optimization condition for \(\hat g^\pi_\beta\), we have
			\[\frac{d \left[  \frac{1}{nT}\sum_{i=1}^n \sum_{t=0}^{T-1} \left\{ {Y}_{i,t} - \hat g^\pi_\beta(S_{i,t}, A_{i,t}) - u \omega^\pi(S_{i,t}, A_{i,t}) \right\}^2 + \mu J_{\mathcal{G}}(\hat g^\pi_\beta + u \omega^\pi) \right] }{du } \Bigg|_{u=0}= 0 \]
			
			Recall that \(J_{\mathcal{G}}(g) = \|g\|_{\Hscr_2}^2\). Here, we abuse the notation slightly and denote by \(\langle\cdot, \cdot \rangle_{\Hscr_2}\), i.e., the inner product with respect to the RKHS specified in Assumption \ref{assum:RKHS}(b). Then we have 
			\begin{align*}
			0 & =  \frac{-2}{nT}\sum_{i=1}^n \sum_{t=0}^{T-1}  \omega^\pi(S_{i,t}, A_{i,t}) \{{Y}_{i,t} - g^\pi_*(S_{i,t}, A_{i,t}; B)\}  \\
			&  +  \frac{2}{nT}\sum_{i=1}^n \sum_{t=0}^{T-1} \omega^\pi(S_i, A_i) \{\hat g^\pi_\beta(S_{i,t}, A_{i,t}) - g^\pi_*(S_{i,t}, A_{i,t}; B)\} 
			+ 2 \mu \langle \hat g^\pi_\beta, \omega^\pi  \rangle_{\Hscr_2}\\
			& = (\mathrm{i}) + (\mathrm{ii}) + (\mathrm{iii}),
			\end{align*}

			Now, we can see that the term \(( \mathrm{ii})\) multiplied by $\gamma$ is exactly term (\Rnum{2}) that  we aim to bound. Therefore, it is sufficient to bound \((\mathrm{i})\) and \((\mathrm{iii})\). Since we require that \(\mu = \smallO((\Total)^{-1/2})\), \(|(\mathrm{iii}) |= \smallOp((\Total)^{-1/2})\). As for term \((i)\), recall that the  error \(e_{i,t} := \sum_{a'} \pi(a'\mid S_{i,t+1}) Q^\pi(S_{i, t+1}, a') - g^\pi_*(S_{i,t}, A_{i,t}; Q^\pi)\). 
			Then we have
			\begin{align*}
			\mathrm{(i)}  &=  \frac{-2}{nT}\sum_{i=1}^n \sum_{t=0}^{T-1} \omega^\pi(S_{i,t}, A_{i,t})e_{i,t} \\
			&  +\frac{-2}{nT}\sum_{i=1}^n \sum_{t=0}^{T-1} \omega^\pi(S_{i,t}, A_{i,t}) \left\{\sum_{a'}\pi(a'\mid S_{i,t+1}) \Delta_Q(S_{i,t+1},a) - g_*^\pi(S_{i,t}, A_{i,t}; \Delta_Q)\right\} 
			\end{align*}
			Next, we use Freedman's inequality \citep[][]{freedman1975tail} to bound the second term in (i). 
			For any integer \(1\leq g \leq nT\), let \(i(g)\) and \(t(g)\) be the quotient and the remainder of \(g + T -1\) divided by \(T\) satisfy \(g = \{i(g) - 1\} T + t(g) + 1 \) and \(0 \leq t(g)< T\). Let \(\mathcal{F}^{(0)} = \{S_{1,0}, A_{1,0}\}\). Then we iteratively define \(\{\mathcal{F}^{g}\}_{1\leq g \leq nT}\) as follows:
			\begin{align*}
			\mathcal{F}^{(g)} = \mathcal{F}^{(g-1)} \cup \{S_{i(g),t(g)+1}, A_{i(g), t(g)+1}\}, \mbox{ if } t(g) < T-1,\\
			\mathcal{F}^{(g)} = \mathcal{F}^{(g-1)} \cup \{S_{i(g),T}, S_{i(g)+1,0},  A_{i(g)+1, 0}\}, \mbox{ otherwise }.
			\end{align*}
			Take \(e^{(g)}:= \omega^\pi(S_{i(g), t(g)}, A_{i(g), t(g)})e^\Delta_{i(g), t(g)}\). From the definition of \(e_{i,t}^\Delta\), we can show that
			\begin{align*}
			\EE \{e^{(g)}\mid \mathcal{F}^{(g-1)}\} = \EE \{e^{(g)}\mid S_{i(g), t(g)}, A_{i(g), t(g)}\} = 0.
			\end{align*}
			Then \(\{\sum_{g=1}^G e^{(g)}\}\), \(G = 1,\dots, nT\) forms a martingale with respect to the filtration \(\{\sigma(\mathcal{F}^{(g)})\}_{g\ge 0}\), where \(\sigma(\mathcal{F}^{(g)})\) stands for the \(\sigma\)-algebra generated by \(\mathcal{F}^{(g)}\).

			Note that errors \( e_{i,t}^\Delta : = \sum_{a'}\pi(a'\mid S_{i,t+1}) \Delta_Q(S_{i,t+1},a) - g_*^\pi(S_{i,t}, A_{i,t}; \Delta_Q)\), \(i = 1,\dots, n\), \(t= 1,\dots, T\) 
			are bounded by \(\|\Delta_Q\|_\infty \leq \oldu{Qconst1} K^{-r_2}\). Then we can verify that \(\EE( e^{(g)})^2 \leq \oldu{weightsC1}^2 \oldu{Qconst1}^2 K^{-2r_2}\) by Assumptions \ref{assum:weights}(a) and \ref{assum:additional}(a). Now, we are able to apply Theorem 1.6 in \cite{freedman1975tail}. For all \(x > 0\), we have 
			\begin{align*}
			\pr\left(  \sum_{g=1}^{\Total} e^{(g)} > x \right) \leq \exp\left( -\frac{x^2/2}{\Total \oldu{weightsC1}^2 \oldu{Qconst1}^2 K^{-2r_2} + \oldu{weightsC1} \oldu{Qconst1} K^{-r_2}x/3} \right).
			\end{align*}
			
			As such, we can derive
			\begin{align*}
			\frac{-2}{nT}\sum_{i=1}^n \sum_{t=0}^{T-1}  \omega^\pi(S_{i,t}, A_{i,t}) \left\{\sum_{a'}\pi(a'\mid S_{i,t+1}) \Delta_Q(S_{i,t+1},a) - g_*^\pi(S_{i,t}, A_{i,t}; \Delta_Q)\right\} \\
			= - \frac{2}{\Total}\sum_{g=1}^{\Total} e^{(g)} = \bigOp(
			(\Total)^{-1/2} K^{-r_2}\log (\Total) ) = \smallOp((\Total)^{-1/2}).
			\end{align*}
			Then we can show that 
			\begin{align}
			(\mathrm{\Rnum{2}}) = \gamma (\mathrm{ii}) =  \gamma \left\{ \frac{1}{nT}\sum_{i=1}^n \sum_{t=0}^{T-1}    \omega^\pi_{i,t} e_{i,t} + \smallOp((\Total)^{-1/2}) \right\}
			\end{align}
			
		\end{itemize}

		Combining all the bounds from \((\Rnum{1})\) to \((\Rnum{6})\), we have
		\begin{align}
		\hat \calV(\pi) - \calV(\pi)& =  
		(\Rnum{1}) + (\Rnum{2}) + (\Rnum{3})  + (\Rnum{4})  + (\Rnum{5}) + (\Rnum{6})\nonumber\\
		& = \frac{1}{nT}\sum_{i=1}^n \sum_{t=0}^{T-1}  \omega^\pi_{i,t}\left\{ \epsilon_{i,t} + \gamma e_{i,t} \right\} + \smallOp((\Total)^{-1/2}).\nonumber
		\end{align}

		Taking \[\sigma^2 = \frac{1}{T} \sum_{t=0}^{T-1}\EE \left\{\omega^\pi(S_{t},A_{t})\left(R_t + \gamma\sum_{a' \in \calA}\pi(a' | S_{t+1}) Q^\pi(S_{t+1}, a') - Q^\pi(S_t, A_t)\right)\right\}^2,\]
		we see that 
		\[\sum_{i=1}^n \sum_{t=0}^{T-1}  \frac{\omega^\pi_{i,t}\left\{ \epsilon_{i,t} + \gamma e_{i,t} \right\}}{\sqrt{nT}\sigma}\] 
		forms a mean zero martingale with respect to the filtration \(\{\sigma(\mathcal{F}^{(g)})\}_{g\ge 0}\).  
		
		Then we can verify that 
		\begin{align}
		\max_{i,t} \frac{|\omega^\pi_{i,t}(\epsilon_{i,t} + \gamma e_{i,t})|}{\sigma\sqrt{nT}}
		\leq \oldu{weightsC1}\left\{\frac{R_{\max}}{\sigma \sqrt{nT}} + \gamma \frac{Q_{\max} + G_{\max} + \oldu{Qconst1}K^{-r_2}}{\sigma \sqrt{nT}}  \right\} \xrightarrow[]{P} 0\nonumber
		\end{align}
		due to  Assumption \ref{ass: reward}, \ref{assum:weights}(a) and \ref{assum:additional}(a) and   
		\begin{align}
		\sum_{i=1}^n\sum_{t=0}^{T-1}\frac{|\omega^\pi_{i,t}(\epsilon_{i,t} + \gamma e_{i,t})|^2}{\sigma^2nT} \xrightarrow[]{P} 1\nonumber
		\end{align}
		due to  the definition of \(\sigma\).
		
		Then by martingale central limit theorem \citep{mcleish1974dependent}, we can show the asymptotic normality,
		i.e., as long as either \(n\rightarrow \infty\) or \(T \rightarrow \infty\), we have
		\begin{align}
		\frac{\sqrt{nT}}{\sigma} \left\{ \hat \calV(\pi) - \calV(\pi)\right\}  \xrightarrow[]{d} N(0,1). 
		\end{align}
		
	\end{proof}

	\subsection{Technical Lemmas}
	In this section, we provide two technical lemmas (Lemma \ref{lem:g_truncation} and \ref{lem:weights_concen}) that involve the truncation arguments for Markov chains. They are developed to obtain sharp convergence rates that depend on both sample size \(n\) and the length of trajectories \(T\). In particular, Lemma \ref{lem:g_truncation} uses  empirical process arguments and is used for Theorem \ref{lem:approx_g}. Lemma \ref{lem:weights_concen} generalizes the standard matrix Bernstein’s inequality for independent Markov chains and is used for Theorem \ref{thm:weights}.

	\begin{lemma}
		\label{lem:g_truncation}
		Suppose Assumptions \ref{assum:approx_g} hold. For 
		\begin{align}
		I_2(B) &= 2(\EE - \PN)f^\pi \left\{\hat g^\pi(B), \g (B), B\right\}- \EE f^\pi \left\{\hat g^\pi(B), \g (B), B\right\}  -2 \mu\mathbf{J}^2(\hat g^\pi, g^\pi_*, B). \nonumber
		\end{align}
		where \(f^\pi\) and \(\mathbf{J}^2\) are defined in\eqref{eqn:fpi} and      \eqref{eqn:penalty} respectively. Then with probability at least \(1- 2\delta - 1/(\Total)\), for any \(B \in \mathcal{Q}\), we have 
		\begin{align*}
		I_2(B) \lesssim \mu + \frac{1}{\Total \mu^{\frac{\alpha}{1- \tau(2+\alpha)}} }  + \frac{[\log(\max\{1/\delta, \Total\})]^{1/\tau}}{\Total},
		\end{align*} 
		
		where \(0 < \tau \leq \frac{1}{3}\), and the leading constant depends on \(Q_{max}, G_{\max}, \alpha, \kappa, \oldu{C_s}\).
	\end{lemma}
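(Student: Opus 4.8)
The plan is to prove the lemma by casting $I_2(B)$ as a localized, one-sided empirical process over the function classes $\mathcal{G}$ and $\mathcal{Q}$, and to control it by a peeling argument combined with a truncation of each Markov trajectory that reduces the problem to a stationary, geometrically $\beta$-mixing setting. First I would record the identity that, because $\g(\cdot,\cdot;B)$ is the $L_2$-projection of $\delta_t^\pi(B)$ onto $\mathcal{G}$ (the optimizing property $\frac{1}{T}\sum_t\EE[(\delta_t^\pi(B)-\g(B))g]=0$ for all $g\in\mathcal{G}$ used in the proof of Theorem \ref{lem:approx_g}), one has $\EE f^\pi\{g_1, \g(B), B\} = \|g_1 - \g(B)\|^2$ for every $g_1 \in \mathcal{G}$, so that $-\EE f^\pi$ in the definition of $I_2(B)$ acts as a quadratic ``gain'' term; together with the penalty $-2\mu \mathbf{J}^2$ it localizes the analysis. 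Writing $r^2(B) := \|\hat g^\pi(B) - \g(B)\|^2 + \mu \mathbf{J}^2(\hat g^\pi(B), \g(B), B)$, it then suffices to bound $\sup_{g_1 \in \mathcal{G},\, B \in \mathcal{Q}} \big[ (\EE - \PN)f^\pi\{g_1, \g(B), B\} \big]$ over the slice where $r$ lies in a dyadic band and to sum the resulting tail bounds over all bands (the peeling device of \cite{liao2018just,farahmand_regularized_20092}). The boundedness conditions in Assumption \ref{assum:approx_g}(b)--(c) guarantee $\|f^\pi\|_\infty \lesssim 1$ and $\mathrm{Var}(f^\pi) \lesssim r^2$, which are the two inputs a Bernstein-type bound needs.

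The second and central step is to produce that Bernstein-type concentration for the non-stationary, dependent data. Here I would truncate each trajectory at a threshold $t_* \asymp \tau^{-1}\log(\Total)$: the first $t_*$ decision points of each chain form a short burn-in block handled crudely by $\|f^\pi\|_\infty \lesssim 1$, which I expect to contribute the $[\log(\max\{1/\delta,\Total\})]^{1/\tau}/\Total$ remainder, while for $t \geq t_*$ I would invoke the geometric ergodicity of Assumption \ref{assum:approx_g}(a) to couple the realized marginal of $(S_t, A_t)$ to the stationary law $\mathbb{G}^*$. Since $\|\mathbb{G}_t^b(\cdot \mid (s,a)) - \mathbb{G}^*\|_{\mathrm{TV}} \leq \phi(s,a)\kappa^t$ and $\int \phi\, d\mathbb{G}_0 \leq \oldu{C_s}$, the total coupling cost across all $n$ trajectories is $\lesssim \Total\, \kappa^{t_*}$, which the choice of $t_*$ drives below $1/\Total$ and thus folds into the stated failure probability. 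After coupling, the post-burn-in data are, up to this negligible error, drawn from the stationary chain, so the fluctuations $(\EE-\PN)f^\pi$ may be analyzed under $\mathbb{G}^*$.

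For the stationary part I would run the delicate peeling argument of \cite{farahmand_regularized_20092}: on each dyadic slice, combine the entropy bound $\log \mathcal{N}(\epsilon, \mathcal{G}_M, \|\cdot\|_\infty) \vee \log\mathcal{N}(\epsilon, \mathcal{Q}_M, \|\cdot\|_\infty) \lesssim (M/\epsilon)^{2\alpha}$ of Assumption \ref{assum:approx_g}(d) with a chaining/Bernstein estimate for the stationary $\beta$-mixing sum. Solving the resulting critical-radius inequality with entropy exponent $2\alpha$ should yield the main terms $\mu$ (the bias/variance balance of the penalty) and $1/(\Total\,\mu^{\alpha/(1-\tau(2+\alpha))})$ (the sup-norm contribution), where the exponent is inflated from the i.i.d.\ value $\alpha$ by the factor $(1-\tau(2+\alpha))^{-1}$ coming from the length-$t_*$ truncation. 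A union bound over the dyadic slices, with the supremum over $B\in\mathcal{Q}$ absorbed through the entropy of $\mathcal{Q}$ in Assumption \ref{assum:approx_g}(d), then upgrades the slicewise estimates to the uniform statement over all $B \in \mathcal{Q}$ with the claimed probability $1 - 2\delta - 1/\Total$.

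The main obstacle, and where most of the work lies, is the concentration step for the dependent sequence while preserving sharp scaling in both $n$ and $T$: one must couple the non-stationary chain to $\mathbb{G}^*$ without assuming the initial distribution is stationary, and simultaneously keep the entropy integral convergent with the inflated exponent. The delicate point will be to balance the truncation threshold $t_*$ (larger $t_*$ shrinks the coupling error but enlarges the burn-in remainder and inflates the exponent on $\mu$) so that the three terms of the bound are of comparable order; this balancing is precisely what the free parameter $\tau \in (0, 1/3]$ encodes, and the subsequent choice $\tau \asymp (1+\alpha)\log\log(\Total)/(\alpha\log(\Total))$ made in Theorem \ref{lem:approx_g} collapses the inflation back to a logarithmic factor.
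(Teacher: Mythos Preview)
Your high-level skeleton (truncate the trajectory, couple to the stationary law, then run a peeling argument) matches the paper's, but you have misidentified where the free parameter $\tau$ enters, and this leads to two concrete gaps.

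First, in the paper the truncation threshold is $T' = \lceil K_3 \log(nT)\rceil$ with $K_3 = -3/\log\kappa$ fixed by the mixing constant alone; it does \emph{not} carry $\tau$. The parameter $\tau$ is instead the exponent in the block length of Yu's independent-block construction applied to the post-burn-in part: one chooses $x_{\tilde N,l} \asymp (\tilde N t)^\tau (2^l)^p$ with $\tilde N = n(T-T')$, and it is this choice that produces both the exponent inflation $\alpha \mapsto \alpha/(1-\tau(2+\alpha))$ (from verifying the entropy condition (C5) in Farahmand's relative-deviation inequality) and the $[\log(\max\{1/\delta, nT\})]^{1/\tau}/(nT)$ term (from the requirement $\log v_{\tilde N,l} \le \tfrac12 \beta_1 x_{\tilde N,l}$ that controls the $\beta$-mixing remainder). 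Your proposal attributes both of these to the truncation length, so as written it does not explain the mechanism that actually generates the stated bound.

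Second, the burn-in block cannot be ``handled crudely by $\|f^\pi\|_\infty \lesssim 1$''. A sup-norm bound on the first $t_*$ points gives a contribution of order $t_*/T$, which does not vanish when $T$ is bounded and $n\to\infty$; the lemma, however, must hold uniformly in that regime. The paper instead runs a full i.i.d.\ peeling argument (Theorem 19.3 of Gy\"orfi et al.) on the averages $f^\pi_{T'}(Z^{T'}_i)$, $i=1,\dots,n$, exploiting independence across trajectories, and obtains a contribution $\lesssim (T'/T)\{\mu + (\mu^\alpha n)^{-1} + \log(1/\delta)/n\}$. Without this step you cannot recover the $nT$-scaling in the small-$T$ case. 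So the missing ingredients are: (i) a proper i.i.d.\ relative-deviation bound for the burn-in segment, and (ii) the independent-block construction with block size $(\tilde N t)^\tau$ for the stationary segment, which is the true source of the $\tau$-dependent terms.
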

	
	\begin{proof}[Proof of Lemma \ref{lem:g_truncation}]
		
		To deal with the scenario that the initial distribution \(\mathbb{G}\) is possibly not the stationary distribution \(\mathbb{G}^*\), we decompose \(I_2(B)\) into three components. 
		
		Take \(T' = \min\{\ceil{K_3\log(nT)},T\}\), where \(K_3\) is a constant to be specified later
		,  \(Z^T = [(S_{t}, A_{t})]_{t=0}^{T-1}\), \(f_T^\pi(g_1, g_2,B)(Z^T) = \frac{1}{T-1}\sum_{t=0}^T f^\pi(g_1, g_2,B)(S_t, A_t, S_{t+1})\). Take \(\mathbb{P}_n f_T^\pi(g_1, g_2,B)(Z^T)  =  n^{-1}\sum_{i=1}^n f_T^\pi(g_1, g_2,B)(Z^T_i )\) as the empirical evaluation for \(f_T^\pi\).  And we denote by \(\EE^*\), \({\pr}^*\) the expectation and probability under stationary distribution respectively. 
		\begin{align}
		I_2(B) &= 2(\EE - \PN)f^\pi \left\{\hat g^\pi(B), \g (B), B\right\}- \EE f^\pi \left\{\hat g^\pi(B), \g (B), B\right\}  -2 \mu\mathbf{J}^2(\hat g^\pi, g^\pi_*, B). \nonumber\\
		&= \frac{T'}{T} \mathrm{\Rnum{2}}(B) + \frac{T-T'}{T}(\mathrm{\Rnum{3}}(B) + \mathrm{\Rnum{4}}(B)), \nonumber
		\end{align}
		where 
		\begin{align*}
		\mathrm{\Rnum{2}}(B) = 2(\EE- \mathbb{P}_{n})f_{T'}^\pi \left\{\hat g^\pi(B), \g (B), B\right\}- \EE f^\pi_{T'} \left\{\hat g^\pi(B), \g (B), B\right\} 
		-2 \mu\mathbf{J}^2(\hat g^\pi, g^\pi_*, B),
		\end{align*}
		\begin{align}
		\mathrm{\Rnum{3}}(B) =& \left(\frac{1}{n(T-T')}\sum_{i=1}^n\sum_{t=T'}^{T-1}  -2f^\pi \left\{\hat g^\pi(B), \g (B), B\right\}(S_{i,t}, A_{i,t}, S_{i,t+1}) \right) +  \EE^* f^\pi \left\{\hat g^\pi(B), \g (B), B\right\}  \nonumber\\
		&-2 \mu\mathbf{J}^2(\hat g^\pi, g^\pi_*, B), \nonumber
		\end{align}
		and 
		\begin{align}
		\mathrm{\Rnum{4}}(B) = \EE \left[ \frac{1}{T-T'}\sum_{t=T'}^{T-1}  f^\pi \left\{\hat g^\pi(B), \g (B), B\right\}(S_{t}, A_{t}, S_{t+1}) \right] - \EE^* f^\pi \left\{\hat g^\pi(B), \g (B), B\right\} .\nonumber
		\end{align}

		For some fixed \(t >0\), 
		\begin{align*}
		&\pr\left( \exists B \in  \mathcal{Q}, I_2(B) > 3t \right)\\
		\leq & \pr\left( \exists B \in  \mathcal{Q},  \mathrm{\Rnum{2}}(B)>\frac{T}{T'}t \right) + \pr\left( \exists B \in  \mathcal{Q},  \mathrm{\Rnum{3}}(B)>\frac{T}{T-T'}t \right) +  \pr\left( \exists B \in  \mathcal{Q},  \mathrm{\Rnum{4}}(B)>\frac{T}{T-T'}t \right)
		\end{align*}
		Then we bound these three probabilities one by one. Note that when \(T' = T\), we do not need to take into account components \Rnum{3}(B) and \Rnum{4}(B). So in the following, we analyze \Rnum{3}(B) and \Rnum{4}(B) under the condition that \(T \gg \log(nT)\).
		
		\begin{itemize}
			\item \textbf{For \Rnum{2}(B):}
		\end{itemize}
		
		Take $\F^{T'}_l = \{ f_{T'}^\pi\left\{g, \g (B), B\right\}: J_{\mathcal{G}}^2(g) \leq \frac{2^{l}}{\mu}\frac{T}{T'}t, J_{\mathcal{G}}^2(\g(B)) \leq \frac{2^{l}}{\mu}\frac{T}{T'}t,  J_{\mathcal{Q}}^2(B) \leq \frac{2^{l}}{\mu}\frac{T}{T'}t,  B \in \Q  \}.$ 
		
		\begin{align*}
		&\pr\left( \exists B \in  \mathcal{Q}, \mathrm{\Rnum{2}}(B) > \frac{T}{T'}t \right)\\
		\leq& \sum_{l=0}^\infty \pr\left( \sup_{h \in \mathcal{F}^{T'}_l} \frac{(\EE - \mathbb{P}_n)h}{\EE(h) + 2^l\frac{T}{T'}t} > \frac{1}{2}\right),
		\end{align*}
		
		In the following, we verify the conditions (A1-A4) in Theorem 19.3 in \cite{gyorfi2006distribution} with \(\mathcal{F} = \mathcal{F}^T_l\), \(\epsilon = 1/2\) and \(\eta = 2^l t\). 
		
		It's easy to verify for (A1). For any \(h \in \mathcal{F}\), 
		\begin{align}
		\label{eqn:A1}
		\|f^\pi\{g, g^\pi_*(B), B\}\|_\infty \leq 6G_{\max}(3Q_{\max} + 3G_{\max})  \triangleq K_1
		\end{align}
		and therefore 
		\begin{align*}
		\label{}
		\|f_{T'}^\pi\{g, g^\pi_*(B), B\}\|_\infty \leq \frac{1}{T'} \sum_{t=0}^{T'-1}\|f^\pi\{g, g^\pi_*(B), B\}\|_\infty  \leq K_1
		\end{align*}
		
		For (A2),
		recall  $f^\pi  = f_1^\pi - f_2^\pi$ and thus
		\begin{align*}
		\EE [f^\pi\left\{g, \g (B), B\right\}^2] \leq 2\EE [f_1^\pi\left\{g, \g (B), B\right\}(S, A, S')^2] + 2\EE [f_2^\pi\left\{g, \g (B), B\right\}(S, A, S')^2].
		\end{align*}
		For the first term of RHS above:
		\begin{align*}
		& \EE [f_1^\pi\left\{g, \g (B), B\right\}(S, A, S')^2]  \\
		& = \EE\left[\left\{ \left\{\delta_t^\pi(B) - g(S, A)\right\}^2 -  \left\{\delta_t^\pi(B) - \g(S, A; Q)\right\}^2\right\}^2\right] \\
		& = \EE \left[ \left\{2\delta^\pi(B) - g(S, A)  - \g(S, A; B)\right\}^2 \left\{\g(S, A; B) - g(S, A)\right\}^2\right] \\
		& \leq \left\{2(Q_{\max}) + 2G_{\max}\right\}^2   \EE [\left(\g(S, A; B) - g(S, A)\right)^2] \\
		& = 4\left( Q_{\max} + G_{\max}\right)^2 \EE \left[f^\pi\left\{g, \g (B), B\right\}(S, A, S')\right],
		\end{align*}
		and the second term:
		\begin{align*}
		&  \EE [f_2^\pi\left\{g, \g (B), B\right\}(S, A, S')^2] \\
		& = \EE \left[\Bigdkh{ \dkh{\delta_t^\pi(B) - \g(S, A; B)}\dkh{g(S, A) - \g(S, A; B)}}^2\right] \\
		& \leq  \EE \left[ \dkh{\delta^\pi(B) - \g(S, A; B)}^2\dkh{g(S, A) - \g(S, A; B)}^2\right] \\
		& \leq \left(Q_{\max} + G_{\max}\right)^2 \EE [\left(\g(S, A; B) - g(S, A)\right)^2] \\
		&  = \left( Q_{\max} + G_{\max}\right)^2 \EE  [f^\pi\left\{g, \g (B), B\right\}(S, A, S')],
		\end{align*}
		where we use the fact that $\EE [f_2^\pi\left\{\gn(B), \g(B), B\right\}(S, A, S')] = 0$. Putting together, we can show that
		\begin{align}
		\label{eqn:A2}
		\EE [f^\pi\left\{g, \g (B), B\right\}(S, A, S')^2]
		& \leq K_2 \EE [f^\pi\left\{g, \g (Q), Q\right\}(S, A, S')],
		\end{align}
		where $K_2 = \left(Q_{\max} + G_{\max}\right)^2$. 
		Therefore,
		\begin{align*}
		\EE f_T^2 &= \EE \left( \frac{1}{T}\sum_{t=0}^{T-1} f^\pi\left\{g, \g (Q), Q\right\}(S_{t}, A_{t}, S_{t+1}) \right)^2 \\
		&\leq \frac{1}{T}\sum_{t=0}^{T-1} \EE [f^\pi\left\{g, \g (B), B\right\}(S_{t}, A_{t}, S_{t+1}) ^2]\\
		&\leq \frac{1}{T}\sum_{t=0}^{T-1} K_2  \EE [f^\pi\left\{g, \g (B), B\right\}(S_{t}, A_{t}, S_{t+1})] = K_2 \EE f^{\pi}_T
		\end{align*}
		
		To verify that Condition (A3) holds for every \(l\). It suffices to ensure the inequality holds when \(l=0\), i.e.,
		\begin{align}
		\label{eqn:A3}
		\sqrt{n}(1/2)^{3/2}\sqrt{\frac{T}{T'}t} \ge 288\max(K_1, \sqrt{2K_2})\\
		\frac{T}{T'}t \ge 8\{288\max(K_1, \sqrt{2K_2})\}^2 n^{-1}\nonumber
		\end{align}
		
		Next we verify (A4). We first consider the function class $\F_l = \{ f^\pi\left\{g, \g (B), B\right\}: J_{\mathcal{G}}^2(g) \leq \frac{2^{l}}{\mu} \frac{T}{T'}t, J_{\mathcal{G}}^2(\g(B)) \leq \frac{2^{l}}{\mu} \frac{T}{T'}t,  J_{\mathcal{Q}}^2(B) \leq \frac{2^{l}}{\mu} \frac{T}{T'}t,  B \in \Q  \}.$ 
		It is not hard to verify that with \(M = \sqrt{2^l \frac{T}{T'}t/\mu}\), 
		
		\begin{align*}
		&\log \left(\mathcal{N}(\epsilon , \F^T_l, \norm{\cdot}_n) \right)\\
		\leqconst  &  \log\left(\mathcal{N}(\epsilon , \F_l, \norm{\cdot}_{\total}) \right)\\
		\leqconst &  \log\left(\cal \mathcal{N}(\epsilon, \Q_{M}, \norm{\cdot}_\infty) \mathcal{N} (\epsilon, \G_{M}, \norm{\cdot}_\infty) \right).\\
		\end{align*}
		As a result of the entropy condition in Assumption \ref{assum:approx_g}
		\begin{align}
		&   \log \left(\mathcal{N}(\epsilon , \F^T_l, \norm{\cdot}_n) \right)\nonumber\\
		\leqconst&  \log  \mathcal{N}(\epsilon, \F_l, \norm{\cdot}_{\total}) \nonumber\\
		\leqconst& \log \mathcal{N}(\epsilon, \Q_{M}, \norm{\cdot}_\infty) + 2 \log \mathcal{N}(\epsilon, \G_{M}, \norm{\cdot}_\infty)\nonumber \\
		\leqconst &  \left(\frac{2^l}{\mu}\frac{T}{T'}t\right)^{\alpha} \epsilon^{-2\alpha}. \label{eqn:A4}
		\end{align}
		Now the Condition (A4) is satisfied if 
		\begin{align*}
		x^{\frac{1+\alpha}{2}} \ge 4*96 \sqrt{2}\max(K_1, 2K_2) \sqrt{\newl\ltxlabel{case1c1}}\left( \frac{2^l}{\mu} \frac{T}{T'}t\right)^{\alpha/2} n^{-1/2},
		\end{align*}
		for all \(x \ge 2^lTt/(8T')\) and any \(l\), where \(\oldl{case1c1}>0\) is a constant depending on \(\alpha, G_{\max}, Q_{\max}\).
		And the above inequality holds as long as 
		$$ \frac{T}{T'}t \ge  \newl\ltxlabel{case1c2} (\mu^\alpha n)^{-1}$$
		for some constant \(\oldl{case1c2}>0\) depending on \(\alpha,  G_{\max}, Q_{\max}\).
		
		Now, we are able to show that when \(\frac{T}{T'}t > \mu\), \(\frac{T}{T'}t \ge \oldl{case1c2}(\mu^\alpha n)^{-1}\), \( \frac{T}{T'}t \ge 8\{288\max(K_1, \sqrt{2K_2})\}^2 n^{-1}\nonumber\),
		\begin{align*}
		&\Pr\left( \exists B \in  \mathcal{Q}, \mathrm{\Rnum{2}}(B) > \frac{T}{T'}t \right)\\
		\leq& \sum_{l=0}^\infty \Pr\left( \sup_{h \in \mathcal{F}^T_l} \frac{(\EE - \mathbb{P}_n)h}{\EE h + 2^l\frac{T}{T'}t} > \frac{1}{2}\right)\\
		\leq & \sum_{l=0}^\infty\exp \left\{-\frac{n2^l \frac{T}{T'}t(1/2)^3}{128*2304\max(K_1^2, K_2)}  \right\}\\
		\leq & \newl\ltxlabel{case1c3} \exp\left( -\newl\ltxlabel{case1c4}n\frac{T}{T'}t \right),
		\end{align*}
		where \(\oldl{case1c3}, \oldl{case1c4} >0\) are some constants depending on \(K_1, K_2\). 
		Then, fixing some \(\delta > 0\), we can verify that there exists a constant \(\oldl{case1c5}>0\), such that by taking \(t \ge \newl\ltxlabel{case1c5} \log(1/\delta) T'/(nT)\) ,\(\Pr\left( \exists B \in  \mathcal{Q}, \mathrm{\Rnum{2}}(B) > \frac{T}{T'}t \right)\leq \oldl{case1c3}\exp( -\oldl{case1c4}n Tt / T') \leq \delta.\)

		Combining all the conditions on \(t\), there exists a constant \(\newl\ltxlabel{case1c6}\), such that  
		\begin{align*}
		\Pr\left( \exists B \in  \mathcal{Q}, \frac{T'}{T} \mathrm{\Rnum{2}}(B) > \oldl{case1c6}\left\{\frac{T'}{T}\mu + \frac{T'}{T}(\mu^\alpha n)^{-1} + (\log(1/\delta) + 1) \frac{T'}{nT}\right\}\right) \leq \delta.
		\end{align*}
		
		\begin{itemize}
			\item \textbf{For \Rnum{3}(B):}
		\end{itemize}

		By the  geometrically ergodic property (Assumption \ref{assum:approx_g}(a)), 
		\begin{align*}
		&\EE\left[  \Pr\left(\exists B,  \mathrm{\Rnum{3}}(B) \ge \frac{T}{T-T'}t \mid \{(S_{i,0}, A_{i,0})\}\right) -{\pr}^*\left( \exists B,  \mathrm{\Rnum{3}}(B) \ge  \frac{T}{T-T'}t  \right)\right]\\
		\leq &  \EE \left\{\sum_{i=1}^n \left\| \mathbb{G}^b_{T'}(\cdot \mid (S_{i,0}, A_{i,0})) - \mathbb{G}^*(\cdot)\right\|_{\mathrm{TV}} \right\}\\
		\leq & \EE \left\{\sum_{i=1}^n \phi(S_{i,0}, A_{i,0})\kappa^{T'}\right\} \\
		\leq & n \kappa^{T'} \int_{(s,a)} \phi(s,a) d\mathbb{G}(s,a) \leq  n \kappa^{T'} \oldu{C_s}.
		\end{align*}
		Thus we have 
		\begin{align}
		\label{eqn:stationary_decompose}
		\Pr\left(\exists B,  \mathrm{\Rnum{3}}(B) \ge  \frac{T}{T-T'}t  \right) \leq {\Pr}^*\left( \exists B,  \mathrm{\Rnum{3}}(B) \ge  \frac{T}{T-T'}t  \right)  +  n \kappa^{T'} \oldu{C_s} 
		\end{align}
		
		Next, we will focus on the term \({\Pr}^*( \exists B,  \mathrm{\Rnum{3}}(B) \ge  \frac{T}{T-T'}t)\), which is easier to deal with, given the stationary condition.

		Take \(  \tilde{N} = n(T-T')\). Due to that Markov chains are stationary, we can write 
		\begin{align*}
		\mathrm{\Rnum{3}}(B) 
		= &2(\EE^* - \mathbb{P}_{\tilde{N}})f^\pi \left\{\hat g^\pi(B), \g (B), B\right\}({S}, {A},{S}' ) - \EE^* f^\pi \left\{\hat g^\pi(B), \g (B), B\right\}({S}, {A},{S}' )\\
		&-2 \mu\mathbf{J}^2(\hat g^\pi, g^\pi_*, B).
		\end{align*}
		Similarly as how we handle \Rnum{2}(B), we will bound 
		\begin{align*}
		&{\Pr}^*\left( \exists B \in  \mathcal{Q},  \mathrm{\Rnum{3}} >  \frac{T}{T-T'}t  \right)
		\leq \sum_{l=0}^\infty {\Pr}^*\left( \sup_{h \in \mathcal{F}_l} \frac{(\EE^* - \mathbb{P}_{\tilde{N}})h}{\EE^* h + 2^l \frac{T}{T-T'}t } > \frac{1}{2}\right),
		\end{align*}
		Here $\F_l = \{ f^\pi\left\{g, \g (B), B\right\}: J_{\mathcal{G}}^2(g) \leq \frac{2^{l} }{\mu}\frac{T}{T-T'}t, J_{\mathcal{G}}^2(\g(B)) \leq \frac{2^{l}}{\mu}\frac{T}{T-T'}t,  J_{\mathcal{Q}}^2(B) \leq \frac{2^{l}}{\mu}\frac{T}{T-T'}t,  B \in \Q  \}.$ 
		
		We use the independent block techniques \citep{yu1994rates} and the peeling device with the exponential inequality for the relative deviation of the empirical process developed in \citep{farahmand2012regularized}. 
		
		Next, we bound each term of the above probabilities by using the independent block technique. We define a partition by dividing the index $\{1, \cdots, \tilde{N}\}$ into $2v_{\tilde{N}}$ blocks, where each block has an equal length $x_{\tilde{N}}$. The residual block is denoted by $R_{\tilde{N}}$, i.e., $\{(j-1)x_{\tilde{N}}+1, \cdots, (j-1)x_{\tilde{N}} + x_{\tilde{N}} \}_{j= 1}^{2v_{\tilde{N}}}$ and $R_{\tilde{N}} = \left\{2v_{\tilde{N}}x_{\tilde{N}} +1, \cdots,\tilde{N} \right\}$. Then it can be seen that  ${\tilde{N}}- 2x_{\tilde{N}} < 2 v_{\tilde{N}} x_{\tilde{N}} \leq {\tilde{N}}$ and the cardinality $|R_{\tilde{N}}|< 2x_{\tilde{N}}$.

		For each $l \geq 0$, we will use a different independent block sequence denoted by $(x_{\tilde{N}, l}, v_{\tilde{N}, l})$ with the residual $R_{l}$ and then optimize the probability bound by properly choosing $(x_{\tilde{N}, l}, v_{\tilde{N}, l})$ and $R_{l}$. More specifically, we choose
		$$
		x_{\tilde{N}, l} = \lfloor x'_{\tilde{N}, l} \rfloor \epc \mbox{and } \epc  v_{N, l} = \lfloor \frac{\tilde{N}}{2x_{\tilde{N},l}} \rfloor,
		$$
		where $x'_{\tilde{N}, l} = (\tilde{N}\frac{T}{T-T'}t)^\tau (2^l)^p$ and $v'_{\tilde{N}, l} = \frac{\tilde{N}}{2x'_{\tilde{N}, l}}$ with some positive constants $\tau$ and $p$  determined later. We require $\tau \leq p \leq \frac{1}{2 + \alpha} \leq \frac{1}{2}$. We also need $\frac{T}{T-T'}t \geq \frac{1}{\tilde{N}}$ so that $x'_{\tilde{N}, l}\geq 1$. Suppose $\tilde{N}$ is sufficiently large such that
		\begin{align}\label{sample size constraint for lemma of value function}
		\tilde{N} \geq \newl\ltxlabel{case2c0} \triangleq 4 \times 8^2 \times K_1. 
		\end{align}
		
		In the following, we consider two cases. The first case considers any $l$ such that $x'_{\tilde{N},l} \geq \frac{\tilde{N}}{8}$. In this case, since $\tau \leq p$, we can show that $x'_{\tilde{N}, l} \leq (\tilde{N} \frac{T}{T-T'}t 2^l)^p$. Combining with the sample size requirement,  we can obtain that  $(\tilde{N} \frac{T}{T-T'}t 2^l) \geq (\frac{\tilde{N}}{8})^\frac{1}{p} \geq 4\tilde{N}K_1$. Then we can show that in this case,
		$$
		\frac{(\EE^*-\mathbb{P}_{\tilde{N}}) \left\{h(Z)\right\}}{\EE^* \left\{h(Z)\right\} + 2^l  \frac{T}{T-T'}t } \leq  \frac{2K_1}{2^l \frac{T}{T-T'}t } \leq  \frac{1}{2}.
		$$
		Therefore, when $ \frac{T}{T-T'}t  \geq \frac{1}{\tilde{N}}$ and $x'_{\tilde{N}, l} \geq \frac{\tilde{N}}{8}$, 
		$$
		{\Pr}^*\left( \sup_{h \in \F_l}  \frac{(\EE^*-\mathbb{P}_{\tilde{N}}) \left\{h(Z)\right\}}{\EE^*  \left\{h(Z)\right\} + 2^l  \frac{T}{T-T'}t } > \frac{1}{2} \right) = 0.
		$$
		
		The second case that we consider is when $x'_{\tilde{N}, l} < \frac{\tilde{N}}{8}$. 
		Under the geometric ergodicity assumption, it follows from Theorem 3.7 \cite{bradley2005basic} that the stationary Markov chain is exponentially \(\boldsymbol{\beta}\)-mixing. The $\boldsymbol{\beta}$-mixing coefficient at time lag $k$ satisfies that $\beta_k \leq \beta_0 \exp(-\beta_1 k)$ for $\beta_0 \geq 0$ and $\beta_1 > 0$. 
		We apply the relative deviation concentration inequality for the exponential $\boldsymbol{\beta}$-mixing stationary process given in Theorem 4 of \cite{farahmand2012regularized}, which combined results in \cite{yu1994rates} and Theorem 19.3 in \cite{gyorfi2006distribution}. To use their results, it then suffices to verify Conditions (C1)-(C5) in Theorem 4 of \cite{farahmand2012regularized} with $\F = \F_l$, $\epsilon = 1/2$ and $\eta = 2^{l} t$ to get an exponential inequality for each term in the summation. 
		First of all, (C1) and (C2) have been verified in \eqref{eqn:A1} and \eqref{eqn:A2}.

		To verify Condition  (C3), without loss of generality, we assume $K_1 \geq 1$. Otherwise, let $K_1 = \max(1, K_1)$. Then we know that $2K_1x_{\tilde{N}, l} \geq \sqrt{2K_1x_{\tilde{N}, l}}$ since $x_{\tilde{N}, l} \geq 1$. We need to verify $\sqrt{\tilde{N}} \epsilon\sqrt{1-\epsilon} \sqrt{\eta} \geq 1152K_1x_{\tilde{N}, l}$, or it suffices to have $\sqrt{\tilde{N}} \epsilon\sqrt{1-\epsilon} \sqrt{\eta} \geq 1152K_1x'_{\tilde{N}, l}$ since $x'_{\tilde{N}, l} \geq x_{\tilde{N}, l}$ by definition. Recall that $\epsilon = 1/2$ and $\eta = 2^{l} \frac{T}{T-T'}t$. To show this, it is enough to verify that
		$$
		\sqrt{\tilde{N}}\frac{\sqrt{2}}{4}  \sqrt{2^l \frac{T}{T-T'}t} \geq 1152K_1(\tilde{N} \frac{T}{T-T'}t2^l)^p,
		$$
		since $(\tilde{N} \frac{T}{T-T'}t2^l)^p \geq x'_{\tilde{N}, l}$. Recall that $p \leq \frac{1}{2+\alpha}$, then it is sufficient to 
		let $ \frac{T}{T-T'}t \geq \frac{2304\sqrt{2}K_1}{\tilde{N}}\triangleq \frac{c_1'}{\tilde{N}}$ so that the above inequality holds for every $l \geq 0$.
		
		Next we verify (C4) that $\frac{|R_l|}{\tilde{N}} \leq \frac{\epsilon\eta}{6K_1}$.  Recall that $|R_l| < 2 x_{\tilde{N}, l} \leq 2x'_{\tilde{N}, l} = 2(\tilde{N} \frac{T}{T-T'}t)^\tau(2^l)^p$. So if $ \frac{T}{T-T'}t \geq \frac{\newl\ltxlabel{case2c1}}{\tilde{N}}$ for some positive constant $\oldl{case2c1}$ depending on $K_1$, we can have
		$$
		\frac{\epsilon\eta}{6K_1} = \frac{2^l}{12K_1} \frac{T}{T-T'}t \geq \frac{2(\tilde{N} \frac{T}{T-T'}t)^\tau (2^l)^p}{\tilde{N}} = \frac{2x'_{\tilde{N},l}}{\tilde{N}}> \frac{|R_l|}{\tilde{N}}.
		$$
		In addition, $|R_l| \leq 2 x'_{\tilde{N}, l} < \frac{\tilde{N}}{2}$.
		
		Lastly we verify condition (C5). Using the similar arguments in verifying \eqref{eqn:A4}, we can show that  
		\begin{align*}
		& \log  \mathcal{N}(\epsilon, \F_l, \norm{\cdot}_{\total}) 
		\leqconst   \left(\frac{2^l  \frac{T}{T-T'}t}{\mu}\right)^{\alpha} \epsilon^{-2\alpha}.
		\end{align*}
		Then Condition (C5) is satisfied if the following inequality holds for all $x \geq (2^l \frac{T}{T-T'}tx_{\tilde{N}, l})/8$, 
		\begin{align*}
		\frac{\sqrt{v_{\tilde{N}, l}} (1/2)^2 x}{96x_{\tilde{N}, l}\sqrt{2} \max(K_1, 2K_2)} 
		&\geq \int_{0}^{\sqrt{x}} \sqrt{\newl\ltxlabel{case2c2}} \left(\frac{2^l  \frac{T}{T-T'}t}{\mu}\right)^{\alpha/2} \left(\frac{u}{2x_{\tilde{N}, l}}\right)^{-\alpha}  du \\
		&= x_{\tilde{N}, l}^{\alpha} x^{\frac{1-\alpha}{2}} \sqrt{2^\alpha \oldl{case2c2} } \left(\frac{2^l  \frac{T}{T-T'}}{\mu}\right)^{\alpha/2},
		\end{align*}
		where \(\oldl{case2c2}>0\) is a constant depending on \(\alpha, G_{\max}, Q_{\max}\).
		
		It is enough to guarantee that
		$$
		\frac{\sqrt{v_{\tilde{N}, l}} (1/2)^2 x}{96x_{\tilde{N}, l}\sqrt{2} \max(K_1, 2K_2)} \geq x_{\tilde{N}, l}^{\alpha} x^{\frac{1-\alpha}{2}} \sqrt{2^\alpha \oldl{case2c2}} \left(\frac{2^l  \frac{T}{T-T'}t}{\mu}\right)^{\alpha/2}.
		$$
		After some algebra, we can check that the above inequality holds if for some constant $\newl\ltxlabel{case2c3}>0$,
		\begin{align*}
		\frac{T}{T-T'}t \geq \oldl{case2c3} \frac{(x_{\tilde{N}, l})^{1+\alpha}}{v'_{\tilde{N}, l}2^l \mu^\alpha},
		\end{align*}
		or equivalently, for some constant \(\oldl{case2c4}>0\),
		\begin{align*}
		\frac{T}{T-T'}t \geq \newl\ltxlabel{case2c4} \frac{1}{\tilde{N} \mu^{\frac{\alpha}{1 - \tau(2+ \alpha)}}\left(2^l\right)^{\frac{1 - p(2+\alpha)}{1 - \tau(2 + \alpha)}}},
		\end{align*}
		by the definition that $x_{\tilde{N}, l} \leq x'_{\tilde{N}, l}$ and $v'_{\tilde{N}, l} \leq v_{\tilde{N}, l}$. To summarize, if for any $l\geq 0$,
		$$
		\frac{T}{T-T'}t \geq \mu + \oldl{case2c4} \frac{1}{\tilde{N} \mu^{\frac{\alpha}{1 - \tau(2+ \alpha)}}\left(2^l\right)^{\frac{1 - p(2+\alpha)}{1 - \tau(2 + \alpha)}}},
		$$
		then the entropy inequality in Condition (C5) above holds. Since $0 < \tau \leq p \leq \frac{1}{1 + 2\alpha}$, the right hand side is a non-increasing function of $l$. Then
		as long as, 
		$$
		\frac{T}{T-T'}t \geq \mu + \oldl{case2c4} \frac{1}{\tilde{N} \mu^{\frac{\alpha}{1 - \tau(2+ \alpha)}}},
		$$
		Condition (C5) holds .
		
		To summarize, Conditions (C1)--(C5) in Theorem 4 in \cite{farahmand2012regularized} with $\F = \F_l$, $\epsilon = 1/2$ and $\eta = 2^{l} t$ hold for every $l \geq 0$ when $ \frac{T}{T-T'}t \geq \oldl{case2c1}\tilde{N}^{-1}$ for some constant $\oldl{case2c1}\geq1$ and $ \frac{T}{T-T'}t \geq \mu + \oldl{case2c4} \frac{1}{\tilde{N} \mu^{\frac{\alpha}{1 - \tau(2+ \alpha)}}}$. Thus when $\tilde{N}\geq \oldl{case2c0}$, there exists a constant \(\newl\ltxlabel{case2c5}>0\), such that
		\begin{align*}
		& {\Pr}^*\left\{\exists B \in  \Q,  \mathrm{\Rnum{3}}(B) >  \frac{T}{T-T'}t\right\} \\
		& \leq \sum_{l=0}^\infty {\Pr}^* \left[\sup_{h \in \F_l}  \frac{(\EE^*-\mathbb{P}_{\tilde{N}}) \left\{h(Z)\right\}}{\EE^* \left\{h(Z)\right\} + 2^l  \frac{T}{T-T'}t} > \frac{1}{2} \right] \\
		& \leq \sum_{l=0}^\infty 120 \exp\left\{-\oldl{case2c5} \frac{v_{\tilde{N},l }^{'2} \frac{T}{T-T'}t2^l}{\tilde{N}}\right\} + 2\beta_{x_{\tilde{N},l}}v_{\tilde{N}, l} \\
		& \leq \sum_{l=0}^\infty 120 \exp\left\{-\oldl{case2c5}\frac{v_{\tilde{N},l }^{'2} \frac{T}{T-T'}t2^l}{\tilde{N}}\right\} + 2\beta_0\exp\left(-\beta_1 x_{\tilde{N}, l} + \log v_{\tilde{N}, l}\right) ,
		\end{align*}
		where the last inequality is due to  that the Markov chain is stationary and exponentially \(\beta\)-mixing.  When $t \geq \frac{\left(4/\beta_1\log(\tilde{N})\right)^{1/\tau}}{\tilde{N}}$, we have $\log v_{\tilde{N}, l} \leq \frac{1}{2}\beta_1 x_{\tilde{N}, l}$ by using $x_{\tilde{N}, l}' \leq 2x_{\tilde{N}, l}$ and $v_{\tilde{N}, l} \leq \tilde{N}$. This will further imply that $2\beta_{x_{\tilde{N},l}}v_{\tilde{N}, l} \leq 2\beta_0\exp\left(-\beta_1 x_{\tilde{N}, l}/2 \right)$. 
		
		Then we will have 
		\begin{align*}
		&{\Pr}^*\left\{\exists B \in  \Q,  \mathrm{\Rnum{3}}(B) >  \frac{T}{T-T'}t\right\} \\
		& \leqconst \sum_{l=0}^\infty 120 \exp\left(-\oldl{case2c5}(\tilde{N} \frac{T}{T-T'}t)^{1-2\tau}(2l)^{1-2p} \right)+ 2\beta_0\exp\left(-\beta_1 (\tilde{N} \frac{T}{T-T'}t)^{\tau}(2^l)^p\right)\\
		& \leq \newl\ltxlabel{case2c6} \exp\left(-\newl\ltxlabel{case2c7}(\tilde{N} \frac{T}{T-T'}t)^{1-2\tau}\right) + \newl\ltxlabel{case2c8}\exp\left(-\newl\ltxlabel{case2c9} (\tilde{N} \frac{T}{T-T'}t)^{\tau}\right),
		\end{align*}
		where \(\oldl{case2c6}\), \(\oldl{case2c7}\), \(\oldl{case2c7}\), \(\oldl{case2c8}\) are some positive constants.
		
		For some fixed \(\delta > 0\), we can verify that there exists  a  constant \(\newl\ltxlabel{case2c10} > 0\), such that \(\oldl{case2c6} \exp(-\oldl{case2c7}(\tilde{N}t)^{1-2\tau}) + \oldl{case2c8}\exp(-\oldl{case2c9} (\tilde{N} t)^{\tau}) \leq \delta\) for  \(t \ge \oldl{case2c10} \max\{ [\log(1/\delta)\tilde{N}^{-1}]^{1/(1-2\tau)}, [\log(1/\delta) \tilde{N}^{-1}]^{1/\tau} \}\).
		
		Substituting this probability into \eqref{eqn:stationary_decompose} and combining all the conditions over \(t\), we can derive that there exists a constant \(\newl\ltxlabel{case2c11}>0\), such that 
		\begin{align*}
		&\Pr\left(\exists B,  \frac{T-T'}{T}\mathrm{\Rnum{3}}(B) \ge \oldl{case2c11} \left\{\mu + \frac{1}{\tilde{N} \mu^{\frac{\alpha}{1- \tau(2+\alpha)}} } + \frac{(\log \tilde{N})^{1/\tau} + 1 + (\log(1/\delta))^{1/(1-2\tau)} +(\log(1/\delta))^{1/\tau} }{\tilde{N}} \right\}  \right) \\
		\leq & \delta + n\kappa^{T'}\oldu{C_s}
		\end{align*}
		\begin{itemize}
			\item \textbf{For \Rnum{4}(B):}
		\end{itemize}
		
		\begin{align*}
		| \mathrm{\Rnum{4}}(B)| &\leq \EE \frac{1}{T-T'} \sum_{t=T'}^{T-1}K_1 \left\|\mathbb{G}^b_t(\cdot \mid (s_0,a_0)) - G_s(\cdot) \right\|_{\mathrm{TV}}\\
		& \leq K_1 \oldu{C_s}\frac{1}{T-T'} \kappa^{T'}/(1-\kappa).
		\end{align*}

		Recall that \(T' = \ceil{K_3 \log (nT)}\) given the condition that \(T \gg \log(nT)\). Then as long as we take \(K_3 = -3(\log kappa)^{-1} \), then for sufficiently largest \(\Total\), we have  
		\[ n\kappa^{T'}\oldu{C_s}= C_1n^{-2}T^{-3} \leq  n^{-1}T^{-1},\] and 
		\[\frac{\kappa^{T'}}{(T-T')(1-\kappa)} = \bigO\left(  \frac{1}{n^3 T^{4}(1-\kappa)}\right)  = \smallO(t),\]
		due to the conditions of \(t\).

		Combining all the results we have derived for \Rnum{2}(B), \Rnum{3}(B) and \Rnum{4}(B), we have that   with probability at least \(1 - 2\delta - 1/(\Total)\), for any \(B \in \mathcal{Q}\), 
		\begin{align*}
		I_2(B) \lesssim  \frac{\log (\Total)}{T}\left\{ \mu + (\mu^\alpha n)^{-1}\}  + \frac{\log (\Total) (1 + \log(1/\delta))}{\Total}
		\right\} \\
		+  \mu + \frac{1}{\Total \mu^{\frac{\alpha}{1- \tau(2+\alpha)}} } + \frac{(\log (\Total))^{1/\tau} + 1 + (\log(1/\delta))^{1/(1-2\tau)} +(\log(1/\delta))^{1/\tau} }{\Total} .
		\end{align*}

		Recall that we require \(0 < \tau \leq p \leq 
		1/(2+ \alpha)\), consider any \(\tau \leq 1/3\) and pick \(p\) any value between \(\tau\) and \(1/(1+ 2\alpha)\).  Then we can simplify the bound into 
		
		\begin{align*}
		I_2(B) \lesssim  \mu + \frac{1}{\Total \mu^{\frac{\alpha}{1- \tau(2+\alpha)}} }  + \frac{[\log(\max\{1/\delta, \Total\})]^{1/\tau}}{\Total}.
		\end{align*}
		
	\end{proof}

	\begin{lemma}
		\label{lem:maxeigen}
		Suppose Assumption  \ref{assum:approx_g}(a) (\ref{assum:RKHS}(a)) and  Assumption \ref{assum:weights}(c) hold. Then we have  
		\begin{multline*}
		\lambda_{\max}\left\{\frac{1}{nT}\sum_{i=1}^n\sum_{t=0}^{T-1} \bm L_K(S_{i,t}, A_{i,t}) \bm L_K(S_{i,t}, A_{i,t})^\tp   - \EE \frac{1}{T}\sum_{t=0}^{T-1} \bm L_K(S_{t}, A_{t}) \bm L_K(S_{t}, A_{t})^\tp\right\}\\
		= \bigOp\left( \sqrt{\frac{K}{\Total}}\log (\Total) \right)=\smallOp(1).\nonumber
		\end{multline*}
	\end{lemma}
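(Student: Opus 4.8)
The plan is to bound the operator norm of the centered matrix and then use that $\lambda_{\max}$ of a symmetric matrix is dominated by its operator norm. Write $\bm{M}(s,a) := \bm L_K(s,a)\bm L_K(s,a)^\tp$, a rank-one positive semidefinite $K\times K$ matrix, and set
\begin{align*}
\Delta := \frac{1}{nT}\sum_{i=1}^n\sum_{t=0}^{T-1}\bm{M}(S_{i,t},A_{i,t}) - \EE\frac{1}{T}\sum_{t=0}^{T-1}\bm{M}(S_t,A_t).
\end{align*}
Since $\Delta$ is symmetric, $\lambda_{\max}(\Delta)\le \|\Delta\|_{\op}$, so it suffices to bound $\|\Delta\|_{\op}$. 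First I would record the two deterministic ingredients supplied by Assumption \ref{assum:weights}(c). The uniform bound $\sup_{s,a}\|\bm B_K(s,a)\|_2\le \oldu{weightsC2}K^{1/2}$, together with conditional Jensen's inequality applied to $g^\pi_*(s,a;\bm B_K)=\EE\{\sum_{a'}\pi(a'\mid S')\bm B_K(S',a')\mid s,a\}$, gives $\sup_{s,a}\|g^\pi_*(s,a;\bm B_K)\|_2\le \oldu{weightsC2}K^{1/2}$ and hence $\sup_{s,a}\|\bm L_K(s,a)\|_2\le (1+\gamma)\oldu{weightsC2}K^{1/2}$. Consequently each summand obeys $\|\bm{M}(s,a)\|_{\op}=\|\bm L_K(s,a)\|_2^2\lesssim K$.

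The key point is to control the matrix variance rather than to rely on this crude boundedness alone. In the Loewner order,
\begin{align*}
\EE\{\bm M(S,A)^2\}=\EE\{\|\bm L_K(S,A)\|_2^2\,\bm L_K(S,A)\bm L_K(S,A)^\tp\}\preceq \Big(\sup_{s,a}\|\bm L_K(s,a)\|_2^2\Big)\,\EE\{\bm L_K(S,A)\bm L_K(S,A)^\tp\},
\end{align*}
so that, invoking the largest-eigenvalue bounds of Assumption \ref{assum:weights}(c) for both the time-averaged and the stationary second-moment matrices, the per-observation matrix variance proxy is $\lesssim K\oldu{weightsC4}$. I would then apply a matrix Bernstein inequality to the $nT$ dependent summands. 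The dependence within each trajectory is handled exactly as in the truncation-plus-independent-block device underlying Lemmas \ref{lem:g_truncation} and \ref{lem:weights_concen}: using geometric ergodicity (Assumption \ref{assum:RKHS}(a)/\ref{assum:approx_g}(a)) one truncates each chain at $T'\asymp\log(\Total)$, bounds the non-stationary burn-in part by the operator-norm bound $\lesssim K$ (carrying the harmless factor $T'/T$), and treats the near-stationary remainder via exponential $\beta$-mixing and Yu's independent-block technique, reducing it to a matrix Bernstein bound over nearly independent blocks. Because the $n$ trajectories are i.i.d., the regime of bounded $T$ is covered by taking $T'=T$ and applying the inequality directly across the $n$ independent units.

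Combining the variance proxy $\lesssim K$ with the boundedness $\lesssim K$, matrix Bernstein yields, with high probability,
\begin{align*}
\|\Delta\|_{\op}\lesssim \sqrt{\frac{K\log K}{\Total}}+\frac{K\log K}{\Total}.
\end{align*}
Since $\log K\lesssim \log(\Total)$ (as $K$ grows at most polynomially in $\Total$ under the standing conditions) and $K/\Total\to 0$ (which follows from $K=\smallO(\zeta_{\total}^{-2})$ in Assumption \ref{assum:weights}(e)), the second term is of smaller order than the first, giving $\|\Delta\|_{\op}=\bigOp\{\sqrt{K/\Total}\,\log(\Total)\}$; the same growth condition forces $K\log^2(\Total)/\Total\to 0$, so the bound is $\smallOp(1)$. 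The main obstacle is the dependence across $t$ within each trajectory: one must import the $\beta$-mixing truncation machinery into the matrix setting while retaining the dimension-free $\sqrt{K}$ scaling. This is why the argument must exploit the matrix variance (incurring only a $\log K$ dimension factor in matrix Bernstein) rather than a naive union bound over a net of the sphere, which would lose an extra factor of $\sqrt{K}$.
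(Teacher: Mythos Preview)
Your proposal is correct and follows the same overall strategy as the paper: bound the uniform norm $\|\bm L_K\|_2^2\lesssim K$ and the matrix-variance proxy via Assumption~\ref{assum:weights}(c), then invoke matrix Bernstein while handling within-trajectory dependence through geometric ergodicity. The paper's implementation differs in organization. Rather than truncating each chain at $T'\asymp\log(\Total)$ and applying Yu's independent-block reduction directly (which is the route of Lemma~\ref{lem:weights_concen} that you are mirroring), the paper argues in two levels: first it applies the matrix concentration inequality for mixing sequences of Chen et~al.~(2015, Theorem~4.2) to obtain a per-trajectory deviation bound under the \emph{stationary} law, transfers this to the actual law via a Cauchy--Schwarz change-of-measure using the bound on $\EE_{(S,A)\sim\mathbb G^*}[\bar p_T^b(S,A)/p^*(S,A)]^2$ in Assumption~\ref{assum:weights}(c), takes a union bound over $i$, and then applies matrix Bernstein across the $n$ independent (truncated) trajectory sums. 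Both routes yield the same $\sqrt{K/\Total}\,\log(\Total)$ rate; yours keeps the argument self-contained in the spirit of Lemma~\ref{lem:weights_concen}, while the paper's two-level approach avoids re-deriving the within-chain matrix concentration by citing an existing result. One small imprecision in your write-up: the extra logarithmic factor does not arise from the $\log K$ in matrix Bernstein alone but from the block length $q\asymp\log(\Total)$ (or equivalently $T'$), so the bound should read $\sqrt{K\log K\cdot\log(\Total)/(\Total)}$ directly rather than being upgraded after the fact.
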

	
	\begin{proof}
		Recall that \(\overline{p}_{T}^b = 1/T \sum_{t=0}^T p_t^b\).  
		By Assumption \ref{assum:weights}(c), we can show that \\
		\(\sup_{(s,a)} \lambda_{\max}\{\bm L_K(s,a) \bm L_K(s,a)^\tp\} \leq \sup_{(s,a)} \|\bm L_K(s,a)\|_2^2 \leq (\oldu{weightsC2} + \gamma \oldu{weightsC2})^2K\).
		Next, let us consider two cases. 
		\begin{itemize}
			\item \textbf{Case 1: \(T\) is fixed, i.e., \(T \asymp 1\). }
			
			Take \(\bm Z_i = \frac{1}{T} \sum_{t=0}^T \bm L_K(S_{i,t},A_{i,t}) \bm L_K(S_{i,t},A_{i,t})^\tp\). From above, we can see that \(\lambda_{\max} \left\{ \bm Z_i \right\} \leq (\oldu{weightsC2} + \gamma \oldu{weightsC2})^2 K \) for any \(i\).
			
			\begin{align}
			\lambda_{\max}\left\{ \sum_{i=1}^n \EE \bm Z_i \bm Z_i^\tp \right\}& \leq n  \lambda_{\max}\left\{  \EE \bm Z_i \bm Z_i^\tp \right\}\nonumber\\
			& = n \sup_{i}\lambda_{\max}\{\bm Z_i\} \lambda_{\max}\left\{  \EE \bm Z_i  \right\}\nonumber\\
			& \leq n (\oldu{weightsC2} + \gamma \oldu{weightsC2})^2 K \oldu{weightsC4}. \nonumber
			\end{align}
			Then, applying  Matrix Bernstein inequality \citep[][]{tropp2015introduction}, we have 
			\begin{align}
			& \pr\left(\lambda_{\max} \left\{ \frac{1}{nT}\sum_{i=1}^n\sum_{t=0}^{T-1} \bm L_K(S_{i,t}, A_{i,t}) \bm L_K(S_{i,t}, A_{i,t})^\tp  - \EE \left(\frac{1}{T}\sum_{t=0}^{T-1} \bm L_K(S_{t}, A_{t}) \bm L_K(S_{t}, A_{t})^\tp  \right)\right\} \ge x\right)\nonumber\\
			& \leq \pr\left(\lambda_{\max} \left\{ \frac{1}{n}\sum_{i=1}^n \bm Z_i - \EE \bm Z_i \right\}\ge x\right) \nonumber\\
			& \leq (K + K) \exp\left( -\frac{-nx^2/2}{(\oldu{weightsC2} + \gamma \oldu{weightsC2})^2 K \oldu{weightsC4} + (\oldu{weightsC2} + \gamma \oldu{weightsC2})^2 K x/3} \right)\nonumber
			\end{align}
			Due to the condition that \(K = \smallO(\sqrt{\Total/\log (\Total)})  = \smallO(\sqrt{n/\log n})\), we have 
			\begin{align*}
			&  \lambda_{\max}\left\{\frac{1}{nT}\sum_{i=1}^n\sum_{t=0}^{T-1} \bm L_K(S_{i,t}, A_{i,t}) \bm L_K(S_{i,t}, A_{i,t})^\tp   - \EE \frac{1}{T}\sum_{t=0}^{T-1} \bm L_K(S_{t}, A_{t}) \bm L_K(S_{t}, A_{t})^\tp\right\} \\
			=& \bigOp\left( \sqrt{\frac{K\log K}{n}} \right) = \smallOp(1).
			\end{align*}
			
			\item  \textbf{Case 2: \(T \rightarrow \infty\). }
			
			Take \(\pr^*\) as the stationary distribution and \(\EE^*\) as the expectation taken over the stationary distribution. 
			Take \(\bm Z_{i,t} = \bm L_K(S_{i,t}, A_{i,t})\bm L_K(S_{i,t}, A_{i,t})^\tp\). {From Assumption \ref{assum:weights}(c), we know that \(\lambda_{\max}\left\{ {\EE}^* \bm Z_{i,t} \right\}  \leq C_7\).}  It can be also verified that 
			\(\lambda_{\max}\{\bm Z_{i,t}\} \leq (\oldu{weightsC2} + \gamma \oldu{weightsC2})^2 K \) for any \(i,t\), and 
			\begin{align*}
			\lambda_{\max}\left\{ {\EE}^* \bm Z_{i,t} \bm Z_{i,t} \right\} \leq \sup_{i,t} \lambda_{\max} \left\{ \bm Z_{i,t} \right\}  \lambda_{\max}\left\{ {\EE}^* \bm Z_{i,t} \right\} \lesssim K.
			\end{align*}
			
			For any fixed \(i\),
			by Theorem 4.2 of \cite{chen2015optimal}, there exists some constant \(\newl\ltxlabel{bernc1} > 0\), such that for any \(\tau \ge 0\) and integer \(1< q< T\),  
			\begin{align*}
			& {\pr}^*\left(  \lambda_{\max}\left\{ \sum_{t=0}^{T-1} \left( \bm Z_{i,t} - \EE^* \bm Z_{i,t}  \right)\right\} \ge 6\tau \right) \\
			\leq & \frac{T}{q}\beta(q) + {\pr}^*\left(  \lambda_{\max}\left\{ \sum_{t\in \mathcal{I}_r} \left( \bm Z_{i,t}  - \EE^* \bm Z_{i,t}   \right)\right\} \ge 6\tau \right) \\
			+ & \oldl{bernc1} K \exp \left( -\frac{\tau^2/2}{T q K +  q (\oldu{weightsC2} + \gamma \oldu{weightsC2})^2 K \tau /3} \right),
			\end{align*}
			where \(\mathcal{I}_r = \{q\floor{(T+1)/q}{}, q\floor{(T+1)/q}{}+ 1, \dots, T-1\}\). Suppose \(\tau \ge  q (\oldu{weightsC2} + \gamma \oldu{weightsC2})^2 K\). Notice that \(|\mathcal{I}_r|\leq q\). It follows that 
			\begin{align*}
			{\pr}^*\left(  \lambda_{\max}\left\{ \sum_{t\in \mathcal{I}_r} \left( \bm Z_{i,t}  - \EE^*\bm Z_{i,t}   \right)\right\} \ge 6\tau \right) = 0.
			\end{align*}
			Since \(\beta(q) = \bigO(\kappa^q)\), set 
			{\(q = -6\log (nT)/\log \kappa\)},
			we obtain  
			{\(T\beta(q)/q = \bigO(n^{-6}T^{-5})\)}. 
			Set \(\tau = \newl \ltxlabel{bernc2} \max \{ \sqrt{T q K \log (nT)}, q (\oldu{weightsC2} + \gamma \oldu{weightsC2})^2 K \}\) with some appropriate constant \(\oldl{bernc2}>0\). From the condition that \(K = \smallO(nT)\),  the following event occurs with probability at least 
			{\(1- \bigO(n^{-6}T^{-5})\)},
			\begin{align}
			\lambda_{\max}\left\{ \sum_{t=0}^{T-1} \left(  \bm Z_{i,t}  - \EE^* \bm Z_{i,t}  \right)\right\}  \lesssim \max\{\sqrt{TK} \log (nT), K \log^2 (Tn)\}, \quad \mathrm{where} \quad \bm Z_{i,t} \sim \mathbb{G}^*. \nonumber
			\end{align}
			
			Note that
			{\begin{align}
			&\pr\left(  \lambda_{\max}\left\{ \frac{1}{T}\sum_{t=0}^{T-1} \bm Z_{i,t} - \EE^* \bm Z_{i,t} \right\} \ge x\right)= \EE \mathbbm{1}_{\lambda_{\max}\left\{ \frac{1}{T}\sum_{t=0}^{T-1} \bm Z_{i,t} - \EE^* \bm Z_{i,t} \right\} \ge x} \nonumber\\
			 = &\EE_{(S_{i,t},A_{i,t}) \sim G*} \frac{\bar p_T^b(S, A)}{p^*(S,A)}\mathbbm{1}_{\lambda_{\max}\left\{\frac{1}{T}\sum_{t=0}^{T-1} \bm Z_{i,t} - \EE^* \bm Z_{i,t} \right\} \ge x}\nonumber\\
			\leq & \sqrt{\EE^*  \left( \frac{\bar p_T^b(S, A)}{p^*(S,A)} \right)^2}\sqrt{{\pr}^*\left(  \lambda_{\max}\left\{ \sum_{t=0}^{T-1} \left( \bm Z_{i,t} - \EE^* \bm Z_{i,t}  \right)\right\} \ge x\right)}  \nonumber \\
			\leq &  \sqrt{C_7} \sqrt{{\pr}^*\left(  \lambda_{\max}\left\{ \sum_{t=0}^{T-1} \left( \bm Z_{i,t} - \EE^* \bm Z_{i,t}  \right)\right\} \ge x\right)}, \nonumber
			\end{align}}
			{where the first inequality is due to Cauchy Schwarz inequality and the second inequality is due to Assumption 5(c).}

			Then we have with probability at least {\(1- \bigO(n^{-3}T^{-5/2})\)},
			\begin{align}
			\lambda_{\max}\left\{ \sum_{t=0}^{T-1} \left(  \bm Z_{i,t}  - \EE^* \bm Z_{i,t}  \right)\right\}  \lesssim \max\{\sqrt{TK} \log (nT), K \log^2 (Tn)\}, \quad \mathrm{where} \quad \bm Z_{i,t} \sim \overline{p}_T^b. \nonumber
			\end{align}
			Then with probability at least {\(1- \bigO(n^{-2}T^{-5/2})\)}, 
			\begin{align}
			\label{eqn:uniform}
			\max_{i=1,\dots, n} \lambda_{\max}\left\{ \sum_{t=0}^{T-1} \left(\bm Z_{i,t}  - \EE^* \bm Z_{i,t}   \right)\right\} 
			\lesssim \max\{\sqrt{TK} \log (nT), K \log^2 (Tn)\}
			\end{align}
			Take 
			\begin{align*}
			\mathcal{A}_i= \left\{ \lambda_{\max}\left\{ \sum_{t=0}^{T-1} \left(\bm Z_{i,t} - \EE^* \bm Z_{i,t}   \right)\right\}  \lesssim \max\{\sqrt{TK} \log (nT), K \log^2 (Tn) \} \right\}. 
			\end{align*}
			Then based on the condition of $K$ (\(K = \smallO(nT/\log^2(nT))\)),  the matrix Bernstein inequality \cite{tropp2012user} yields that 
			\begin{align*}
			\lambda_{\max}\left\{\sum_{i=1}^n \sum_{t=0}^{T-1} \left(\bm Z_{i,t}  - \EE^* \bm Z_{i,t}   \right) \mathbbm{1}(\mathcal{A}_i) - n \EE \sum_{t=0}^{T-1} \left(\bm Z_{0,t}  - \EE^* \bm Z_{0,t}   \right) \mathbbm{1}(\mathcal{A}_0)\right\}  = \bigOp\left( \sqrt{nTK} \log (nT) \right).
			\end{align*}
			Based on the probability for \eqref{eqn:uniform}, we have 
			\begin{align*}
			\lambda_{\max}\left\{\sum_{i=1}^n \sum_{t=0}^{T-1} \left(\bm Z_{i,t}  - \EE^* \bm Z_{i,t}   \right)  - n \EE \sum_{t=0}^{T-1} \left(\bm Z_{0,t}  - \EE^* \bm Z_{0,t}   \right) \mathbbm{1}(\mathcal{A}_0)\right\}  = \bigOp\left( \sqrt{nTK} \log (nT) \right).
			\end{align*}
			Also, we can verify that for any \(\bm a \in \mathbb{R}^K\) such that \(\|\bm a\|_2 = 1\),
			\begin{align*}
			& \bm a^\tp \left\{\EE \sum_{t=0}^{T-1} \left(\bm Z_{0,t}  - \EE^* \bm Z_{0,t}   \right) \mathbbm{1}(\mathcal{A}^c_0)\right\}  \bm a \leq \sqrt{ \EE \left\{\bm a^\tp  \sum_{t=0}^{T-1} \left(\bm Z_{0,t}  - \EE^* \bm Z_{0,t}   \right) \bm a\right\}^2 } \sqrt{\pr(\mathcal{A}_0^c)}\nonumber\\
			&{ \leq \sqrt{T K} \log (nT) n^{-3/2}T^{-5/4} = \bigO(n^{-1}).}\nonumber
			\end{align*}
			
			Overall, we have 
			\begin{align*}
			\lambda_{\max} \left\{ \sum_{i=1}^n \sum_{t=0}^{T-1} \bm Z_{i,t} - n \EE \left(\sum_{t=1}^{T-1} \bm Z_{0,t} \right) \right\} = \bigOp(\sqrt{nTK} \log (nT))
			\end{align*}
			And therefore, 
			\begin{align*}
			&  \lambda_{\max}\left\{\frac{1}{nT}\sum_{i=1}^n\sum_{t=0}^{T-1} \bm L_K(S_{i,t}, A_{i,t}) \bm L_K(S_{i,t}, A_{i,t})^\tp   - \EE \frac{1}{T}\sum_{t=0}^{T-1} \bm L_K(S_{t}, A_{t}) \bm L_K(S_{t}, A_{t})^\tp\right\} \\
			&= \bigOp\left( \frac{\sqrt{K}}{\sqrt{nT}}\log (nT) \right) = \smallOp(1)
			\end{align*}
			
		\end{itemize}

	\end{proof}

	\begin{lemma}
		\label{lem:weights_concen}
		Suppose the Markov chain \(\{S_t, A_t\}_{t\ge 0 }\) satisfy Assumption \ref{assum:approx_g}(a) (\ref{assum:RKHS}(a)). Take \(\bm F(S, A) = [f_{k}(S,A)]_{k1,\dots,K} \in \mathbb{R}^K\). Suppose that \( \sup_{(s,a)} \|\bm F(s,a)\|_2 \leq R \) for some quantity \(R > 0\),  Then we have 
		\begin{align*}
		\left\| \frac{1}{nT} \sum_{i=1}^n \sum_{t=0}^{T-1} \bm F(S_{i,t}, A_{i,t}) - \EE \left\{ \frac{1}{T} \sum_{t=0}^{T-1} \bm F(S_{t}, A_{t})\right\}\right\|_2 
		=   \bigOp\left( R\sqrt{\frac{\log(K)\log(nT)}{nT}}\right) 
		\end{align*}
	\end{lemma}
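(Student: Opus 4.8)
The plan is to control the $\vltwo$ deviation of a $K$-dimensional vector that averages over $n$ independent trajectories, each of which is a dependent Markov chain with a possibly non-stationary start, by combining a vector (equivalently, matrix-dilation) Bernstein inequality with the truncation-and-blocking machinery already developed for Lemma~\ref{lem:maxeigen}. Writing $\bm D = (nT)^{-1}\sum_{i=1}^n\sum_{t=0}^{T-1}\bm F(S_{i,t},A_{i,t}) - \EE\{T^{-1}\sum_{t=0}^{T-1}\bm F(S_t,A_t)\}$ and $\bm Z_i = T^{-1}\sum_{t=0}^{T-1}\bm F(S_{i,t},A_{i,t})$, I would first use that the trajectories are i.i.d.\ (Assumption~\ref{ass: DGP}), so $\bm Z_1,\dots,\bm Z_n$ are i.i.d.\ vectors with common mean $\bm m_T := \EE\bm Z_1$, each obeying $\norm{\bm Z_i}_2\le R$. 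The target is $\bm D = n^{-1}\sum_{i=1}^n(\bm Z_i - \bm m_T)$, and the claimed rate decomposes into a variance contribution of order $R\sqrt{\log K/(nT)}$ and a large-deviation remainder that must be shown negligible. Mirroring Lemma~\ref{lem:maxeigen}, I would treat the regimes $T\asymp 1$ and $T\to\infty$ separately.

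For $T\asymp 1$ the bound is immediate: I would apply the vector Bernstein inequality (through the Hermitian dilation, so the dimension enters only via $\log K$) to the centred i.i.d.\ summands $\bm Z_i-\bm m_T$, using $\norm{\bm Z_i - \bm m_T}_2\le 2R$ and the crude variance proxy $\EE\norm{\bm Z_i}_2^2\le R^2$. This yields $\norm{\bm D}_2 = \bigOp(R\sqrt{\log K/n} + R\log K/n)$, which is of the stated order because $nT\asymp n$ and the $\log(nT)$ factor absorbs the remainder.

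The regime $T\to\infty$ is the main obstacle and needs the finer argument, because the crude bound $\norm{\bm Z_i-\bm m_T}_2\le 2R$ makes the large-deviation term $R\log K/n$ of the naive Bernstein step fail to be $\smallO(R\sqrt{\log K\log(nT)/(nT)})$ once $T\gg n$. Instead one must replace it by a high-probability per-trajectory bound of order $R\sqrt{\log K/T}\log(nT)$. To obtain it I would (i) pass from the non-stationary law of each chain to its stationary counterpart $\mathbb{G}^*$ using geometric ergodicity (Assumption~\ref{assum:RKHS}(a)) via a block decomposition of length $q\asymp \log(nT)/\log(1/\kappa)$, so that the $\beta$-mixing defect $(T/q)\beta(q)$ is polynomially small in $nT$; (ii) apply a single-chain, exponentially $\beta$-mixing Bernstein inequality (as in Lemma~\ref{lem:maxeigen}, cf.\ \cite{chen2015optimal}) to show $\norm{\sum_{t=0}^{T-1}(\bm F(S_{i,t},A_{i,t})-\EE^*\bm F)}_2 \leqconst R\sqrt{T\log K}\log(nT)$ with probability at least $1-\mathrm{poly}(nT)^{-1}$ under $\mathbb{G}^*$; and (iii) transfer this to the sampling distribution $\bar p_T^b$ by the change of measure $\pr(\cdot)\le \sqrt{\EE^*(\bar p_T^b/p^*)^2}\,\sqrt{{\pr}^*(\cdot)}$ via Cauchy--Schwarz, invoking the moment bound $\EE_{(S,A)\sim\mathbb{G}^*}(\bar p_T^b/p^*)^2\le \oldu{weightsC4}$ from Assumption~\ref{assum:weights}(c).

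On the good event $\mathcal{A}_i$ where this per-trajectory bound holds (with $\cap_{i=1}^n\mathcal{A}_i$ occurring with high probability by a union bound over the $n$ trajectories), each centred summand satisfies $\norm{\bm Z_i-\bm m_T}_2\leqconst R\sqrt{\log K/T}\log(nT)$, while the long-run variance scaling supplies the sharpened proxy $\lambda_{\max}(\Cov(\bm Z_i))\lesssim R^2/T$. Feeding these two quantities into the vector Bernstein inequality applied to $n^{-1}\sum_i(\bm Z_i-\bm m_T)\mathbbm{1}(\mathcal{A}_i)$ produces a variance term $R\sqrt{\log K/(nT)}$ and a remainder that is $\smallOp$ of it under the mild growth condition on $K$; the contribution of the complements $\mathcal{A}_i^c$ is handled exactly as in Lemma~\ref{lem:maxeigen}, by Cauchy--Schwarz with the polynomially small $\pr(\mathcal{A}_i^c)$. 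Combining both regimes and simplifying the logarithmic factors then gives $\norm{\bm D}_2 = \bigOp(R\sqrt{\log K\log(nT)/(nT)})$. I expect steps (ii)--(iii) to be the crux: choosing the block length so that the mixing defect, the Bernstein tail, and the Cauchy--Schwarz square-rooting of failure probabilities all remain summable against the union bound over trajectories.
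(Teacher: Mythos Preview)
Your proposal follows the template of Lemma~\ref{lem:maxeigen} rather than the paper's own proof of Lemma~\ref{lem:weights_concen}, and the two are genuinely different decompositions. The paper does \emph{not} use the per-trajectory good-event/Cauchy--Schwarz transfer you describe. Instead it performs a \emph{time-split} (burn-in) truncation: set $T'=\min\{T,\,c\log(nT)\}$ and split each trajectory into an initial segment $[0,T')$ and a near-stationary segment $[T',T)$. Term~(i), the initial segment, is handled by a plain matrix Bernstein inequality on the $n$ i.i.d.\ trajectory averages $\bm F^{\mathrm{I}}_i=(T')^{-1}\sum_{t<T'}\bm F(S_{i,t},A_{i,t})$ and contributes $(T'/T)\cdot R\sqrt{\log K/n}=\bigOp(R\log(nT)/\sqrt{nT})$. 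Term~(ii), the later segment, is controlled by first bounding $|\pr(\cdot)-\pr^*(\cdot)|$ \emph{directly} in total variation via geometric ergodicity---conditioning on the initial state gives $\sum_i\phi(S_{i,0},A_{i,0})\kappa^{T'}$, whose expectation is at most $n\kappa^{T'}\oldu{C_s}$ and is polynomially small by the choice of $T'$---and then applying the $\beta$-mixing matrix Bernstein bound of \cite{chen2015optimal} under $\pr^*$. Term~(iii) is the gap $\|\EE\bm F^{\mathrm{II}}-\EE^*\bm F^{\mathrm{II}}\|_2$, again killed by $\kappa^{T'}$.

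What each buys: the paper's burn-in route needs \emph{only} Assumption~\ref{assum:approx_g}(a), exactly what the lemma states. Your route explicitly invokes the moment bound $\EE_{(S,A)\sim\mathbb G^*}(\bar p_T^b/p^*)^2\le\oldu{weightsC4}$ from Assumption~\ref{assum:weights}(c) to run the Cauchy--Schwarz change of measure; this is an extra hypothesis not listed in Lemma~\ref{lem:weights_concen}. In the paper's applications (Theorem~\ref{thm:weights}) Assumption~\ref{assum:weights} is always in force, so your argument would still deliver what is needed downstream, but as a proof of the lemma \emph{as stated} it imports an assumption the paper's proof avoids. If you want to match the lemma's hypotheses exactly, replace your step~(iii) by the burn-in device: truncate the first $T'\asymp\log(nT)$ time points, and bound $\pr(\cdot)-\pr^*(\cdot)$ by the TV distance $n\kappa^{T'}\oldu{C_s}$ rather than by Cauchy--Schwarz.
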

	\begin{proof}
		Take \(T' = \min \{T, 2\log(1/\kappa)\log(\Total)\}\),  
		\(\bm F^{\mathrm{\Rnum{1}}}(S_i,A_i) =  \sum_{t=0}^{T'-1}\bm F(S_{i,t},A_{i,t})/T'\), \(\bm F^{\mathrm{\Rnum{2}}}(S_i,A_i) =  \sum_{t = T'}^{T-1}\bm F (S_{i,t},A_{i,t})/(T-T')\). 
		Take \(\EE^*\) and \({\pr}^*\) as the expectation and probability under the stationary distribution of \(\{S_t, A_t\}\).
		\begin{align*}
		&\left\|\frac{1}{nT} \sum_{i=1}^n \sum_{t=0}^{T-1} \bm F (S_{i,t}, A_{i,t}) - \EE \left\{ \frac{1}{T} \sum_{t=0}^{T-1} \bm F(S_{t}, A_{t})\right\}\right\|_2\\
		= & \frac{T'}{T}\left\| \frac{1}{n}\sum_{i=1}^n \bm F^{\mathrm{\Rnum{1}}}(S_i,A_i) - \EE \bm F^{\mathrm{\Rnum{1}}}(S,A) \right\|_2 + \frac{T - T'}{T}\left\| \frac{1}{n}\sum_{i=1}^n \bm F^{\mathrm{\Rnum{2}}}(S_i,A_i) - \EE^* \bm F^{\mathrm{\Rnum{2}}}(S_i,A_i)\right\|_2 \\
		& + \frac{T - T'}{T}\left\|  \EE^* \bm F^{\mathrm{\Rnum{2}}}(S,A) -  \EE \bm F^{\mathrm{\Rnum{2}}}(S,A)\right\|_2\\
		& = \mathrm{(i)} + \mathrm{(ii)} + \mathrm{(iii)}
		\end{align*}

		\begin{itemize}
			\item \textbf{For (i)},
		\end{itemize}
		We apply the Matrix Bernstein inequality \citep[][]{tropp2015introduction} to bound it.  Take \(\bm Z(S_i, A_i) =\bm F^{\mathrm{\Rnum{1}}}(S_i,A_i) - \EE \bm F^{\mathrm{\Rnum{1}}}(S_i,A_i) \).
		We can verify that 
		\begin{align*}
		&\left\|  \sum_{i=1}^n \EE \bm Z(S_i,A_i) \{\bm Z(S_i,A_i)\}^\tp\right\|_2 \leq \EE \sum_{i=1}^n  \left\|_2  \bm Z(S_i,A_i) \right\|^2_2  \\
		\leq &2 n \sup_{s,a} \| \bm F^{\mathrm{\Rnum{1}}}(s,a)\|^2_2  \leq \frac{2n}{T'}\sum_{t=0}^{T'-1}  \sup_{s,a} \| \bm F(s,a)\|^2_2   \leq 2 n R^2.
		\end{align*}
		The same bound can be derived for \(\left\|  \sum_{i=1}^n \EE  \{\bm Z(S_i,A_i)\}^\tp \bm Z(S_i,A_i)\right\|_2\).
		Then for all \(x > 0\), 
		\begin{align*}
		\pr\left( \left\| \frac{1}{n}\sum_{i=1}^n \bm F^{\mathrm{\Rnum{1}}}(S_i,A_i) - \EE \bm F^{\mathrm{\Rnum{1}}}(S,A) \right\|_2 \ge x\right) \leq K \exp \left( \frac{-n x^2/2}{2 R^2 + 2x R/3} \right).
		\end{align*}
		Given the condition that \(K = \smallO(\Total)\) and \(T' = \min\{T, 2\log(1/\kappa)\log (\Total)\}\), we have 
		\begin{align}
		(i)=\bigOp\left( \frac{T'}{T} \frac{R \sqrt{\log K }}{\sqrt{n}}\right) = \bigOp\left( \frac{R \log (\Total)}{\sqrt{\Total}} \right). \nonumber
		\end{align}
		
		If \(T  = \bigO(\log (\Total)) \), then  we do not need to analyze the remaining two components. In the following, we assume that  \(T^{-1} = \smallO(1/\log(\Total))\).
		
		\begin{itemize}
			\item \textbf{For (ii)},
		\end{itemize}
		
		\begin{align*}
		&\pr\left( \left\| \frac{1}{n}\sum_{i=1}^n \bm F^{\mathrm{\Rnum{2}}}(S_i,A_i)  - \EE^* \bm F^{\mathrm{\Rnum{2}}}(S,A) \right\|_2> x\mid (S_{i,0}, A_{i,0}), i = 1,\dots, n\right)  \\
		&\qquad -  {\pr}^*\left( \left\| \frac{1}{n}\sum_{i=1}^n \bm F^{\mathrm{\Rnum{2}}}(S_i,A_i) - \EE^* \bm F^{\mathrm{\Rnum{2}}}(S,A) \right\|_2 > x \mid (S_{i,0}, A_{i,0}), i = 1,\dots, n\right) \\
		\leq & \sum_{i=1}^n \|\mathbb{G}_{T'}^b(\cdot\mid (S_{i,0}, A_{i,0})) - \mathbb{G}^*\|_{\mathrm{TV}} \leq  \sum_{i=1}^n \phi(S_{i,0}, A_{i,0})\kappa^{T'}
		\end{align*}
		Then
		\begin{align*}
		&\pr\left( \left\|\frac{1}{n}\sum_{i=1}^n \bm F^{\mathrm{\Rnum{2}}}(S_i,A_i) - \EE^* \bm F^{\mathrm{\Rnum{2}}}(S_i,A_i)\right\|_2>x \right)  \\
		&\leq   {\pr}^*\left( \left\| \frac{1}{n}\sum_{i=1}^n \bm F^{\mathrm{\Rnum{2}}}(S,A) - \EE^* \bm F^{\mathrm{\Rnum{2}}}(S_i,A_i)\right\|_2>x\right) + n\kappa^{T'}\int_{(s,a)}\phi(s,a)d\mathbb{G}(s,a) \\
		& \leq  {\pr}^*\left( \left\| \frac{1}{n(T-T')}\sum_{i=1}^n \sum_{t=T'}^{T-1} \bm F(S_{i,t},A_{i,t}) - \EE^* \bm F(S, A)\right\|_2>x \right) + \oldu{C_s}n\kappa^{T'}, 
		\end{align*}

		Next, we apply Theorem 4.2 in \cite{chen2015optimal} to bound \[{\pr}^*\left( \left\| \frac{1}{n(T-T')}\sum_{i=1}^n \sum_{t=T'}^{T-1} \bm F(S_{i,t},A_{i,t}) - \EE^* \bm F(S, A)\right\|_2>x \right).\]
		To begin with, take \(\bm Z^*(S_{i,t}, A_{i,t}) = \bm F(S_{i,t}, A_{i,t}) - \EE^* \bm F(S,A)\). We can verify that 
		\begin{align*}
		\left\|  \sum_{i=1}^n \EE \bm Z^{s}(S_i,A_i) \{\bm Z^{s}(S_i,A_i)\}^\tp\right\|_2 \leq \EE  \left\|  \sum_{i=1}^n \bm Z^{s}(S_i,A_i) \right\|^2_2  
		\leq 2 n \sup_{s,a} \| \bm F(s,a)\|^2_2   \leq 2 n R^2.
		\end{align*}
		The same bound can be derived for \( \left\|  \sum_{i=1}^n \EE \{\bm Z^{s}(S_i,A_i)\}^\tp \bm Z^{s}(S_i,A_i) \right\|_2\).
		Then we adopt the similar arguments in the proof of Case 2 in Lemma \ref{lem:maxeigen}. 
		Take \(q = -3 \log(nT)/\log \kappa\) in Theorem 4.2 of \cite{chen2015optimal}. Then  by Corollary 4.2 in \cite{chen2015optimal}, we can verify that \(\beta(q) \Total/q = \smallO(1)\), and \(R\sqrt{q\log K } = \smallO(R \sqrt{\Total})\), and we have 
		\begin{align*}
		\left\| \frac{1}{n(T-T')}\sum_{i=1}^n \sum_{t=T'}^{T-1} \bm F(S_{i,t},A_{i,t}) - \EE^* \bm F(S, A)\right\|_2 = \bigOp\left( \frac{R \sqrt{\log (\Total) \log K } }{\sqrt{n(T-T')}} \right)
		\end{align*}

		\begin{itemize}
			\item \textbf{For (iii)},
		\end{itemize}
		\begin{align*}
		\left\| \EE^* \bm F^{\mathrm{\Rnum{2}}}_{k}(S,A) -  \EE \bm F^{\mathrm{\Rnum{2}}}_{k}(S,A) \right\|^2_2 &= \sum_{k} \left[ \frac{1}{T-T'} \sum_{t=T'}^{T-1}\left\{ \EE^* f_{k}(S, A) - \EE f_{k}(S_t, A_t)  \right\} \right]^2 \\
		& \leq \sum_{k} \frac{1}{T-T'} \sum_{t=T'}^{T-1} \left\{ \EE^* f_{k}(S, A) - \EE f_{k}(S_t, A_t)  \right\} ^2\\
		& \leq  \frac{1}{T-T'} \sum_{t=T'}^{T-1} \kappa^{2t}\int_{(s,a)} R^2  d \mathbb{G}_0(s,a)\\
		& \leq \frac{1}{T-T'} \sum_{t=T'}^{T-1}\kappa^{2t} R^2 \leq \frac{\newl\ltxlabel{concen_c2}}{T-T'} \frac{ R^2 }{n^4T^4},
		\end{align*}
		where \(\oldl{concen_c2}>0\) is a constant depending on  \(\oldu{C_s}\) and \(\kappa\).

		Combining all the bounds from (i), (ii) and (iii), given the condition that of \(a, b\) and \(R\), we can derive that 
		
		\begin{align*}
		&\left\| \frac{1}{nT} \sum_{i=1}^n \sum_{t=0}^{T-1} \bm F(S_{i,t}, A_{i,t}) - \EE \left\{ \frac{1}{T} \sum_{t=0}^{T-1} \bm F(S_{t}, A_{t})\right\}\right\|_2 \\
		=  &  \bigOp\left( \frac{R \sqrt{\log K \log (\Total)}}{\sqrt{\Total}}\right) 
		\end{align*}
	\end{proof}

	\section{Additional Proof and Lemmas}
	\label{sec:additional_proof}
	\begin{lemma}
		\label{lem:eigenvalues}
		If we take \(\bm B_K(\cdot,a)\) as either tensor-product B-spline basis or tensor-product wavelet basis for every \(a \in \mathcal{A}\). And we assume that the average visitation probability (density) \(\bar{p}_{T}^b(s,a)\) is upper bounded by a constant \(p_{\max}\) and
		lower bounded by a constant \(p_{\min}\). Then there exists constants \(c_*\) and \(C_*\) such that 
		\begin{align}
		c_* \leq \lambda_{\min}\left\{  \EE\left[ \frac{1}{T}\sum_{t=0}^{T-1} \bm B_K(S_t,A_t) \bm B_K(S_t,A_t)^\tp \right] \right\}\leq \lambda_{\max} \left\{ \EE\left[ \frac{1}{T}\sum_{t=0}^{T-1} \bm B_K(S_t,A_t) \bm B_K(S_t,A_t)^\tp \right] \right\} \leq C_*.
		\end{align}
	{	In addition, when \(T \rightarrow \infty\), we have 
		\begin{align}
			\frac{c_*}{2}\leq \lambda_{\min}\left\{  \EE_{(S,A) \sim G^*}\left[  \bm B_K(S,A) \bm B_K(S,A)^\tp \right] \right\}\leq \lambda_{\max} \left\{ \EE_{(S,A) \sim G^*}\left[  \bm B_K(S,A) \bm B_K(S,A)^\tp \right] \right\} \leq 2C_*.\nonumber
			\end{align} }
	\end{lemma}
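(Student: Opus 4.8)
The plan is to reduce the two-sided eigenvalue bound to a $K$-uniform spectral-equivalence statement for the $L_2$-Gram matrix of the basis family, which is a classical property of (suitably normalized) tensor-product splines and wavelets. Writing $\bar{\EE}$ for the expectation under the average visitation density $\bar{p}_T^b$, I will repeatedly use the identity
\begin{equation*}
\EE\left[\frac{1}{T}\sum_{t=0}^{T-1} f(S_t,A_t)\right] = \frac{1}{T}\sum_{t=0}^{T-1}\int f\,p_t^b\,d(s,a) = \int f\,\bar{p}_T^b\,d(s,a) = \bar{\EE}[f(S,A)],
\end{equation*}
valid for any integrable $f$, where the reference measure is Lebesgue on the continuous components of $\mathcal{S}$ and counting measure on $\mathcal{A}$.

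First I would fix $\bm\alpha\in\mathbb{R}^K$ with $\|\bm\alpha\|_2=1$ and rewrite the quadratic form as $\bm\alpha^\tp\,\EE[T^{-1}\sum_t \bm B_K\bm B_K^\tp]\,\bm\alpha = \bar{\EE}[(\bm\alpha^\tp\bm B_K(S,A))^2]$. The hypothesis $p_{\min}\le\bar{p}_T^b\le p_{\max}$ then traps this quantity between $p_{\min}\int(\bm\alpha^\tp\bm B_K)^2\,d(s,a)$ and $p_{\max}\int(\bm\alpha^\tp\bm B_K)^2\,d(s,a)$, that is, between $p_{\min}\,\bm\alpha^\tp G_0\bm\alpha$ and $p_{\max}\,\bm\alpha^\tp G_0\bm\alpha$, where $G_0:=\int\bm B_K\bm B_K^\tp\,d(s,a)$ denotes the $L_2$-Gram matrix. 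Taking the infimum and supremum over $\bm\alpha$ reduces everything to controlling $\lambda_{\min}(G_0)$ and $\lambda_{\max}(G_0)$.

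The crux is to show $c\le\lambda_{\min}(G_0)\le\lambda_{\max}(G_0)\le C$ for constants independent of $K$, under the normalization fixed in Assumption \ref{assum:weights}(c) (so that $\sup_{s,a}\|\bm B_K(s,a)\|_2\asymp\sqrt{K}$). For wavelets this is immediate from (near-)orthonormality; for B-splines it is de Boor's stability of the normalized B-spline basis in one dimension \citep{de1976splines,newey1997convergence,belloni2015some}. I would lift the one-dimensional bound to the tensor-product construction by observing that the Gram matrix of a tensor product is the Kronecker product of the one-dimensional Gram matrices, whose spectrum consists of products of the one-dimensional eigenvalues; the sum over the finitely many action levels only introduces a finite block structure and preserves the two-sided bound. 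Combining with the previous paragraph gives the first claim with $c_*=p_{\min}c$ and $C_*=p_{\max}C$.

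For the $T\to\infty$ statement I would pass to the stationary Gram matrix $\EE_{(S,A)\sim\mathbb{G}^*}[\bm B_K\bm B_K^\tp]$. Since $\bar{p}_T^b=T^{-1}\sum_{v<T}p_v^b$ and geometric ergodicity (Assumption \ref{assum:RKHS}(a)) forces $p_v^b\to p^*$, the Cesàro average converges, $\bar{p}_T^b\to p^*$ in $L_1$; because $p_{\min}\le\bar{p}_T^b\le p_{\max}$ holds for every $T$, a subsequential a.e.-limit gives $p_{\min}\le p^*\le p_{\max}$ almost everywhere. Repeating the sandwich with $p^*$ in place of $\bar{p}_T^b$ bounds the stationary Gram matrix between $c_*$ and $C_*$, so the stated constants $c_*/2$ and $2C_*$ hold with room to spare; the same slack comfortably absorbs the alternative route of approximating the stationary matrix by the finite-$T$ average for large $T$. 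The main obstacle throughout is precisely the $K$-uniform spectral equivalence of $G_0$; once that classical fact is invoked with the correct normalization, the remainder is bookkeeping with the density bounds together with an ergodic-averaging limit.
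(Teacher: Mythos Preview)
Your proposal is correct and follows essentially the same route as the paper: sandwich the quadratic form by the density bounds to reduce to the $L_2$-Gram matrix, invoke the classical $K$-uniform spectral equivalence for tensor-product splines/wavelets (the paper simply cites Lemma~2 of \cite{shi2020statistical} for this, while you spell out de~Boor stability and the Kronecker structure), and then pass to the stationary density via ergodic averaging. Your handling of the $T\to\infty$ part is in fact slightly tighter than the paper's: you observe that the a.e.\ subsequential limit of $\bar p_T^b$ inherits the bounds $p_{\min}\le p^*\le p_{\max}$, whereas the paper argues $\|\bar p_T^b-p^*\|_{\mathrm{TV}}=o(1)$ and settles for $p_{\min}/2$ and $2p_{\max}$; either way the stated constants are achieved.
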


	\begin{proof}[Proof of Lemma \ref{lem:eigenvalues}]
		Under the conditions in Lemma \ref{lem:eigenvalues}, by Lemma 2 in \cite{shi2020statistical}, for any \(a \in \mathcal{A}\), there exists positive constants \(\tilde{C}\) and \(\tilde{c}\) such that 
		\begin{align}
		\tilde{c} \leq \lambda_{\min}\left\{ \int_{s \in \mathcal{S}} \bm B_K(s,a) \bm B_K(s,a)^\tp ds \right\} \leq \lambda_{\max}\left\{ \int_{s \in \mathcal{S}} \bm B_K(s,a) \bm B_K(s,a)^\tp ds \right\} \leq \tilde{C}\label{eqn:L2_eigenvalues}.
		\end{align}
		Based on these results, we have
		\begin{align}
		&\lambda_{\max} \left\{ \EE\left[ \frac{1}{T}\sum_{t=0}^{T-1} \bm B_K(S_t,A_t) \bm B_K(S_t,A_t)^\tp \right]\right\}\nonumber\\
		= & \lambda_{\max} \left\{  \overline{\EE} \left\{ \bm B_K(S,A) \bm B_K(S,A)^\tp\right\}\right\}\nonumber\\
		\leq & p_{\max}  \max_{a \in \mathcal{A}} \left[  \lambda_{\max}\left\{ \int_{s \in \mathcal{S}} \bm B_K(s,a) \bm B_K(s,a)^\tp ds \right\} \right]\nonumber\\
		\leq & p_{\max}\tilde{C}.\nonumber
		\end{align}
		Using the similar arguments, we are able to show that 
		\begin{align}
		&\lambda_{\min} \left\{ \EE\left[ \frac{1}{T}\sum_{t=0}^{T-1} \bm B_K(S_t,A_t) \bm B_K(S_t,A_t)^\tp \right]\right\}\nonumber\\
		\ge  & p_{\min} \tilde{c}.\nonumber
		\end{align}
		Take \(c_* = p_{\min}\tilde{c}\) and \(C_* = p_{\max}\tilde{C}\), the conclusion follows.

		{When \(T \rightarrow \infty\),
			 Due to Assumption \ref{assum:RKHS}(a), 
			\begin{align}
			\left\|\overline{p}_{T}^b - \pr^*\right\|_{\mathrm{TV}} = & \int_{(s,a)}  \left\|\frac{1}{T}\sum_{t=0}^{T-1}\mathbb{G}_t^b(\cdot\mid (s,a)) - \mathbb{G}^*(\cdot)\right\|_{\mathrm{TV}} dG_0(s,a)\nonumber\\
			\leq & \int_{(s,a)}  \frac{1}{T}\sum_{t=0}^{T-1}  \phi(s,a) \kappa^t dG_0(s,a)
			\leq  \oldu{C_s} \frac{1}{T} \frac{1-\kappa^T}{1-\kappa} = \smallO(1) \nonumber.
			\end{align}
			As \(\overline{p}_T^b\) is lower bounded by \(p_{\min}\) and upper bounded by \(p_{\max}\). The density of  the stationary distribution is lower bounded by \(p_{\min}/2\) and upper bounded by \(2p_{\max}\) for sufficiently large \(T\).
			Then follow the same argument, we are able to show the bounds for $\lambda_{\max} \{ {\EE}^* \left\{ \bm B_K(S,A) \bm B_K(S,A)^\tp \right\}\}$ and $\lambda_{\min} \{ {\EE}^* \left\{ \bm B_K(S,A) \bm B_K(S,A) ^\tp \right\}\}$.}
	\end{proof}

	\begin{proof}[Proof of Theorem \ref{thm:mineign}]
	\revise{We start with showing the ``if'' direction of the statement. Suppose \(\chi \leq \newl\ltxlabel{cnt:chi1}\) for some \(\oldl{cnt:chi1} >0 \).}
	
	\revise{First, we show that following equality holds for any function \(f \in \mathbb{R}^{\mathcal{S}\times \mathcal{A}}\):
	\begin{align}
		f(s,a) = \frac{1}{1-\gamma}  \EE_{(S',A') \sim d^\pi(\cdot,\cdot \mid s, a)} [f(S',A') - \gamma (\mathcal{P}^\pi f) (S',A') \mid S=s, A=a], \quad  \forall s \in \mathcal{S}, a \in \mathcal{A}.
		\label{eqn:bellequality}
	\end{align}
	To see this, note the definition of \(d^\pi(\cdot,\cdot \mid S, A)\), and we have 
	\begin{align}
		& (1-\gamma)^{-1}\EE_{(S',A') \sim d^\pi(\cdot,\cdot \mid s, a)} [f(S',A') - \gamma (\mathcal{P}^\pi f) (S',A') \mid S=s, A=a] \nonumber \\
		= &\int_{s'\in \mathcal{S}} \sum_{a'\in \mathcal{A}} \pi(a'\mid s')\left( \sum_{t=0}^{\infty} \left\{ f(s', a') - \gamma \EE_{\tilde{S}\sim P(\cdot\mid s', a'), \tilde{a} \sim \pi(\cdot\mid \tilde{S})}\left[ f(\tilde{S},\tilde{a}) \mid s', a' \right]\right\} \gamma^t p_t^\pi(s', a' \mid S_0 = s, A_0 = a)  \right)ds'\nonumber\\
		= &\int_{s'\in \mathcal{S}} \sum_{a'\in \mathcal{A}} \pi(a'\mid s')\left( \sum_{t=0}^{\infty}  f(s', a') \gamma^t  p_t^\pi(s', a' \mid S_0 = s, A_0 = a)  \right) ds'\nonumber\\
		& \qquad \qquad \qquad  -  \int_{\tilde{s}\in \mathcal{S}} \sum_{\tilde{a}\in \mathcal{A}}  \pi(\tilde{a}\mid \tilde{s})\left( \sum_{t=1}^{\infty}  f(\tilde{s}, \tilde{a}) \gamma^t  p_t^\pi(\tilde{s}, \tilde{a} \mid S_0 = s, A_0 = a)  \right) d\tilde{s} \nonumber\\
		= &  \int_{s'\in \mathcal{S}} \sum_{a'\in \mathcal{A}} \pi(a'\mid s')\left(  f(s', a')  p_0^\pi(s', a' \mid S_0 = s, A_0 = a)  \right) ds' = f(s, a).\nonumber
	\end{align}
	Based on \eqref{eqn:bellequality}, for any \(f \in \mathbb{R}^{\mathcal{S}\times \mathcal{A}}\), we have 
	\begin{align}
		\bar{\EE} f^2(S,A) &= \frac{1}{(1-\gamma)^2} \bar{\EE}\left\{  \EE_{(S',A') \sim d^\pi(\cdot,\cdot \mid s, a)} [f(S',A') - \gamma (\mathcal{P}^\pi f) (S',A') \mid S, A] \right\}^2 \nonumber \\
		& \leq  \frac{1}{(1-\gamma)^2} \chi \bar{\EE}\left\{ (I-\gamma\mathcal{P}^\pi)f(S,A)  \right\}^2, \nonumber
	\end{align}
	where the  inequality comes from the definition of \(\chi\).
	Then we prove 
	\begin{align*}
	\Upsilon = \inf_{\bar{\EE} f^2(S,A) \ge 1}	\left[{\bar{\EE}\left\{ (I-\gamma\mathcal{P}^\pi)f(S,A)  \right\}^2}\right]
	\ge \frac{(1-\gamma)^2}{{\chi}} > \frac{(1-\gamma)^2}{{\oldl{cnt:chi1}}}> 0.
	\end{align*}
	}

	\revise{Next, we derive the ``only if'' direction of the statement. Suppose  \(\Upsilon \ge \newl\ltxlabel{cnt:chi2}\) for some  positive constant \(\oldl{cnt:chi2}\).}

\revise{Then, the null space of \(I- \gamma\mathcal{P}^\pi\) is empty. For any \(f \in \mathbb{R}^{\mathcal{S}\times \mathcal{A}}\), there exists a \(\tilde{f} \in \mathbb{R}^{\mathcal{S}\times \mathcal{A}}\), such that 
\begin{align}
	f= (I- \gamma\mathcal{P}^\pi)\tilde{f}.
	\label{eqn:existence}
\end{align}
Now we can derive 
\begin{align*}
	& \bar{\EE} \left[ \EE_{(S',A') \sim d^\pi(\cdot,\cdot \mid S, A)}\left\{ f(S',A') \mid S, A \right\} \right]^2 \\
	= &  \bar{\EE} \left[ \EE_{(S',A') \sim d^\pi(\cdot,\cdot \mid S, A)}\left\{  (I - \gamma\mathcal{P}^\pi) \tilde{f} (S', A')\mid S, A \right\} \right]^2\\
	= & (1-\gamma)^2 \bar{\EE} \left[ \tilde{f}(S, A) \right]^2\\
	\leq & (1-\gamma)^2  \left[\inf_{\bar{\EE}g^2(S,A)=1}	{\bar{\EE}\left\{ (I-\gamma\mathcal{P}^\pi)g(S,A)  \right\}^2}  \right]^{-1} \bar{\EE}\left\{ (I-\gamma\mathcal{P}^\pi)\tilde{f}(S,A)  \right\}^2\\
	\leq & (1-\gamma)^2  \oldl{cnt:chi2}^{-1} \bar{\EE} \left[ f(S,A) \right]^2,
\end{align*}
where the first equality is from \eqref{eqn:existence}, the second equality is due to \eqref{eqn:bellequality} and the last inequality is from the assumption for the minimal eigenvalue and \eqref{eqn:existence}. 
Then 
\begin{align*}
	\chi \leq (1-\gamma)^2\oldl{cnt:chi2}^{-1}.
\end{align*}
}

	\end{proof}

	\begin{proof}[Proof of Corollary \ref{cor:mineigen}]
\revise{By Jensen's inequality and the definition of \(\bar{d}^\pi\), we have 
\begin{align*}
	& \bar{\EE} \left[ \EE_{(S',A') \sim d^\pi(\cdot,\cdot \mid S, A)}\left\{ f(S',A') \mid S, A \right\} \right]^2 \\
	\leq & \bar{\EE} \left[ \EE_{(S',A') \sim d^\pi(\cdot,\cdot \mid S, A)}\left\{ f^2(S',A') \mid S, A \right\} \right]\\
	= & {\EE}_{(S,A) \sim \bar{d}^\pi} f^2(S, A).
\end{align*}
Then under the condition \eqref{eqn:denbound},  we obtain 
\begin{align*}
	\chi \leq \sup_{f\in \mathbb{R}^{\mathcal{S}\times \mathcal{A}}}\frac{{\EE}_{(S,A) \sim \bar{d}^\pi} f^2(S, A)}{\bar{\EE} f^2(S,A)} \leq \frac{\bar{\EE} \frac{\bar{d}^\pi(S,A) }{\bar{p}_T^b (S,A)} f^2(S,A)}{\bar{\EE} f^2(S,A)} \leq \oldu{cnt:denbound}\frac{\bar{\EE} f^2(S,A)}{\bar{\EE} f^2(S,A)} = \oldu{cnt:denbound}.
\end{align*}
Under  Assumption \ref{assum:mineign} (a) and (b),   we can show that 
\(\bar{d}^\pi (s,a) \leq p_{\max}\) for every \((s,a) \in \mathcal{S}\times \mathcal{A}\). 
Then we have
\begin{align*}
\EE_{(S,A)\sim \bar{d}^\pi}  \left[ f(S,A) \right]^2 = \bar{\EE}\left[  \frac{\bar{d}^\pi(S,A)}{\bar{p}_T^b(S,A)} f^2(S,A) \right] \leq \frac{p_{\max}}{p_{\min}} \bar{\EE} f^2(S,A).
\end{align*}
And 
\begin{align*}
	\chi \leq \frac{p_{\max}}{p_{\min}}.
\end{align*}
The bounds for \(\Upsilon\) can be obtained by applying Theorem \ref{thm:mineign}.}

\revise{
 For \(\psi(K)\), 
by taking \(f(S,A) = \bm B_K(S,A)^\tp \bm \alpha\) for any \(\|\bm \alpha\|_2 = 1\), we have
\begin{align*}
\bar{\EE} \left[ \bm L^\tp_K(S,A) \bm \alpha \right]^2 \ge  \frac{(1-\gamma)^2}{\oldu{cnt:denbound}
 } \bar{\EE} \left[ \bm B^\tp_K(S,A) \bm \alpha \right]^2 \ge \frac{(1-\gamma)^2}{\oldu{cnt:denbound}
 }  \lambda_{\min} \bar{\EE} \left\{ \bm B^\tp_K(S,A) \bm B_K(S,A) \right\} \|\bm \alpha\|^2_2.
\end{align*}
The conclusion follows.
}
		
	\end{proof}

\bibliographystyle{chicago} %
\bibliography{refer.bib}       %


\end{document}